\newcommand*{\addFileDependency}[1]{%
  \typeout{(#1)}
  \@addtofilelist{#1}
  \IfFileExists{#1}{}{\typeout{No file #1.}}
}
\Crefname{section}{Sec.}{Secs.}
\Crefname{section}{Section}{Sections}
\Crefname{table}{Table}{Tables}
\Crefname{table}{Tab.}{Tabs.}
\setlist[enumerate]{leftmargin=.5in}
\setlist[itemize]{leftmargin=.5in}
\crefname{hypothesis}{Hypothesis}{Hypotheses}
\title{Improving the Accuracy-Robustness\\Trade-Off of Classifiers via Adaptive Smoothing\thanks{This work is an extension of \citep{Bai23a}.\funding{This work was supported by grants from ONR, NSF, and C3 AI Digital Transformation Institute.}}}
\author{Yatong Bai\thanks{Department of Mechanical Engineering and Department of Electrical Engineering and Computer Sciences, University of California, Berkeley, (\email{yatong\_bai@berkeley.edu}, \email{bganderson@berkeley.edu}, \email{sojoudi@berkeley.edu}).}
\and Brendon G.\ Anderson$^\dagger \hspace{-1.5mm}$
\and Aerin Kim\thanks{Scale AI, (\email{aerinykim@gmail.com}).}
\and Somayeh Sojoudi$^\dagger$
}
\def\eqref#1{equation~\ref{#1}}
\def\1{\bm{1}}
\DeclareMathAlphabet{\mathsfit}{\encodingdefault}{\sfdefault}{m}{sl}
\SetMathAlphabet{\mathsfit}{bold}{\encodingdefault}{\sfdefault}{bx}{n}
\def\gD{{\mathcal{D}}}
\def\gF{{\mathcal{F}}}
\def\gN{{\mathcal{N}}}
\def\gS{{\mathcal{S}}}
\def\sI{{\mathbb{I}}}
\def\sP{{\mathbb{P}}}
\def\sR{{\mathbb{R}}}
\newcommand{\E}{\mathbb{E}}
\DeclareMathOperator{\sgn}{sgn}
\DeclareMathOperator*{\argmax}{arg\,max}
\DeclareMathOperator{\sign}{sign}
\DeclarePairedDelimiter\norm{\lVert}{\rVert}
\DeclarePairedDelimiter\bignorm{\big\lVert}{\big\rVert}
\DeclarePairedDelimiterX{\inp}[2]{\big\langle}{\big\rangle}{#1, #2}
\newcommand{\hismoa}{h_{\text{smo1}, i}^{\gamma}}
\newcommand{\hismob}{h_{\text{smo2}, i}^{\gamma}}
\newcommand{\hismoc}{h_{\text{smo3}, i}^{\gamma}}
\newcommand{\hyalpha}{h_y^{\alpha}}
\newcommand{\hialpha}{h_i^{\alpha}}
\newcommand{\hbase}{h_\mathrm{baseline}}
\newcommand{\halpha}{h^{\alpha}}
\newcommand{\halphat}{h^{\alpha_t}}
\newcommand{\halphatheta}{h^{\alpha_{\theta}}}
\renewcommand{\th}{^\text{th}}
\DeclareMathOperator{\lip}{Lip}
\newcommand{\rlippalpha}{r_{\mathrm{Lip},p}^\alpha}
\newcommand{\rsigmaalpha}{r_\sigma^\alpha}
\newcommand{\onemualpha}{1+\mu_\alpha}
\newcommand{\lBCE}{\ell_{\mathrm{BCE}}}
\newcommand{\lCE}{\ell_{\mathrm{CE}}}
\newcommand{\lsurr}{\ell_{\mathrm{surrogate}}}
\newcommand{\lcomp}{\ell_{\mathrm{composite}}}
\newcommand{\cBCE}{c_{\mathrm{BCE}}}
\newcommand{\cCE}{c_{\mathrm{CE}}}
\newcommand{\cprod}{c_{\mathrm{prod}}}
\newcommand{\alphamin}{\alpha_{\mathrm{min}}}
\newcommand{\alphamax}{\alpha_{\mathrm{max}}}
\newcommand{\cmark}{\ding{51}}
\newcommand{\xmark}{\ding{55}}
\newcommand{\about}{\textrm{\raisebox{0.5ex}\texttildelow}}
\begin{document}

\maketitle

\begin{abstract}
While prior research has proposed a plethora of methods that build neural classifiers robust against adversarial robustness, practitioners are still reluctant to adopt them due to their unacceptably severe clean accuracy penalties.
Real-world services based on neural networks are thus still unsafe.
This paper significantly alleviates the accuracy-robustness trade-off by mixing the output probabilities of a standard classifier and a robust classifier, where the standard network is optimized for clean accuracy and is not robust in general.
We show that the robust base classifier's confidence difference for correct and incorrect examples is the key to this improvement.
In addition to providing empirical evidence, we theoretically certify the robustness of the mixed classifier under realistic assumptions. 
We then adapt an adversarial input detector into a mixing network that adaptively adjusts the mixture of the two base models, further reducing the accuracy penalty of achieving robustness.
The proposed flexible mixture-of-experts framework, termed ``adaptive smoothing'', works in conjunction with existing or even future methods that improve clean accuracy, robustness, or adversary detection.
We use strong attack methods, including AutoAttack and adaptive attacks, to evaluate our models' robustness.
On the CIFAR-100 dataset, we achieve an $85.21 \%$ clean accuracy while maintaining a $38.72 \%$ $\ell_\infty$-AutoAttacked ($\epsilon = \nicefrac{8}{255}$) accuracy, becoming the second most robust method on the RobustBench benchmark as of submission, while improving the clean accuracy by ten percentage points over all listed models.
Code implementation is available at \url{https://github.com/Bai-YT/AdaptiveSmoothing}.
\end{abstract}

\begin{keywords}
    Neural Networks, Computer Vision, Adversarial Robustness, Certified Robustness, \\Accuracy-Robustness Trade-Off.
\end{keywords}

\begin{AMS}
    68T07, 68T05, 68T45, 90C17.
\end{AMS}

\section{Introduction} \label{sec:intro}

Neural networks are vulnerable to adversarial attacks in various applications, including computer vision and audio \citep{Moosavi-Dezfooli16, Goodfellow15}, natural language processing \citep{Fursov22}, and control systems \citep{Huang17}. Due to the widespread application of neural classifiers, ensuring their reliability in practice is paramount.

To mitigate this susceptibility, researchers have explored ``adversarial training'' (AT) and its improved variants \citep{Kurakin17, Goodfellow15, Bai22a, Bai22b, Zheng20}, building empirically robust models by training with adversarial examples.
Meanwhile, theoretical research has also considered certifying (i.e., mathematically guaranteeing) the robustness of neural classifiers against adversarial perturbations within a radius \citep{Anderson20, Ma20, Anderson21a}.
``Randomized smoothing'' (RS) is one such method that achieves certified robustness with an already-trained model at inference time \citep{Cohen19, Li19}.
Improved variants of RS incorporate dimension reduction methods \citep{Pfrommer23} and denoising modules \citep{Carlini22}. 
Recent work \citep{Anderson21b} has demonstrated that a data-driven locally biased smoothing approach can improve over traditional data-blind RS.
However, this method is limited to the binary classification setting and suffers from the performance bottleneck of its underlying one-nearest-neighbor classifier.

Despite the emergence of these proposed remedies to the adversarial robustness issue, many practitioners are reluctant to adopt them.
As a result, existing publicly available services are still vulnerable \citep{Ilyas18, Borkar21}, presenting severe safety risks. One important reason for this reluctance is the potential for significantly reduced model performance on clean data.
Specifically, some previous works have suggested a fundamental trade-off between accuracy and robustness \citep{Tsipas19, Zhang19}.
Since the sacrifice in unattacked performance is understandably unacceptable in real-world scenarios, developing robust classifiers with minimal clean accuracy degradation is crucial.

Fortunately, recent research has argued that it should be possible to simultaneously achieve robustness and accuracy on benchmark datasets \citep{Yang20}.
To this end, variants of adversarial training that improve the accuracy-robustness trade-off have been proposed, including TRADES \citep{Zhang19}, Interpolated Adversarial Training \citep{Lamb19}, Instance Adaptive Adversarial Training (IAAT) \citep{Balaji19}, and many others \citep{Cheng22, Chen20a, TBai21, Raghunathan20, Wang19a, Wang19b, Tramer18}.
However, despite these improvements, degraded clean accuracy is often an inevitable price of achieving robustness.
Moreover, standard non-robust models often achieve enormous performance gains by pre-training on larger datasets with self- or semi-supervision \citep{He22, LGS}. In contrast, the effect of pre-training on robust classifiers is less understood and may be less prominent \citep{Chen20b, Fan21}.
As a result, the performance gap between these existing works and the possibility guaranteed in \citep{Yang20} is still huge.

This work builds upon locally biased smoothing \citep{Anderson21b} and makes a theoretically disciplined step towards reconciling adversarial robustness and clean accuracy, significantly closing this performance gap and thereby providing practitioners additional incentives for deploying robust models.
This paper is organized as follows.
\begin{itemize}[leftmargin=8mm]
	\setlength\itemsep{1pt}
	\item In \Cref{sec:STD+ROB}, observing that the $K$-nearest-neighbor ($K$-NN) classifier, a crucial component of locally biased smoothing, becomes a performance bottleneck, we replace it with a robust neural network that can be obtained via various existing methods, and propose a new smoothing formulation accordingly.
	The resulting formulation \cref{eq:adap_sm_4} is a convex combination of the output probabilities of a standard neural network and a robust one.
	When the robust neural network has a certified Lipschitz constant or is based on randomized smoothing, the mixed classifier also has a certified robust radius.
	These contents are presented in our conference submission \citep{Bai23a}, but are strengthened in this paper.
	\item In \Cref{sec:ada_smo}, we propose adaptive smoothing, which adaptively adjusts the mixture of a standard model and a robust model by adopting a type of adversary detector as a ``mixing network''.
	The mixing network controls the convex combination of the output probabilities from the two base networks, further improving the accuracy-robustness trade-off, making the resulting model a mixture-of-experts design.
	We empirically verify the robustness of the proposed method using gray-box and white-box projected gradient descent (PGD) attack, AutoAttack, and adaptive attacks, demonstrating that the mixing network is robust against the attack types it is trained with.
	When the mixing network is trained with a carefully designed adaptive AutoAttack, the composite model significantly gains clean accuracy while sacrificing little robustness.
	This section and the corresponding experiment results are entirely new relative to our conference submission \citep{Bai23a}, and are crucial for achieving the much improved accuracy-robustness trade-off over existing works.
\end{itemize}
\vspace{.8mm}

Compared to existing methods for improving the accuracy-robustness trade-off, most of which are training-based, adaptive smoothing has several key advantages:
\begin{itemize}[leftmargin=8mm]
	\setlength\itemsep{1pt}
	\item Adaptive smoothing is agnostic to how the standard and robust base models are trained.
	Hence, one can quickly swap the base classifiers with already-trained standard or robust models.
	Therefore, our method is highly versatile and can be coupled with existing training-based trade-off improving methods.
	\item Adaptive smoothing can thus take advantage of pre-training on large datasets via the standard base classifier and benefit from ongoing advancements in robust training methods via the robust base model.
	Meanwhile, training-based methods have limited compatibilities, since they may conflict with certain techniques essential to achieving state-of-the-art (SOTA) clean or robust accuracy.
	As a result, adaptive smoothing achieves better results: it significantly boosts clean accuracy while maintaining near-SOTA robustness.
	\item Adaptive smoothing allows for an interpretable continuous adjustment between accuracy and robustness at inference time, which can be achieved by simply adjusting the mixture ratio.
	On the other hand, not all training-based methods allow for this adjustment. For those that do, this adjustment involves training an entirely new robust model.
	\item When the mixing ratio is fixed and the robust base model has a certified robust radius with a nonzero margin, the mixed classifier can be certified.
	Since certified models are often also certifiable with a nonzero margin, this condition is commonly satisfied in practice.
	For empirically robust base classifiers that are not certifiable, an estimation can be performed.
\end{itemize}

During the reviewing period of this paper, the authors of \citep{Li23} verified that our mixed classifier simultaneously improves the clean accuracy and the robustness against out-of-distribution (OOD) adversarial attacks (i.e., the threat model differs between training and evaluation), achieving state-of-the-art OOD adversarial robustness among a plethora of models, including the robust base classifier of our mixed classifier. This observation further bolsters the thesis that our proposed method achieves the accuracy-robustness trade-off.

\section{Background and Related Works}

\subsection{Notations} \label{sec:notations}

The symbol $\norm{\cdot}_p$ denotes the $\ell_p$ norm of a vector and $\norm{\cdot}_{p*}$ denotes its dual norm. For a scalar $a$, $\sgn (a) \in \{ -1, 0, 1 \}$ denotes its sign. For a natural number $c$, $[c]$ represents $\{1, 2, \dots, c\}$. For an event $A$, the indicator function $\sI (A)$ evaluates to 1 if $A$ takes place and 0 otherwise. The probability for an event $A (X)$ to occur is denoted by $\sP_{X\sim \gS} [A (X)]$, where $X$ is a random variable drawn from the distribution $\gS$.

Consider a model $g: \sR^d \to \sR^c$, whose components are $g_i: \sR^d \to \sR,\ i \in [c]$, where $d$ is the dimension of the input and $c$ is the number of classes. A classifier $f: \sR^d \to [c]$ can be obtained via $f(x) \in \argmax_{i\in[c]} g_i (x)$. In this paper, we assume that $g (\cdot)$ does not have the desired level of robustness, and refer to it as a ``standard classifier'' (as opposed to a ``robust classifier'' which we denote as $h (\cdot)$). Throughout this paper, we regard $g (\cdot)$ and $h (\cdot)$ as the base classifier logits. To denote their output probabilities, we use $\sigma \circ g (\cdot)$ and $\sigma \circ h (\cdot)$. Similarly, $\sigma \circ g_i (\cdot)$ denotes the predicted probability of the $i\th$ class from $g (\cdot)$. Moreover, we use $\gD$ to denote the set of all validation input-label pairs $(x_i, y_i)$.

We consider $\ell_p$-norm-bounded attacks on differentiable neural networks. A classifier $f (\cdot)$ is considered robust against adversarial perturbations at some input data $x \in \sR^d$ if it assigns the same label to all perturbed inputs $x+\delta$ such that $\norm{\delta}_p \leq \epsilon$, where $\epsilon \geq 0$ is the attack radius. We use PGD$_T$ to denote the $T$-step PGD attack.

\subsection{Related Adversarial Attacks and Defenses}

The fast gradient sign method (FGSM) and PGD attacks based on the first-order maximization of the cross-entropy loss have traditionally been considered classic and straightforward attacks \citep{Madry18, Goodfellow15}. However, these attacks have been shown to be insufficient as defenses designed against them are often easily circumvented \citep{Carlini17a, Athalye18a, Athalye18b, Papernot17}. To this end, various attack methods based on alternative loss functions, Expectation Over Transformation, and black-box perturbations have been proposed. Such efforts include MultiTargeted attack loss \citep{Gowal19}, AutoAttack \citep{Croce20a}, adaptive attack \citep{Tramer20}, minimal distortion attack \citep{Croce20b}, and many others, even considering attacking test-time defenses \citep{Croce22}. The diversity of attack methods has led to the creation of benchmarks such as RobustBench \citep{Croce20c} and ARES-Bench \citep{Liu23} to unify the evaluation of robust models.

On the defense side, while adversarial training \citep{Madry18} and TRADES \citep{Zhang19} have seen enormous success, such methods are often limited by a significantly larger amount of required training data \citep{Schmidt18}. Initiatives that construct more effective training data via data augmentation \citep{Rebuffi21, Gowal20, Gowal21} and generative models \citep{Sehwag22, Wang23} have successfully produced more accurate and robust models. Improved versions of adversarial training \citep{Jia22, Wang20, Shafahi19, Pagliardini22} have also been proposed.

Previous research has developed models that improve robustness by dynamically changing at test time. Specifically, Input-Adaptive Inference improves the accuracy-robustness trade-off by appending side branches to a single network, allowing for early-exit predictions \citep{Hu20}. Other initiatives that aim to enhance the accuracy-robustness trade-off include using the SCORE attack during training \citep{Pang22} and applying adversarial training for regularization \citep{Zheng21}.

Moreover, ensemble-based defenses, such as random ensemble \citep{Liu18}, diverse ensemble \citep{Pang19, Alam22, Adam20}, and Jacobian ensemble \citep{Co22}, have been proposed. In comparison, this work is distinct in that our mixing scheme uses two separate classifiers, incorporating one non-robust component while still ensuring the adversarial robustness of the overall design. By doing so, we take advantage of the high performance of modern pre-trained models, significantly alleviating the accuracy-robustness trade-off and achieving much higher overall performances. Additionally, unlike some previous ensemble initiatives, our formulation is deterministic and straightforward (in the sense of gradient propagation), making it easier to evaluate its robustness properly. The work \citep{Kumar22} also explored assembling an accurate classifier and a robust classifier, but the method considered robustness against distribution shift in a non-adversarial setting and was based on different intuitions. After the submission of this paper, the work \citep{Zhao23} also considered leveraging the power of a pair of standard and robust classifiers. However, instead of mixing the outputs, the authors proposed to distill a new model from the two base classifiers. While this approach also yielded impressive results, the distillation process is time-consuming.

\subsection{Locally Biased Smoothing} \label{sec:local_biased_smoothing}

Randomized smoothing, popularized by \citep{Cohen19}, achieves robustness at inference time by replacing the standard classifier $f (\cdot)$ with the smoothed model
\vspace{-6.6mm}
\begin{equation*}
	\widetilde{f} (x) \in \argmax_{i \in [c]} \sP_{\delta \sim \gS} \big[ f (x+\delta) = i \big],
	\vspace{-1mm}
\end{equation*}
where $\gS$ is a smoothing distribution, for which a common choice is a Gaussian distribution.

Note that $\gS$ is independent of the input $x$ and is often zero-mean.
The authors of \citep{Anderson21b} have shown that data-invariant smoothing enlarges the region of the input space at which the prediction of $\widetilde{f}(\cdot)$ stays constant.
Such an operation may unexpectedly degrade both clean and robust accuracy (the limiting case is when $\widetilde{f} (\cdot)$ becomes a constant classifier).
Furthermore, when $f (\cdot)$ is a linear classifier, the zero-mean restriction on $\gS$ leaves $f(\cdot)$ unchanged.
That is, randomized smoothing with a zero-mean distribution cannot help robustify even the most simple linear classifiers.
To overcome these limitations, \citep{Anderson21b} allowed $\gS$ to be input-dependent (denoted by $\gS_x$) and nonzero-mean and searched for distributions $\gS_x$ that best robustify $\tilde{f}(\cdot)$ with respect to the data distribution.
The resulting scheme is ``locally biased smoothing''.

It is shown in \citep{Anderson21b} that, up to a first-order linearization of the base classifier, the optimal locally biased smoothing distribution $\gS_x$ shifts the input point in the direction of its true class. Formally, for a binary classifier of the form $f(x) = \sign(g(x))$ with continuously differentiable $g (\cdot)$, maximizing the robustness of $\widetilde{f}(\cdot)$ around $x$ over all distributions $\gS_x$ with bounded mean yields the optimal locally biased smoothing classifier given by
\vspace{-1mm}
\begin{equation*}
	\widetilde{f}(x) = \sign(\widetilde{g}(x)), \quad \text{where} \quad \widetilde{g}(x) = g(x) + \gamma y(x) \norm{\nabla g(x)}_{p*},
\end{equation*}
where $y(x) \in \{-1,1\}$ is the true class of $x$, and where $\gamma \ge 0$ is the (fixed) bound on the distribution mean (i.e., $\norm{\mathbb{E}_{\delta \sim \gS_x} [ \delta ]}_p \le \gamma$).

Intuitively, this optimal locally biased smoothing classifier shifts the input along the direction $\nabla g(x)$ when $y(x) = 1$ as a means to make the classifier more likely to label $x$ into class $1$, and conversely shifts the input along the direction $-\nabla g(x)$ when $y(x)=-1$. Of course, during inference, the true class $y(x)$ is generally unavailable, and therefore \citep{Anderson21b} uses a ``direction oracle'' $h(x) \in \{-1,1\}$ as a surrogate for $y(x)$, resulting in the locally biased smoothing classifier
\begin{equation}
\begin{aligned}
	f^\gamma(x) &= \sign(h^\gamma(x)), \quad \text{where} \quad h^\gamma(x) = g(x) + \gamma h(x) \norm{\nabla g(x)}_{p*}.
\end{aligned}\label{eq:lbrs}
\end{equation}
Notice that unlike randomized smoothing, the computation \cref{eq:lbrs} is deterministic, which is a consequence of the closed-form optimization over $\gS_x$.

In contrast to data-invariant randomized smoothing, the direction oracle $h (\cdot)$ is learned from data, incorporating the data distribution into the manipulation of the smoothed classifier's decision boundaries.
This allows for increases in nonlinearity when the data implies that such nonlinearities are beneficial for robustness, resolving a fundamental limitation of data-invariant smoothing.
In general, the direction oracle should come from an inherently robust classifier.
Since such a robust model $h (\cdot)$ is often less accurate, the value $\gamma$ can be viewed as a trade-off parameter, as it encodes the amount of trust into the direction oracle.
The authors of \citep{Anderson21b} showed that when the direction oracle is a one-nearest-neighbor classifier, locally biased smoothing outperforms traditional randomized smoothing in binary classification.

\subsection{Adversarial Input Detectors}

Adversarial inputs can be detected via various methods.
For example, \citep{Metzen17} proposed to append an additional detection branch to an existing neural network and use adversarial data to train the detector in a supervised fashion.
However, \citep{Carlini17b} showed that it is possible to bypass this detection method. They constructed adversarial examples via the C\&W attacks \citep{Carlini17a} and simultaneously targeted the classification branch and the detection branch by treating the two branches as an ``augmented classifier''.
According to \citep{Carlini17b}, the detector is effective against the types of attack that it is trained with, but not necessarily the attack types that are absent in the training data.
It is thus reasonable to expect the detector to be able to detect a wide range of attacks if it is trained using sufficiently diverse types of attacks (including those targeting the detector itself).
While exhaustively covering the entire adversarial input space is intractable, and it is unclear to what degree one needs to diversify the attack types in practice, our experiments show that our modified architecture based on \citep{Metzen17} can recognize the SOTA AutoAttack adversaries with a high success rate.

The literature has also considered alternative detection methods that mitigate the above challenges faced by detectors trained in a supervised fashion \citep{Carrara19}.
Such initiatives include unsupervised detectors \citep{Aldahdooh21a, Aldahdooh21b} and re-attacking \citep{Ahmadi21}.
Since universally effective detectors have not yet been discovered, this paper focuses on transferring the properties of the existing detector toward better overall robustness.
Future advancements in the field of adversary detection can further enhance the performance of our method.

\section{Using a Robust Neural Network as the Smoothing Oracle} \label{sec:STD+ROB}

Locally biased smoothing was designed for binary classification, restricting its practicality. Here, we first extend it to the multi-class setting by treating the output $h^\gamma_i (x)$ of each class independently, giving rise to:
\begin{equation} \label{eq:adap_sm_1}
    \hismoa (x) \coloneqq g_i (x) + \gamma h_i (x) \norm{\nabla g_i (x)}_{p*}, \;\;\; \forall i \in [c].
\end{equation}

Note that if $\norm{\nabla g_i (x)}_{p*}$ is large for some $i$, then $\hismoa (x)$ can be large even if both $g_i (x)$ and $h_i (x)$ are small, potentially leading to incorrect predictions. To remove the effect of the magnitude difference across the classes, we propose a normalized formulation as follows:
\begin{equation} \label{eq:adap_sm_2}
    \hismob (x) \coloneqq \frac{g_i (x) + \gamma h_i (x) \norm{\nabla g_i (x)}_{p*}}{1 + \gamma \norm{\nabla g_i (x)}_{p*}}, \;\;\; \forall i \in [c].
\end{equation}

The parameter $\gamma$ adjusts between clean accuracy and robustness. It holds that $\hismob (x) \equiv g_i (x)$ when $\gamma = 0$, and $\hismob (x) \to h_i (x)$ when $\gamma \to \infty$ for all $x$ and all $i$.

With the mixing procedure generalized to the multi-class setting, we now discuss the choice of the smoothing oracle $h_i (\cdot)$. While $K$-NN classifiers are relatively robust and can be used as the oracle, their representation power is too weak. On the CIFAR-10 image classification task \cite{cifar10}, $K$-NN only achieves around $35 \%$ accuracy on clean test data. In contrast, an adversarially trained ResNet \citep{He16} can reach $50 \%$ accuracy on attacked test data \cite{Madry18}. This lackluster performance of $K$-NN becomes a significant bottleneck in the accuracy-robustness trade-off of the mixed classifier. To this end, we replace the $K$-NN model with a robust neural network. The robustness of this network can be achieved via various methods, including adversarial training, TRADES, and traditional randomized smoothing.

Further scrutinizing \cref{eq:adap_sm_2} leads to the question of whether $\norm{\nabla g_i (x)}_{p*}$ is the best choice for adjusting the mixture of $g (\cdot)$ and $h (\cdot)$.
This gradient magnitude term is a result of the setting of $h (x) \in \{-1, 1\}$ considered in \cite{Anderson21b}.
Here, we assume a different setting, where both $g (\cdot)$ and $h (\cdot)$ are multi-class and differentiable.
Thus, we further generalize the formulation to
\begin{gather} \label{eq:adap_sm_3}
    \hismoc (x) \coloneqq \frac{g_i(x) + \gamma R_i(x) h_i(x)}{1 + \gamma R_i(x)}, \;\;\; \forall i \in [c],
\end{gather}
where $R_i (x)$ is an extra scalar term that can potentially depend on both $\nabla g_i (x)$ and $\nabla h_i (x)$ to determine the ``trustworthiness'' of the base classifiers. Here, we empirically compare four options for $R_i (x)$, namely, $1$, $\norm{\nabla g_i (x)}_{p*}$, $\norm{\nabla \max_j g_j (x)}_{p*}$, and $\frac{\norm{\nabla g_i (x)}_{p*}} {\norm{\nabla h_i (x)}_{p*}}$. In \Cref{sec:R_options} in the supplemental materials, we explain how these four options were designed.

Another design choice is whether $g (\cdot)$ and $h (\cdot)$ should be the pre-softmax logits or the post-softmax probabilities. Note that since most attack methods are designed based on logits, the output of the mixed classifier should be logits rather than probabilities. This is because feeding output probabilities into attacks designed around logits effectively results in a redundant Softmax layer, which can cause gradient masking, an undesirable phenomenon that makes it hard to evaluate the proposed method's robustness properly. Thus, we have the following two options that make the mixed model compatible with existing gradient-based attacks:
\begin{enumerate}[leftmargin=8mm]
	\item Use the logits for both base classifiers, $g (\cdot)$ and $h (\cdot)$.
	\item Use the probabilities for both base classifiers, and then convert the mixed probabilities back to logits. The required ``inverse-softmax'' operator is simply the natural logarithm.
\end{enumerate}
\vspace{1mm}

\begin{figure}[t]
	\centering
	\vspace{-.5mm}
	\begin{minipage}{.54\textwidth}
		\includegraphics[width=\textwidth, height=.63\textwidth]{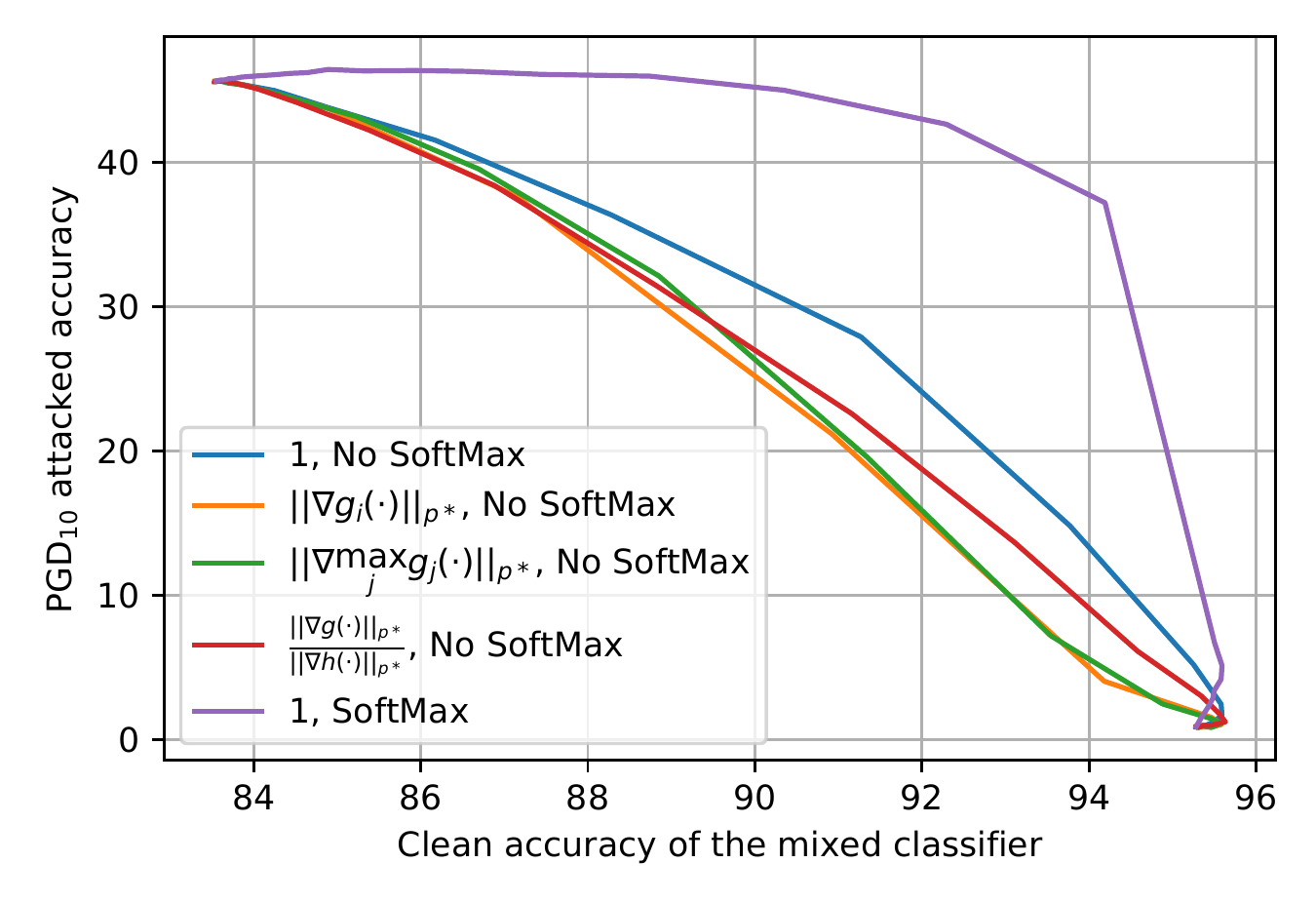}
	\end{minipage}
	\begin{minipage}{.41\textwidth}
		\begin{itemize}[leftmargin=4mm]
			\setlength\itemsep{.8em}
			\item \small ``No Softmax'' represents Option 1, i.e., use the logits $g (\cdot)$ and $h (\cdot)$.
			\item \small ``Softmax'' represents Option 2, i.e., use the probabilities $\sigma \circ g (\cdot)$ and $\sigma \circ h (\cdot)$.
			\item \small With the best formulation, high clean accuracy can be achieved with very little sacrifice on robustness.
		\end{itemize}
	\end{minipage}
	\vspace{-2mm}
	\caption{Compare the ``attacked accuracy -- clean accuracy'' curves for various $R_i (x)$ options.}
	\label{fig:compare_R}
	\vspace{-1mm}
\end{figure}

\Cref{fig:compare_R} visualizes the accuracy-robustness trade-off achieved by mixing logits or probabilities with different $R_i (x)$ options. Here, the base classifiers are a pair of standard and adversarially trained ResNet-18s. This ``clean accuracy versus PGD$_{10}$-attacked accuracy'' plot concludes that $R_i (x) = 1$ optimizes the accuracy-robustness trade-off, and $g (\cdot)$ and $h (\cdot)$ should be probabilities. \Cref{sec:more_compare_R} confirms this selection by repeating \Cref{fig:compare_R} with different model architectures,  other robust base model training methods, and various attack budgets.

Our selection of $R_i (x) = 1$ differs from $R_i (x) = \norm{g_i (x)}_{p*}$ used in \citep{Anderson21b}. Intuitively, \citep{Anderson21b} used linear classifier examples to motivate estimating the trustworthiness of the base models with their gradient magnitudes. However, when the base classifiers are highly nonlinear neural networks as in our case, while the local Lipschitzness of a base classifier still correlates with its robustness, its gradient magnitude is not always a good estimator of the local Lipschitzness. \Cref{sec:more_compare_R} provides additional discussions on this matter. Additionally, \Cref{sec:certified_radius_thms} offers theoretical intuitions for selecting mixing probabilities over mixing logits.

With these design choices implemented, the formulation \cref{eq:adap_sm_3} can be re-parameterized as
\begin{gather} \label{eq:adap_sm_4}
    \hialpha (x) \coloneqq \log \Big( (1 - \alpha) \sigma \circ g_i(x) + \alpha \cdot \sigma \circ h_i(x) \Big), \;\;\; \forall i \in [c],
\end{gather}
where $\alpha = \frac{\gamma} {1 + \gamma} \in [0, 1]$. We take $\halpha (\cdot)$ in \cref{eq:adap_sm_4}, which is a convex combination of base classifier probabilities, as our proposed mixed classifier. Note that \cref{eq:adap_sm_4} calculates the mixed classifier logits, acting as a drop-in replacement for existing models which usually produce logits. Removing the logarithm recovers the output probabilities without changing the predicted class.

\subsection{Theoretical Certified Robust Radius} \label{sec:certified_radius_thms}

In this section, we derive certified robust radii for the mixed classifier $\halpha (\cdot)$ introduced in \cref{eq:adap_sm_4}, given in terms of the robustness properties of $h (\cdot)$ and the mixing parameter $\alpha$.
The results ensure that despite being more sophisticated than a single model, $\halpha (\cdot)$ cannot be easily conquered, even if an adversary attempts to adapt its attack methods to its structure.
Such guarantees are of paramount importance for reliable deployment in safety-critical applications.
Note that while the focus of this paper is improved empirical accuracy-robustness trade-off and the existing literature often considers empirical and certified robustness separately, we will discuss how the certified results in this section provide important insights into the empirical performance, as the underlying assumptions are realistic and (approximately) verifiable for many empirically robust models.

Noticing that the base model probabilities satisfy $0 \leq \sigma \circ g_i (\cdot) \leq 1$ and $0 \leq \sigma \circ h_i (\cdot) \leq 1$ for all $i$, we introduce the following generalized and tightened notion of certified robustness.

\begin{definition} \label{def:robust_with_margin}
	Consider a model $h: \sR^d \to \sR^c$ and an arbitrary input $x \in \sR^d$. Further consider $y = \argmax_i h_i (x)$, $\mu \in [0, 1]$, and $r \ge 0$.
	Then, $h (\cdot)$ is said to be \textbf{certifiably robust at $x$ with margin $\mu$ and radius $r$} if $\sigma \circ h_y (x+\delta) \geq \sigma \circ h_i (x+\delta) + \mu$ for all $i \ne y$ and all $\delta \in \sR^d$ such that $\norm{\delta}_p \leq r$.
\end{definition}

Intuitively, \cref{def:robust_with_margin} ensures that all points within a radius from a nominal point have the same prediction as the nominal point, with the difference between the top and runner-up probabilities no smaller than a threshold.
For practical classifiers, the robust margin can be straightforwardly estimated by calculating the confidence gap between the predicted and the runner-up classes at an adversarial input obtained with strong attacks.
As shown in the experiments in \Cref{sec:conf_properties}, if a real-world robust model is robust at some input with a given radius, it is likely to be robust with a non-trivial margin.

\begin{lemma}
	\label{lem:certified_radius}
	Let $x \in \sR^d$ and $r \ge 0$.
	If it holds that $\alpha \in [\frac{1}{2}, 1]$ and $h (\cdot)$ is certifiably robust at $x$ with margin $\frac{1-\alpha} {\alpha}$ and radius $r$, then the mixed classifier $\halpha (\cdot)$ is robust in the sense that $\argmax_{i} \hialpha (x+\delta) = \argmax_{i} h_i (x)$ for all $\delta \in \sR^d$ such that $\norm{\delta}_p \leq r$.
\end{lemma}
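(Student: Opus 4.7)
The plan is to exploit the fact that $\log(\cdot)$ is strictly monotone so that $\argmax_i \hialpha(x+\delta)$ coincides with the argmax of the convex combination $\phi_i(x) \coloneqq (1-\alpha)\, \sigma\circ g_i(x) + \alpha\, \sigma\circ h_i(x)$. Letting $y = \argmax_i h_i(x)$ (well-defined thanks to the assumed positive margin of $h(\cdot)$), it then suffices to show that for every $\delta \in \sR^d$ with $\norm{\delta}_p \le r$ and every competing class $i \ne y$, the difference $\phi_y(x+\delta) - \phi_i(x+\delta)$ is nonnegative.

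First I would decompose this difference additively as
\[
\phi_y(x+\delta) - \phi_i(x+\delta) = (1-\alpha)\bigl[\sigma\circ g_y(x+\delta) - \sigma\circ g_i(x+\delta)\bigr] + \alpha\bigl[\sigma\circ h_y(x+\delta) - \sigma\circ h_i(x+\delta)\bigr],
\]
and then lower-bound each bracket separately. Because \emph{no} robustness hypothesis is available for the standard branch $g(\cdot)$, I can only use the trivial range bound: each softmax coordinate lies in $[0,1]$, so the first bracket is at least $0 - 1 = -1$ in the worst case. For the second bracket, the hypothesis that $h(\cdot)$ is certifiably robust at $x$ with margin $\mu = \frac{1-\alpha}{\alpha}$ and radius $r$ directly yields $\sigma\circ h_y(x+\delta) - \sigma\circ h_i(x+\delta) \ge \frac{1-\alpha}{\alpha}$.

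Substituting these two bounds gives $\phi_y(x+\delta) - \phi_i(x+\delta) \ge -(1-\alpha) + \alpha \cdot \frac{1-\alpha}{\alpha} = 0$, and since $i \ne y$ was arbitrary, $y$ lies in the argmax set for the mixed classifier at every $x+\delta$ within the radius, which is the claim. The whole argument is thus a short calculation; the genuine content is conceptual rather than technical. The main obstacle, or rather the key insight to highlight, is recognizing that the threshold $\mu = \frac{1-\alpha}{\alpha}$ is precisely calibrated so that the scaled robustness margin $\alpha\mu$ exactly matches the maximum possible adversarial swing $1-\alpha$ contributed by the unconstrained standard branch; any smaller margin would be insufficient, and the stipulation $\alpha \in [\tfrac{1}{2}, 1]$ is what ensures $\mu \le 1$ so that the margin requirement on $h(\cdot)$ is in principle attainable by a probability vector.
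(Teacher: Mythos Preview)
Your proposal is correct and follows essentially the same approach as the paper: both arguments undo the outer logarithm (you via monotonicity, the paper via applying $\exp$), decompose the difference into the $g$-branch and $h$-branch contributions, bound the $g$-branch by the trivial range bound $-(1-\alpha)$, and use the margin hypothesis on the $h$-branch to obtain a nonnegative total. Your additional commentary on why the margin $\frac{1-\alpha}{\alpha}$ is exactly calibrated and why $\alpha \ge \tfrac{1}{2}$ is needed is a nice expository touch that the paper defers to the surrounding discussion.
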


\begin{proof}
Suppose that $h (\cdot)$ is certifiably robust at $x$ with margin $\frac{1-\alpha} {\alpha}$ and radius $r$.
Since $\alpha \in [\frac{1}{2}, 1]$, it holds that $\frac{1-\alpha} {\alpha} \in [0, 1]$. Let $y = \argmax_i h_i(x)$.
Consider an arbitrary $i \in [c] \setminus \{y\}$ and $\delta \in \sR^d$ such that $\norm{\delta}_p \le r$. Since $\sigma \circ g_i (x+\delta) \in [0, 1]$, it holds that
\begin{align*}
	\exp \left( \hyalpha (x+\delta) \right) - & \exp \left( \hialpha (x+\delta) \right) \\
	= & (1-\alpha) (\sigma \circ g_y (x+\delta) - \sigma \circ g_i (x+\delta)) + \alpha (\sigma \circ h_y (x+\delta) - \sigma \circ h_i (x+\delta)) \\
	\ge & (1-\alpha) (0-1) + \alpha (\sigma \circ h_y (x+\delta) - \sigma \circ h_i (x+\delta)) \\
	\ge & (\alpha - 1) + \alpha \left( \tfrac{1-\alpha} {\alpha} \right) = 0.
\end{align*}
Thus, it holds that $\hyalpha (x+\delta) \geq \hialpha (x+\delta)$ for all $i \neq y$, and thus $\argmax_i \hialpha (x+\delta) = y = \argmax_i h_i (x)$.
\end{proof}

While most existing provably robust results consider the special case with zero margin, we will show that models built via common methods are also robust with nonzero margins.
We specifically consider two types of popular robust classifiers: Lipschitz continuous models (\cref{thm:certified_radius}) and RS models (\cref{thm:randomized_smoothing}).
Here, \cref{lem:certified_radius} builds the foundation for proving these two theorems, which amounts to showing that Lipschitz and RS models are robust with nonzero margins and thus the mixed classifiers built with them are robust.
\cref{lem:certified_radius} can also motivate future researchers to develop margin-based robustness guarantees for base classifiers so that they immediately grant robustness guarantees for mixed architectures.

\cref{lem:certified_radius} additionally provides further justifications for using probabilities instead of logits in the smoothing operation.
Intuitively, $(1 - \alpha) \sigma \circ g_i (\cdot)$ is bounded between $0$ and $1 - \alpha$, so as long as $\alpha$ is relatively large (specifically, at least $\frac{1}{2}$), the detrimental effect of $g (\cdot)$ when subject to attack can be overcome by $h (\cdot)$.
Had we used the logits $g_i (\cdot)$, since this quantity cannot be bounded, it would have been much harder to overcome the vulnerability of $g (\cdot)$.

Since we do not make assumptions on the Lipschitzness or robustness of $g (\cdot)$, \cref{lem:certified_radius} is tight.
To understand this, we suppose that there exists some $i \in [c] \backslash \{ y \}$ and $\delta \neq 0$ such that $\norm{\delta}_p \leq r$ that make $\sigma \circ h_y (x+\delta) - \sigma \circ h_i (x+\delta) \coloneqq h_d$ smaller than $\frac{1-\alpha} {\alpha}$, indicating that $- \alpha h_d > \alpha-1$.
Since the only information about $g (\cdot)$ is that $\sigma \circ g_i (x+\delta) \in [0, 1]$ and thus the value $\sigma \circ g_y (x+\delta) - \sigma \circ g_i (x+\delta) \coloneqq g_d$ can be any number between $-1$ and $1$, it is possible that $(1 - \alpha) g_d$ is smaller than $-\alpha h_d$.
By \cref{eq:adap_sm_4}, when $(1 - \alpha) g_d < -\alpha h_d$, it holds that $\hyalpha (x+\delta) < \hialpha (x+\delta)$, and thus $\argmax_{i} \hialpha (x+\delta) \neq \argmax_{i} h_i (x)$.

\begin{definition}
	\label{def:lipschitz}
	A function $f \colon \sR^d \to \sR$ is called \emph{$\ell_p$-Lipschitz continuous} if there exists $L \in (0, \infty)$ such that $|f(x')-f(x)| \le L\|x'-x\|_p$ for all $x', x \in \sR^d$. The \textbf{Lipschitz constant} of such $f$ is defined to be
	\vspace{-1.9mm}
	\begin{equation*}
		\lip_p(f) \coloneqq \inf \left\{ L \in (0,\infty) : |f(x')-f(x)| \le L \|x'-x\|_p ~ \text{for all $x', x \in \sR^d$} \right\}.
	\end{equation*}
\end{definition}

\begin{assumption}
	\label{as:lipschitz}
	The base model $h (\cdot)$ is robust in the sense that, for all $i \in \{1, 2, \dots, n\}$, $\sigma \circ h_i (\cdot)$ is $\ell_p$-Lipschitz continuous with Lipschitz constant $\lip_p (\sigma \circ h_i)$.
\end{assumption}

\begin{theorem}
	\label{thm:certified_radius}
	Suppose that \cref{as:lipschitz} holds, and let $y = \argmax_i h_i(x)$, where $x \in \sR^d$ is arbitrary. Then, if $\alpha \in [\frac{1}{2}, 1]$, it holds that $\argmax_{i} \hialpha (x+\delta) = y$ for all $\delta \in \sR^d$ such that
	\begin{equation} \label{eq:lip_cert_rad}
		\bignorm{\delta}_p \le \rlippalpha (x) \coloneqq \min_{i \ne y} \frac{\alpha \big( \sigma \circ h_y(x) - \sigma \circ h_i(x) \big) + \alpha - 1} {\alpha \left( \lip_p (\sigma \circ h_y) + \lip_p (\sigma \circ h_i) \right)}.
	\end{equation}
\end{theorem}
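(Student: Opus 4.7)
The plan is to derive the stated certified radius by combining the Lipschitz hypothesis on the robust base classifier with \cref{lem:certified_radius}. Since \cref{lem:certified_radius} already tells us that $\argmax_i \hialpha(x+\delta) = \argmax_i h_i(x)$ whenever $h(\cdot)$ is certifiably robust at $x$ with margin $\frac{1-\alpha}{\alpha}$ and radius $r$, the only thing left to prove is that the Lipschitz assumption forces $h(\cdot)$ to be certifiably robust at $x$ with precisely this margin and with radius $\rlippalpha(x)$ defined in \cref{eq:lip_cert_rad}. Note that the margin $\frac{1-\alpha}{\alpha}$ lies in $[0,1]$ because $\alpha \in [\frac{1}{2},1]$, so \cref{lem:certified_radius} is applicable.

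Concretely, I would fix an arbitrary $i \in [c]\setminus\{y\}$ and a perturbation $\delta$ with $\bignorm{\delta}_p \le \rlippalpha(x)$, and use \cref{as:lipschitz} twice to obtain the one-sided bounds
\begin{equation*}
    \sigma \circ h_y(x+\delta) \ge \sigma \circ h_y(x) - \lip_p(\sigma \circ h_y)\bignorm{\delta}_p,
\end{equation*}
\begin{equation*}
    \sigma \circ h_i(x+\delta) \le \sigma \circ h_i(x) + \lip_p(\sigma \circ h_i)\bignorm{\delta}_p.
\end{equation*}
Subtracting gives a lower bound on $\sigma \circ h_y(x+\delta) - \sigma \circ h_i(x+\delta)$ in terms of the clean confidence gap and the sum of Lipschitz constants times $\bignorm{\delta}_p$. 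Requiring this lower bound to be at least $\frac{1-\alpha}{\alpha}$ and solving for $\bignorm{\delta}_p$ yields exactly the per-$i$ threshold appearing inside the minimum in \cref{eq:lip_cert_rad}; taking the minimum over $i \ne y$ is what makes the bound hold simultaneously for every runner-up class.

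Having established that $h(\cdot)$ is certifiably robust at $x$ with margin $\frac{1-\alpha}{\alpha}$ and radius $\rlippalpha(x)$, I would simply apply \cref{lem:certified_radius} to conclude $\argmax_i \hialpha(x+\delta) = y$ for all admissible $\delta$, which is the desired statement. I expect no substantive obstacle beyond bookkeeping: the main subtlety is being careful about the algebraic rearrangement so that the numerator $\alpha(\sigma \circ h_y(x) - \sigma \circ h_i(x)) + \alpha - 1$ is obtained by multiplying the natural form $(\sigma \circ h_y(x) - \sigma \circ h_i(x)) - \frac{1-\alpha}{\alpha}$ through by $\alpha$, and to note that when this numerator is nonpositive the claimed radius is vacuously a valid (possibly zero or negative) bound, so no separate case analysis is needed.
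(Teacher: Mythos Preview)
Your proposal is correct and follows essentially the same approach as the paper: apply the Lipschitz bounds to show $\sigma\circ h_y(x+\delta)-\sigma\circ h_i(x+\delta)\ge \frac{1-\alpha}{\alpha}$ for all $i\ne y$ whenever $\bignorm{\delta}_p\le \rlippalpha(x)$, then invoke \cref{lem:certified_radius}. The paper presents the Lipschitz step as a single chain of inequalities (adding and subtracting $\sigma\circ h_y(x)$ and $\sigma\circ h_i(x)$) rather than two separate one-sided bounds, but the content is identical.
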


\begin{proof}
	Suppose that $\alpha \in [\frac{1}{2}, 1]$, and let $\delta \in \sR^d$ be such that $\norm{\delta}_p \le \rlippalpha (x)$. Furthermore, let $i \in [c] \setminus \{y\}$. It holds that
	\begin{align*}
		\sigma \circ h_y (x+\delta) & - \sigma \circ h_i (x+\delta) \\
		& = \sigma \circ h_y(x) - \sigma \circ h_i(x) + \sigma \circ h_y(x+\delta) - \sigma \circ h_y(x) + \sigma \circ h_i(x) - \sigma \circ h_i(x+\delta) \\
		& \ge \sigma \circ h_y(x) - \sigma \circ h_i(x) - \lip_p (\sigma \circ h_y) \norm{\delta}_p - \lip_p (\sigma \circ h_i) \norm{\delta}_p \\
		& \ge \sigma \circ h_y(x) - \sigma \circ h_i(x) - \left( \lip_p (\sigma \circ h_y) + \lip_p (\sigma \circ h_i) \right) \rlippalpha (x) \ge \tfrac{1 - \alpha} {\alpha}.
	\end{align*}
	Therefore, $h (\cdot)$ is certifiably robust at $x$ with margin $\frac{1-\alpha} {\alpha}$ and radius $\rlippalpha (x)$. Hence, by \cref{lem:certified_radius}, the claim holds.
\end{proof}

Note that the $\ell_p$ norm that we certify can be arbitrary (e.g., $\ell_1$, $\ell_2$, or $\ell_\infty$), so long as the Lipschitz constant of the robust network $h (\cdot)$ is computed with respect to the same norm.

\cref{as:lipschitz} is not restrictive in practice.
For example, Gaussian RS with smoothing variance $\sigma^2 I_d$ ($I_d$ is the identity matrix in $\sR^{d \times d}$) yields robust models with $\ell_2$-Lipschitz constant $\sqrt{ \nicefrac{2} {\pi \sigma^2} }$ \cite{Salman19}.
In \Cref{sec:certified_exp}, we use experiments to verify the certified robustness of our method when $h (\cdot)$ is an RS model.
Additionally, methods have been proposed to compute upper bounds on neural network Lipschitz constants, thus allowing our certified robustness guarantees via \cref{as:lipschitz} and \cref{thm:certified_radius} to be employed \citep{Fazlyab19, Jordan20, Shi22}.
The notion of Lipschitz continuity has even motivated novel robustness methods \citep{Moosavi-Dezfooli19, Terjek20, Pfrommer24}.

\cref{as:lipschitz} can be relaxed to the even less restrictive scenario of using local Lipschitz constants over a neighborhood (e.g., a norm ball) around a nominal input $x$ (i.e., how flat $\sigma \circ h (\cdot)$ is near $x$) as a surrogate for the global Lipschitz constants.
In this case, \cref{thm:certified_radius} holds for all $\delta$ within this neighborhood.
Specifically, suppose that for an arbitrary input $x$ and an $\ell_p$ attack radius $\epsilon$, it holds that $\sigma \circ h_y (x) - \sigma \circ h_y (x + \delta) \le \epsilon \cdot \lip_p^x (\sigma \circ h_y)$ and $\sigma \circ h_i (x + \delta) - \sigma \circ h_i (x) \le \epsilon \cdot \lip_p^x (\sigma \circ h_i)$ for all $i \neq y$ and all perturbations $\delta$ such that $\norm{\delta}_p \leq \epsilon$.
Furthermore, suppose that the robust radius $\rlippalpha (x)$, as defined in \cref{eq:lip_cert_rad} but use the local Lipschitz constant $\lip_p^x$ as a surrogate to the global constant $\lip_p$, is not smaller than $\epsilon$.
Then, if the robust base classifier $h (\cdot)$ is correct at the nominal point $x$, then the mixed classifier $\halpha (\cdot)$ is robust at $x$ within the radius $\epsilon$.
The proof follows that of \cref{thm:certified_radius}.

The relaxed Lipschitzness defined above can be estimated for practical differentiable classifiers via an algorithm derived from the PGD attack \citep{Yang20}.
The authors of \citep{Yang20} showed that many existing empirically robust models, including those trained with AT or TRADES, are locally Lipschitz.
Note that \citep{Yang20} evaluates the local Lipschitz constants of the logits, whereas we analyze the probabilities, whose Lipschitz constants are much smaller, and small enough to certify a meaningful robust radius.
Hence, \cref{thm:certified_radius} provides important insights into the empirical robustness of the mixed classifier.
A detailed discussion is presented in \Cref{sec:est_lip}.

An intuitive explanation of \Cref{thm:certified_radius} is that if $\alpha$ approaches $1$, then $\rlippalpha (x)$ approaches $\min_{i \ne y} \frac{h_y(x) - h_i(x)}{\lip_p(h_y) + \lip_p(h_i)}$, which is the standard (global) Lipschitz-based robust radius of $h (\cdot)$ around $x$ (see, e.g., \cite{Fazlyab19, Hein17} for further discussions on Lipschitz-based robustness).
On the other hand, if $\alpha$ is too small compared to the relative confidence of $h (\cdot)$, namely, if there exists $i \ne y$ such that $\alpha \le \frac{1} {1 + \sigma \circ h_y(x) - \sigma \circ h_i(x)}$, then $\rlippalpha (x)$ is non-positive, and in this case we cannot provide non-trivial certified robustness for $\halpha (\cdot)$.
This is rooted in the fact that too small of an $\alpha$ value amounts to excess weight in the non-robust classifier $g (\cdot)$.
If $h (\cdot)$ is $100 \%$ confident in its prediction, then $\sigma \circ h_y(x) - \sigma \circ h_i(x) = 1$ for all $i \ne y$, and therefore this threshold value of $\alpha$ becomes $\frac{1}{2}$, leading to non-trivial certified radii for $\alpha > \frac{1}{2}$. 
However, once we put over $\frac{1}{2}$ of the weight into $g (\cdot)$, a nonzero radius around $x$ is no longer certifiable. Since there are no assumptions on the robustness of $g (\cdot)$ around $x$, this is intuitively the best one can expect, 

To summarize our certified robustness results, \cref{lem:certified_radius} shows the connection between the robust margin of the robust classifier and the robustness of the mixed classifier, while \cref{thm:certified_radius} demonstrates how general Lipschitz robust base classifiers exploit this relationship.
Since empirically robust models often satisfy the conditions of these two results, they guarantee that adaptive attacks cannot easily circumvent our proposed robustification.

In \Cref{sec:rs_radius} in the supplemental materials, we further tighten the certified radius estimation in the special case when $h (\cdot)$ is a randomized smoothing classifier and the robust radius is defined with the $\ell_2$ norm.
We achieve so by exploiting the stronger Lipschitzness of $x \mapsto \Phi^{-1} \big( \sigma \circ h_i(x) \big)$ arising from the unique structure granted by Gaussian convolution operations ($\Phi^{-1}$ is the inverse Gaussian cumulative distribution function).
In \Cref{sec:certified_exp}, we com- {\parfillskip=0pt \parskip=0pt \par}
\begin{wrapfigure}{r}[4mm]{.564\textwidth}
    \centering
    \includegraphics[width=.514\textwidth, trim={4mm 4.8mm 4mm 3.5mm}, clip] {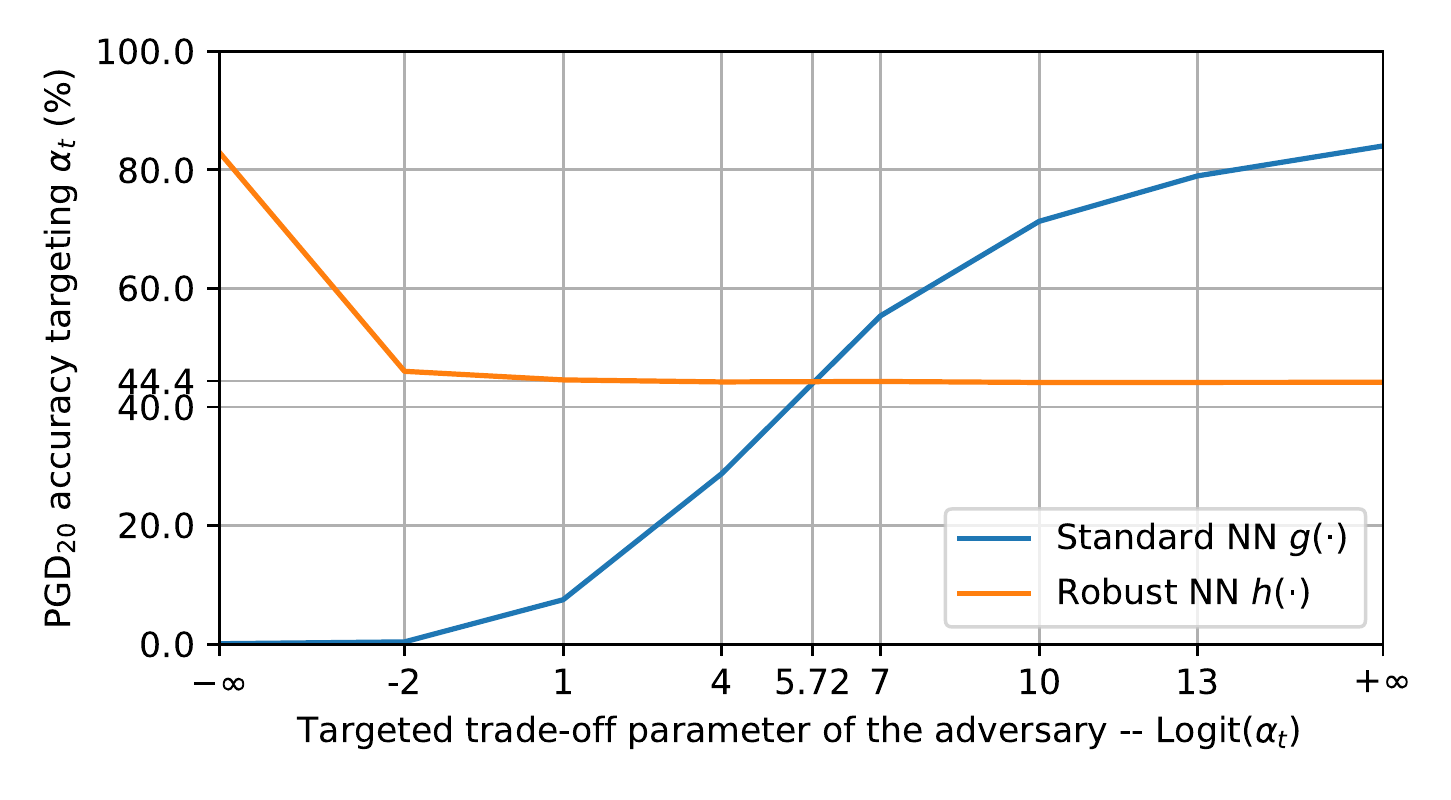}
    \captionsetup{width=.505\textwidth}
    \caption{Attacked accuracy of the accurate base classifier $g (\cdot)$ and the robust base model $h (\cdot)$ when the adversary targets different values of $\alpha_t$. For better readability, we use $\mathrm{Logit} (\alpha_t)$ as the horizontal axis labels, where $\mathrm{Logit} (\cdot)$ denotes the inverse function of Sigmoid.}
    \label{fig:ada_acc_2}
    \vspace{-1.5mm}
\end{wrapfigure}
\noindent pare the mixed classifier's certified robustness to existing certified methods.

\section{Adaptive Smoothing Strength with the Mixing Network} \label{sec:ada_smo}

So far, $\alpha$ has been treated as a fixed hyperparameter.
A more intelligent approach is to allow $\alpha$ to be different for each $x$ by using a function $\alpha (x)$.
We take $\alpha (x)$ to be deterministic, as stochastic defenses are challenging to properly evaluate.

One motivation for adopting the adaptive mixing ratio $\alpha (x)$ is that the optimal $\alpha^\star$ varies when $x$ changes.
For example, when $x$ is unperturbed, the standard model $g (\cdot)$ outperforms the robust base model $h (\cdot)$.
If $x$ is an attacked input targeting $g (\cdot)$, then $h (\cdot)$ should again be used. However, if the attack target is $h (\cdot)$, then as shown in \Cref{fig:ada_acc_2}, even though $h (\cdot)$ is robust, feeding $x$ into $g (\cdot)$ is a better choice.
This is because the vulnerabilities of $g (\cdot)$ and $h (\cdot)$ differ enough that an adversarial perturb targeting $h (\cdot)$ is benign to $g (\cdot)$.

When the adversary targets a mixed classifier $\halphat (\cdot)$, as $\alpha_t$ varies, the optimal strategy changes.
\Cref{fig:ada_acc_2} provides a visualization based on the CIFAR-10 dataset. Specifically, we assemble a composite model $\halphat (\cdot)$ using a ResNet-18 standard classifier $g (\cdot)$ and a ResNet-18 robust classifier $h (\cdot)$ (both from \citep{Na20}) via \cref{eq:adap_sm_4}.
Then, we attack $\halphat (\cdot)$ with different values of $\alpha_t$ via PGD$_{20}$, save the adversarial instances, and report the accuracy of $g (\cdot)$ and $h (\cdot)$ on these instances.
When $\alpha_t \leq \mathrm{Sigmoid} (5.72) = 0.9967$, the robust model $h (\cdot)$ performs better. When $\alpha_t > 0.9967$, the standard model $g (\cdot)$ is more suitable.

Throughout the remainder of this section, we overload the notation $\halpha (\cdot)$ even though $\alpha (\cdot)$ may be a function of the input, i.e., we define $\halpha (x) = h^{\alpha (x)}(x)$.

\subsection{The Existence of \texorpdfstring{$\alpha (x)$}{alpha (x)} that Achieves the Trade-Off} \label{sec:alpha(x)}

The following theorem shows that, under realistic conditions, there exists a function $\alpha (\cdot)$ that makes the combined classifier correct whenever either $g (\cdot)$ and $h (\cdot)$ makes the correct prediction, which further implies that the combined classifier matches the clean accuracy of $g (\cdot)$ and the attacked accuracy of $h (\cdot)$.

\begin{theorem} \label{THM:ADA}
    Let $\epsilon> 0$, $(x_1,y_1),(x_2,y_2)\sim \gD$, and $y_1\ne y_2$ (i.e., each input corresponds to a unique true label).
    Assume that $h_i(\cdot)$, $\|\nabla h_i(\cdot)\|_{p*}$, and $\|\nabla g_i(\cdot)\|_{p*}$ are all bounded and that there does not exist $z \in \sR^d$ such that $\norm{z - x_1}_p \leq \epsilon$ and $\norm{z - x_2}_p \leq \epsilon$.
    Then, there exists a function $\alpha (\cdot)$ such that the assembled classifier $\halpha (\cdot)$ satisfies \vspace{-.3mm}
    \begin{align*}
        \sP_{\substack{(x, y) \sim \gD \\ \delta \sim \gF}} \Big[ \argmax_{i \in [c]} \hialpha (x+\delta) = y \Big] 
        \geq \max \begin{Bmatrix}
            \sP_{(x, y) \sim \gD, \delta \sim \gF} \big[ \argmax_{i \in [c]} g_i (x+\delta) = y \big], \\
            \sP_{(x, y) \sim \gD, \delta \sim \gF} \big[ \argmax_{i \in [c]} h_i (x+\delta) = y \big]
        \end{Bmatrix},
    \end{align*}
    where $\gF$ is an arbitrary distribution that satisfies $\sP_{\delta \sim \gF} \big[ \norm{\delta}_p > \epsilon \big] = 0$.
\end{theorem}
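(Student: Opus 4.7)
The plan is to construct $\alpha(\cdot)$ pointwise so that the mixed classifier defaults to whichever base classifier happens to be correct at each perturbed input. The enabling fact is the disjoint-$\epsilon$-balls hypothesis: for every $z$ reachable as $x + \delta$ from some $(x, y) \in \mathrm{supp}(\gD)$ with $\norm{z-x}_p \le \epsilon$, the label $y$ is uniquely determined by $z$. (If $z$ lies in two data points' $\epsilon$-balls, they must share a label, else the hypothesis fails.) Call this induced label $\widehat{y}(z)$, defined on the reachable set $D_\epsilon \coloneqq \bigcup_{(x,y) \in \mathrm{supp}(\gD)} \{z \in \sR^d : \norm{z-x}_p \le \epsilon\}$; points outside $D_\epsilon$ carry zero mass under the joint draw, so $\alpha$ may be set arbitrarily there.

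First, I would observe that the boundary cases of the mixing formula \cref{eq:adap_sm_4} collapse to the base classifiers in argmax: $\alpha = 0$ gives $h_i^{0}(z) = \log \sigma \circ g_i(z)$ and $\alpha = 1$ gives $h_i^{1}(z) = \log \sigma \circ h_i(z)$. Since both softmax and logarithm preserve the ordering of coordinates, $\argmax_i h_i^{0}(z) = \argmax_i g_i(z)$ and $\argmax_i h_i^{1}(z) = \argmax_i h_i(z)$.

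Next, I would define
\[
    \alpha(z) = \begin{cases} 0 & \text{if } \argmax_{i \in [c]} g_i(z) = \widehat{y}(z), \\ 1 & \text{otherwise}, \end{cases}
\]
for $z \in D_\epsilon$, and $\alpha(z) = 0$ elsewhere. By the reduction above, $\halpha(z)$ is correct at any $z \in D_\epsilon$ at which either $g$ or $h$ is correct. Letting $E_g$, $E_h$, and $E_\alpha$ denote correctness of $g$, $h$, and $\halpha$ under the joint draw $(x,y) \sim \gD$, $\delta \sim \gF$, and noting that $\sP_{\delta \sim \gF}[\norm{\delta}_p > \epsilon] = 0$ forces $x+\delta \in D_\epsilon$ and $\widehat{y}(x+\delta) = y$ almost surely, the construction yields $\sI(E_\alpha) \ge \max\{\sI(E_g), \sI(E_h)\}$ almost surely. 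Taking expectations and pushing the max outside by monotonicity gives the stated probability bound.

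The only genuine obstacle is the housekeeping around well-definedness of $\widehat{y}(\cdot)$, which is exactly what the disjoint-balls hypothesis supplies; everything else reduces to checking two boundary values of $\alpha$ in \cref{eq:adap_sm_4}. The boundedness assumptions on $h_i$, $\norm{\nabla g_i(\cdot)}_{p*}$, and $\norm{\nabla h_i(\cdot)}_{p*}$ do not appear to be invoked by this pointwise construction; I suspect they are present to guarantee that the gradient-weighted formulations \cref{eq:adap_sm_1,eq:adap_sm_2,eq:adap_sm_3} admit a finite analogue of the same $\alpha$ assignment (via sufficiently extreme values of $\gamma$), so that the theorem's conclusion is also meaningful under those earlier parameterizations.
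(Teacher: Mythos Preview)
Your proposal is correct and follows essentially the same construction as the paper: define $\alpha(\cdot)$ to be $0$ where $g$ is correct and $1$ otherwise, using the disjoint-$\epsilon$-balls hypothesis to make the notion of ``correct'' well-defined at each perturbed point, then conclude that $\halpha$ inherits correctness from whichever base classifier is right. Your observation that the boundedness assumptions on $h_i$, $\|\nabla g_i\|_{p*}$, and $\|\nabla h_i\|_{p*}$ are not actually invoked is accurate---the paper's proof does not use them either.
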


\begin{proof}
Since it is assumed that the perturbation balls of the data are non-overlapping, the true label $y$ corresponding to each perturbed data $x+\delta$ with the property $\norm{\delta}_p \leq \epsilon$ is unique. Therefore, the indicator function \vspace{-.5mm}
\begin{align*}
    \alpha (x+\delta) = \bigg\{ \begin{array}{ll}
        0 & \text{if } \argmax_{i \in [c]} g_i (x+\delta) = y, \\
        1 & \text{otherwise},
    \end{array} \\[-8mm]
\end{align*}
satisfies that \vspace{-1.2mm}
\begin{align*}
    \alpha (x+\delta) = 1 \quad\; & \text{ if } \quad \argmax_{i \in [c]} g_i (x+\delta) \neq y \ \text{ and } \argmax_{i \in [c]} h_i (x+\delta) = y.
\end{align*}

\noindent Therefore, it holds that \vspace{-.6mm}
\begin{align*}
    \hialpha (x+\delta) = g_i (x+\delta) \quad\; & \text{ if } \quad \argmax_{i \in [c]} g_i (x+\delta) = y, \\[-1.2mm]
    \hialpha (x+\delta) = h_i (x+\delta) \quad\; & \text{ if } \quad \argmax_{i \in [c]} g_i (x+\delta) \neq y \ \text{ and } \argmax_{i \in [c]} h_i (x+\delta) = y, \\[-8.3mm]
\end{align*}
implying that \vspace{-1mm}
\begin{align*}
    & \argmax_{i \in [c]} \hialpha (x+\delta) = y \;\;\; \text{if} \;\;\; \Big( \argmax_{i \in [c]} g_i (x+\delta) = y \;\; \text{or} \; \argmax_{i \in [c]} h_i (x+\delta) = y \Big),
\end{align*}
which leads to the desired statement.	
\end{proof}

Note that the distribution $\gF$ is arbitrary, implying that the test data can be clean data, any type of adversarial data, or some combination of both.
As a special case, when $\gF$ is a Dirac measure at the origin, \Cref{THM:ADA} implies that the clean accuracy of $\halpha (\cdot)$ is as good as the standard classifier $g (\cdot)$.
Conversely, when $\gF$ is a Dirac measure at the worst-case perturbation, the adversarial accuracy of $\halpha (\cdot)$ is not worse than the robust model $h (\cdot)$, implying that if $h (\cdot)$ is inherently robust, then $\halpha (\cdot)$ inherits the robustness.
One can then conclude that there exists a $\halpha (\cdot)$ that matches the clean accuracy of $g (\cdot)$ and the robustness of $h (\cdot)$.

While \cref{THM:ADA} guarantees the existence of an instance of $\alpha (\cdot)$ that perfectly balances accuracy and robustness, finding an $\alpha (\cdot)$ that achieves this trade-off can be hard.
However, we will use experiments to show that an $\alpha (\cdot)$ represented by a neural network can retain most of the robustness of $h (\cdot)$ while greatly boosting the clean accuracy.
In particular, while we used the case of $\alpha (\cdot)$ being an indicator function to demonstrate the possibility of achieving the trade-off, \Cref{fig:compare_R} has shown that letting $\alpha$ take an appropriate value between $0$ and $1$ also improves the trade-off.
Thus, the task for the neural approximator is easier than representing the indicator function.
Also note that if certified robustness is desired, one can enforce a lower bound on $\alpha (\cdot)$ and take advantage of \cref{thm:certified_radius} while still enjoying the mitigated trade-off.

\subsection{Attacking the Adaptive Classifier} \label{sec:adaptive_attack}

When the combined model $\halpha (\cdot)$ is under adversarial attack, the function $\alpha (\cdot)$ provides an addition gradient flow path.
Intuitively, the attack should be able to force $\alpha$ to be small through this additional gradient path, tricking the mixing network into favoring the non-robust $g (\cdot)$.
Following the guidelines for constructing adaptive attacks \citep{Tramer20}, in the experiments, we consider the following types of attacks:
\begin{enumerate}[label=\textbf{\Alph*}, leftmargin=8mm]
\setlength\itemsep{1pt}
	\item \textbf{Gray-box PGD$_{20}$:} The adversary has access to the gradients of $g (\cdot)$ and $h (\cdot)$ when performing first-order optimization, but is not given the gradient of the mixing network $\alpha (\cdot)$.
	We consider untargeted PGD attack with a fixed initialization.
	\item \textbf{White-box PGD$_{20}$:} Since the mixed classifier is end-to-end differentiable, we follow \citep{Tramer20} and allow the attack to query end-to-end gradient, including that of the mixing network.
	\item \textbf{White-box AutoAttack:} AutoAttack is a stronger and more computationally expensive attack formed by an ensemble of four attack algorithms \citep{Croce20a}.
	It considers Auto-PGD (APGD) attacks with the untargeted cross-entropy loss and the targeted Difference of Logits Ratio loss, in addition to the targeted FAB attack and the black-box Square attack (SA) \citep{Andriushchenko20}.
	Again, the end-to-end mixed classifier gradient is available to the adversary.
	\item \textbf{Adaptive white-box AutoAttack:} Since the mixing network is a crucial component of the defense, we add an APGD loss component that aims to decrease $\alpha$ into AutoAttack to specifically target the mixing network.
\end{enumerate}

We will show that the adaptively smoothed model is robust against the attack that it is trained against. When trained using untargeted and targeted APGD$_{75}$ attacks, our model becomes robust against AutoAttack while noticeably improving the accuracy-robustness trade-off. In \Cref{sec:alpha_analysis}, we additionally consider evaluating with transfer attacks.

\subsection{The Mixing Network} \label{sec:mixing_network}

In practice, we use a neural network $\alpha_\theta (\cdot) : \sR^d \to [0, 1]$ to learn an effective mixing network that adjusts the outputs of $g (\cdot)$ and $h (\cdot)$.
Here, $\theta$ represents the trainable parameters of the mixing network, and we refer to $\alpha_\theta (\cdot)$ as the ``mixing network''.
To enforce an output range constraint, we apply a Sigmoid function to the mixing network output.
Note that when training the mixing network $\alpha_\theta (\cdot)$, the base classifiers $g (\cdot)$ and $h (\cdot)$ are frozen.
Freezing the base classifiers allows the mixed classifier to take advantage of existing accurate models and their robust counterparts, maintaining explainability and avoiding unnecessary feature distortions that the adversary can potentially exploit.

The mixing network's task of treating clean and attacked inputs differently is closely related to adversary detection. To this end, we adapt the detection architecture introduced in \citep{Metzen17} for our mixing network.
This architecture achieves high performance and low complexity, and is end-to-end differentiable, enabling convenient training and evaluation.
While \citep{Carlini17b} argued that simultaneously attacking the base classifier and the adversary detector can bring the detection rate of the detection method proposed in \citep{Metzen17} to near zero, we show that with several key modifications, the method is effective even against strong white-box attacks.
Specifically, our mixing network $\alpha_\theta (\cdot)$ takes advantage of both base models $g (\cdot)$ and $h (\cdot)$ by concatenating their intermediate features (\citep{Metzen17} only used one base model).
More importantly, we include stronger adaptive adversaries during training to generate much more diverse training examples.

\begin{figure*}
\vspace{-1mm}
    \centering
    \includegraphics[width=\textwidth]{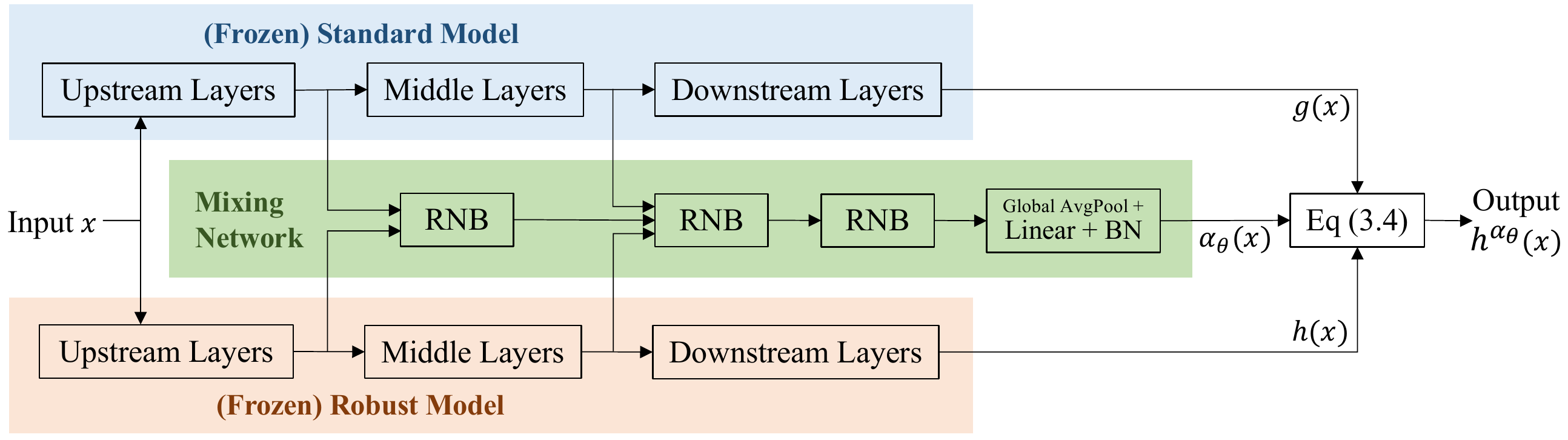}
    \caption{The overall architecture of the adaptively smoothed classifier introduced in \Cref{sec:ada_smo}. ``RNB'' stands for ResNetBlock and ``BN'' represents the 2D batch normalization layer.}
    \label{fig:mixing_arch}
    \vspace{-1mm}
\end{figure*}

The mixing network structure is based on a ResNet-18, which is known to perform well for a wide range of computer vision applications and is often considered the go-to architecture.
We make some minimal necessary changes to ResNet-18 for it to fit into our framework.
Specifically, as the mixing network takes information from both $g (\cdot)$ and $h (\cdot)$, it uses the concatenated embeddings from the base classifiers.
While \citep{Metzen17} considers a single ResNet as the base classifier and uses the embeddings after the first ResNet block, to avoid the potential vulnerability against ``feature adversaries'' \citep{Sabour15}, we consider the embeddings from two different layers of the base model.
\Cref{fig:mixing_arch} demonstrates the modified architecture.
The detailed implementations used in the experiment section are discussed in \Cref{sec:mixing_arch_rn}.

Since \Cref{fig:compare_R} shows that even a constant $\alpha$ can alleviate the accuracy-robustness trade-off, our method does not excessively rely on the performance of the mixing network $\alpha_\theta (\cdot)$.
In \Cref{sec:ada_exp}, we provide empirical results demonstrating that the above modifications help the overall mixed network defend against strong attacks.

\subsection{Training the Mixing Network} \label{sec:train_mixing_network}

Consider the following two loss functions for training the mixing network $\alpha_\theta (\cdot)$:
\begin{itemize}[leftmargin=8mm]
\setlength\itemsep{1pt}
    \item \textbf{Multi-class cross-entropy:} We minimize the multi-class cross-entropy loss of the combined classifier, which is the ultimate goal of the mixing network:
    \vspace{-.3mm}
    \begin{equation} \label{eq:unsup_opt}
        \min_\theta \E_{\substack{(x, y) \sim \gD \\ \delta \sim \gF}} \Big[ \lCE \big( \halphatheta (x+\delta), y \big) \Big],
    \end{equation}
    where $\lCE$ is the cross-entropy (CE) loss for logits and $y \in [c]$ is the label corresponding to $x$.
    The base classifiers $g (\cdot)$ and $h (\cdot)$ are frozen and not updated.
    Again, $\delta$ denotes the perturbation, and the distribution $\gF$ is arbitrary.
    In our experiments, to avoid overfitting to a particular attack radius, $\gF$ is formed by perturbations with randomized radii.  
    
    \item \textbf{Binary cross-entropy:} The optimal $\alpha^\star$ that minimizes $\lCE$ in \cref{eq:unsup_opt} can be estimated for each training point.
    Specifically, depending on whether the input is attacked and how it is attacked, either $g (\cdot)$ or $h (\cdot)$ should be prioritized.
    Thus, we treat the task as a binary classification problem and solve the optimization problem
    \vspace{-1.4mm}
    \begin{equation*}
        \min_\theta \E_{\substack{(x, y) \sim \gD \\ \delta \sim \gF}} \Big[ \lBCE \big( \alpha_\theta (x+\delta), \widetilde{\alpha} \big) \Big],
    \end{equation*}
    where $\lBCE$ is the binary cross-entropy (BCE) loss for probabilities and $\widetilde{\alpha} \in \{0, 1\}$ is the ``pseudo label'' for the output of the mixing network that approximates $\alpha^\star$.
\end{itemize}
\vspace{1mm}

Using only the multi-class loss suffers from a distribution mismatch between training and test data.
Specifically, the robust classifier $h (\cdot)$ may achieve a low loss on adversarial training data but a high loss on test data.
For example, with our ResNet-18 robust CIFAR-10 classifier, the PGD$_{10}$ adversarial training and test accuracy are very different, at $93.01 \%$ and $45.55 \%$ respectively.
As a result, approximating \cref{eq:unsup_opt} with empirical risk minimization on training data does not effectively optimize the true risk.
To understand this, notice that when the adversary perturbs a test input $x$ targeting $h (\cdot)$, the standard classifier prediction $g (x)$ yields a lower loss than $h (x)$.
However, if $x$ is an attacked example in the training set, then $g (x)$ and $h (x)$ have similar losses, and the mixing network does not receive an incentive to choose $g (\cdot)$ when detecting an attack targeting $h (\cdot)$.

The binary loss, on the other hand, does not capture the potentially different sensitivity of each input.
Certain inputs can be more vulnerable to adversarial attacks, and ensuring the correctness of the mixing network on these inputs is more crucial.

To this end, we combine the above two components into a composite loss function, incentivizing the mixing network to select the standard classifier $g (\cdot)$ when appropriate, while forcing it to remain conservative. The composite loss for each data-label pair $(x, y)$ is
\vspace{-.6mm}
\begin{align} \label{eq:comp_loss}
    \lcomp \big( \theta, ( x, y, \widetilde{\alpha} ) \big) = 
    \cCE \cdot \lCE \big( \halphatheta (x+\delta), y \big) + \cBCE \cdot \lBCE & \big( \alpha_\theta (x+\delta), \widetilde{\alpha} \big) \\
    \qquad + \cprod \cdot \lCE \big( \halphatheta (x+\delta), y \big) \cdot \lBCE & \big( \alpha_\theta (x+\delta), \widetilde{\alpha} \big), \nonumber
    \vspace{-2mm}
\end{align}
where the hyperparameters $\cCE, \cBCE$, and $\cprod$ control the weights of the loss components. \Cref{sec:loss_abla} in the supplemental materials discusses how these hyperparameters affect the performance of the trained mixing model.

\section{Numerical Experiments} \label{sec:experiments}

\subsection{Robust Neural Network Smoothing with a Fixed Strength} \label{sec:exp_CNN_fix}

We first consider the case where the smoothing strength $\alpha$ is a fixed value.
In this section, we focus on using empirically robust base classifiers and consider the CIFAR-10 dataset.
In \Cref{sec:certified_exp} in the supplemental materials, we present the certified robustness results when the robust base model is based on randomized smoothing, simultaneously instantiating the \Cref{lem:certified_radius} and \Cref{thm:certified_radius}.
In \Cref{sec:est_lip}, we show that empirically robust models can also take advantage of our theoretical analyses by estimating their Lipschitz constant.

\subsubsection{$\alpha$'s Influence on Mixed Classifier Robustness} \label{sec:alpha_analysis}

We first analyze how the accuracy of the mixed classifier changes with the mixing strength $\alpha$ under various settings.
Specifically, we consider PGD$_{20}$ attacks that target $g (\cdot)$ and $h (\cdot)$ individually (denoted as STD and ROB attacks), in addition to the adaptive PGD$_{20}$ attack generated using the end-to-end gradient of $\halpha (\cdot)$, denoted as the MIX attack.
Note that the STD and ROB attacks, which share the inspiration of \citep{Gao22}, correspond to the ``transfer attack'' setting, a common black-box attack strategy designed for defenses with unavailable or unreliable gradients.
Note that the models with the best transferability with the mixed classifier $\halpha (\cdot)$ would likely be its base classifiers $g (\cdot)$ and $h (\cdot)$, precisely corresponding to the STD and ROB attack settings.

We use a ResNet18 model trained on clean data as the standard base classifier $g (\cdot)$ and use another ResNet18 trained on PGD$_{20}$ data as the robust base classifier $h (\cdot)$.
The test accuracy corresponding to each $\alpha$ value is presented in \Cref{fig:STD+ROB}.
As $\alpha$ increases, the clean accuracy of $\halpha (\cdot)$ converges from the clean accuracy of $g (\cdot)$ to the clean accuracy of $h (\cdot)$.
In terms of the attacked performance, when the attack targets $g (\cdot)$, the attacked accuracy increases with $\alpha$.
When the attack targets $h (\cdot)$, the attacked accuracy decreases with $\alpha$, showing that the attack targeting $h (\cdot)$ becomes more benign when the mixed classifier emphasizes $g (\cdot)$.
When the attack targets $\halpha (\cdot)$, the attacked accuracy increases with $\alpha$.

When $\alpha$ is around $0.5$, the MIX-attacked accuracy of $\halpha (\cdot)$ quickly increases from near zero to more than $30 \%$ (which is two-thirds of $h (\cdot)$'s attacked accuracy).
This observation precisely matches the theoretical intuition provided by \Cref{thm:certified_radius}.
When $\alpha$ is greater than $0.5$, the clean accuracy gradually decreases at a much slower rate, leading to the noticeably alleviated accuracy-robustness trade-off.
Note that this improved trade-off is achieved without any further training beyond the weights of $g (\cdot)$ and $h (\cdot)$.
When $\alpha$ is greater than $0.55$, neither STD attack nor ROB attack can reduce the accuracy of the mixed classifier below the end-to-end gradient-based attack (MIX attack), indicating that the considered transfer attack is weaker than gradient-based attack for practical $\alpha$ values, and implying that the robustness of $\halpha (\cdot)$ does not rely on obfuscated gradients.
In \Cref{sec:conf_properties}, we will reveal that the source of $\halpha (\cdot)$'s robustness lies in $h (\cdot)$'s well-calibrated confidence properties.

\begin{figure}[t]
\centering
\begin{minipage}{.5\textwidth}
	\centering
	\vspace{-1.2mm}
    \includegraphics[width=\textwidth, height=.67\textwidth, trim={4.5mm 5mm 2.5mm 4mm}, clip] {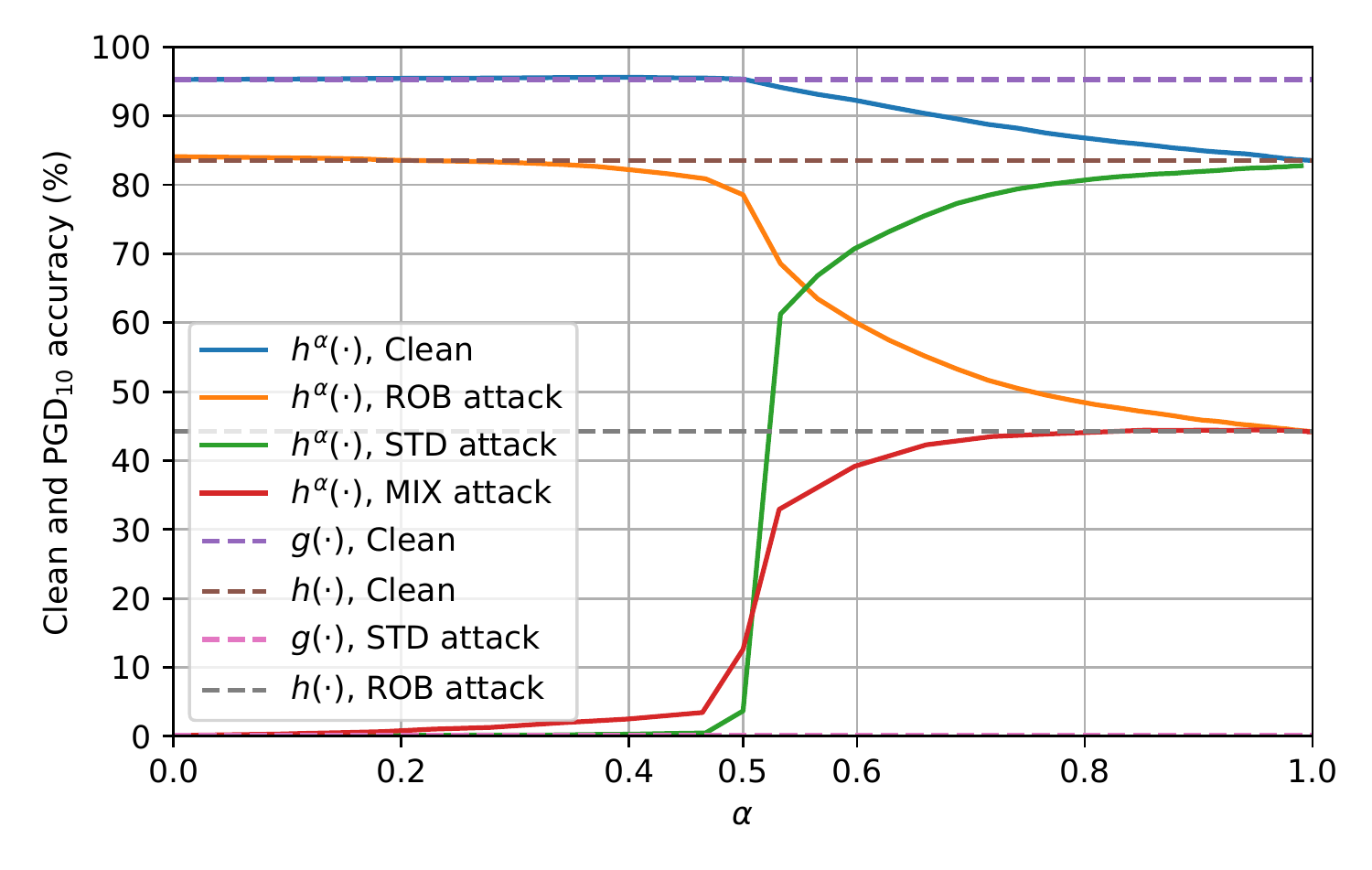}
    \caption{The performance of the mixed classifier $\halpha (\cdot)$. ``STD attack'', ``ROB attack'', and ``MIX attack'' refer to the PGD$_{20}$ attack generated using the gradient of $g (\cdot)$, $h (\cdot)$, and $\halpha (\cdot)$ respectively, with $\epsilon$ set to $\nicefrac{8}{255}$.}
    \label{fig:STD+ROB}
\end{minipage}
\hfill
\begin{minipage}{.47\textwidth}
	\centering
	\includegraphics[width=\textwidth, height=.62\textwidth, trim={4.5mm 5mm 2.5mm 4mm}]{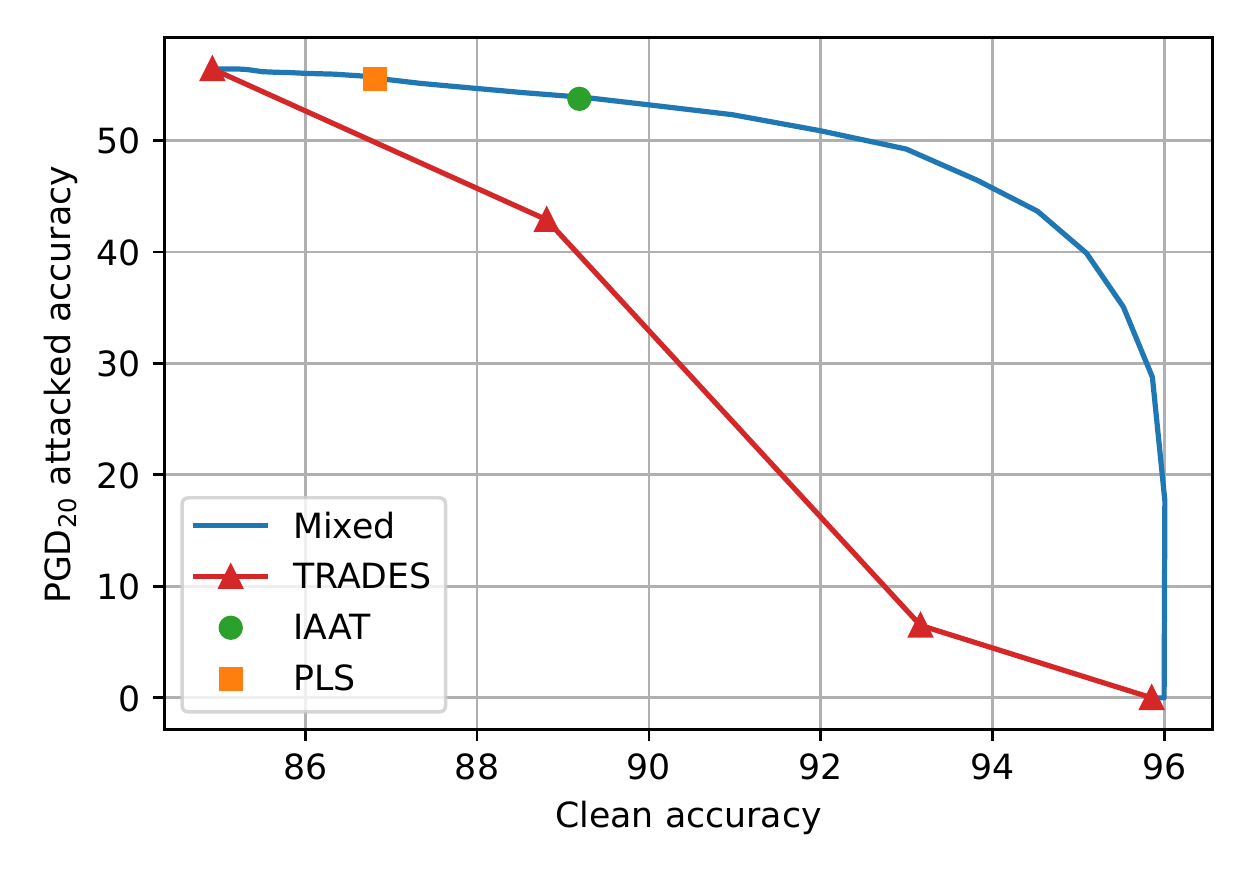}
	\caption{An accuracy-robustness trade-off comparison between our mixed classifier $\halpha (\cdot)$, denoted as ``Mixed'', and TRADES, IAAT, and PLS models.
	TRADES allows for sweeping between accuracy and robustness whereas IAAT and PLS are non-adjustable.}
	\label{fig:compare_w_trades}
\end{minipage}
\end{figure}

\subsubsection{The Relationship between $\halpha (\cdot)$'s Robustness and $h (\cdot)$'s Confidence} \label{sec:conf_properties}

Our theoretical analysis (\cref{lem:certified_radius}) has highlighted the relationship between the mixed classifier robustness and the robust base classifier $h (\cdot)$'s robust margin.
For practical models, the margin at a given radius can be estimated with the confidence gap between the predicted and runner-up classes evaluated on strongly adversarial inputs, such as images returned from PGD$_{20}$ or AutoAttack.
Moreover, the improved accuracy-robustness trade-off of the mixed classifier, as evidenced by the difference in how clean and attacked accuracy change with $\alpha$ in \Cref{fig:STD+ROB}, can also be explained by the prediction confidence of $h (\cdot)$.

According to \cref{tab:confidence}, the robust base classifier $h (\cdot)$ makes confident correct predictions even when under attack (average robust margin is $0.768$ evaluated with PGD$_{20}$ and $0.774$ with AutoAttack\footnote{The calculation details the AutoAttacked confidence gap are presented in \Cref{sec:autoattack_margin}.}).
Moreover, the robust margin of $h (\cdot)$ follows a long-tail distribution. Specifically, the median robust margin is $0.933$ (same number when evaluated with PGD$_{20}$ or AutoAttack), much larger than the $0.768/0.774$ average margin.
Thus, most attacked inputs correctly classified by $h (\cdot)$ are highly confident (i.e., robust with large margins), with only a tiny portion suffering from small robust margins.
As \cref{lem:certified_radius} suggests, such a property is precisely what adaptive smoothing relies on.
Intuitively, once $\alpha$ becomes greater than $0.5$ and gives $h (\cdot)$ more authority over $g (\cdot)$, $h (\cdot)$ can use its high confidence to correct $g (\cdot)$'s mistakes under attack.

On the other hand, $h (\cdot)$ is unconfident when it produces incorrect predictions on unattacked clean data, with the top two classes' output probabilities separated by merely $0.434$.
This probability gap again forms a long-tail distribution (the median is $0.378$ which is less than the mean), confirming that $h (\cdot)$ is generally unconfident when mispredicting and rarely makes confident incorrect predictions.
Now, consider clean data that $g (\cdot)$ correctly classifies and $h (\cdot)$ mispredicts.
Recall that we assume $g (\cdot)$ to be more accurate but less robust, so this scenario should be common.
Since $g (\cdot)$ is confident (average top two classes probability gap is $0.982$) and $h (\cdot)$ is usually unconfident, even when $\alpha > 0.5$ and $g (\cdot)$ has less authority than $h (\cdot)$ in the mixture, $g (\cdot)$ can still correct some of the mistakes from $h (\cdot)$.

In summary, $h (\cdot)$ is confident when making correct predictions on attacked data, enjoying the large robust margin required by \cref{lem:certified_radius}.
At the same time, $h (\cdot)$ is unconfident when misclassifying clean data, and such a confidence property is the key source of the mixed classifier's improved accuracy-robustness trade-off.
Additional analyses in \Cref{sec:more_compare_R} with alternative base models imply that multiple existing robust classifiers share the favorable confidence property and thus help the mixed classifier improve the trade-off.

The standard non-robust classifier $g (\cdot)$ often does not have this desirable property: even though it is confident on clean data as are robust classifiers, it also makes highly confident mistakes under attack.
Note that this does not undermine the mixed classifier robustness, since our formulation does not assume any robustness or smoothness from $g (\cdot)$.

\begin{table}[!tb]
	\vspace{-3mm}
	\aboverulesep=.2ex \belowrulesep=.4ex
	\caption{Average gap between the probabilities of the predicted class and the runner-up class. $g (\cdot)$ and $h (\cdot)$ are the same ones used in \Cref{fig:STD+ROB}. The confidence difference highlighted by the bold numbers is crucial to the mitigated accuracy-robustness trade-off of the mixed classifier.}
	\begin{small}
	\begin{center}
	\begin{tabular}{l!{\vrule width 2pt}c|c|c!{\vrule width 2pt}c|c|c!{\vrule width 2pt}c|c|c}
		\toprule
		& \multicolumn{3}{c!{\vrule width 2pt}}{Clean} & \multicolumn{3}{c!{\vrule width 2pt}}{PGD$_{20}$} & \multicolumn{3}{c}{AutoAttack} \\
		& Accuracy & \cmark\ Gap & \xmark\ Gap & Accuracy & \cmark\ Gap & \xmark\ Gap & Accuracy & \cmark\ Gap & \xmark\ Gap \\
		\midrule
		$g (\cdot)$ & $95.28 \%$ 		& $0.982$ 	& $0.698$			& $0.10 \%$
					& $0.602$			& $0.998$	& $0.00 \%$			& $-$
					& $0.986$ \\
		$h (\cdot)$ & $83.53 \%$ 		& $0.854$ 	& $\mathbf{0.434}$	& $44.17 \%$	
					& $\mathbf{0.768}$	& $0.635$	& $40.75 \%$			& $\mathbf{0.774}$
					& $0.553$ \\
		\bottomrule
	\end{tabular}
	\end{center}
	\vspace{1.5mm}
	\cmark\ Gap: The average gap between the confidences of the predicted class and the runner-up class among all correctly predicted validation data. \\
	\xmark\ Gap: The same quantity evaluated among all incorrectly predicted validation data.
	\end{small}
	\label{tab:confidence}
\end{table}

\subsubsection{Comparing the Accuracy-Robustness Trade-Off with Existing Methods} \label{sec:compare_with_trades}

This subsection compares the accuracy-robustness trade-off of the mixed classifiers with existing baseline methods that emphasize addressing this trade-off.

TRADES \citep{Zhang19} is one of the most famous and popular methods to improve the accuracy-robustness trade-off. Specifically, it trains robust models by minimizing the risk function \vspace{-.8mm}
\begin{equation*}
	\E_{(x, y) \sim \gD} \Big[ \lCE \big( h (x), y \big) + \beta \max_{\norm{\delta} \leq \epsilon} \lsurr \big( h (x+\delta), h (x) \big) \Big], \vspace{-.8mm}
\end{equation*}
where $\beta \geq 0$ is a trade-off parameter between the two loss components and $\lsurr$ is the ``surrogate loss'' that promotes robustness. The larger $\beta$ is, the more robust the trained model becomes at the expense of clean accuracy. By adjusting $\beta$, we can adjust the accuracy-robustness trade-off of TRADES similarly to adjusting $\alpha$ in our mixed classifier.

The authors of \citep{Zhang19} reported that $\beta = 6$ optimized the adversarial robustness and released the corresponding model.
We use this model and train three additional models with $\beta$ set to $0$, $0.1$, and $0.3$.
Here, $\beta = 0$ is standard training, and the other two numbers were chosen so that the model accuracy spreads relatively uniformly between $\beta = 0$ and $\beta = 6$.
All TRADES models use the WideResNet-34-10 architecture as in \citep{Zhang19}.
For a fair comparison, we build mixed classifiers using the TRADES model trained with $\beta = 0$ as $g (\cdot)$ and the $\beta = 6$ model as $h (\cdot)$.
We compare the relationship between the PGD$_{20}$ accuracy and the clean accuracy in \Cref{fig:compare_w_trades}.
Note that the trade-off curve of the mixed classifier intercepts the TRADES curve at the two ends (since the models are exactly the same at the two ends), and is significantly above the TRADES in the middle, indicating that the accuracy-robustness trade-off of the mixed classifier is much more benign than TRADES's.

IAAT \citep{Balaji19} and Properly Learned Smoothening (PLS) \citep{Chen20a} are two additional high-perfor-mance methods for alleviating the accuracy-robustness trade-off. Specifically, IAAT uses input-dependent attack budgets during adversarial training, while PLS performs stochastic weight averaging and smooths the logits via knowledge distillation and self-training. IAAT and PLS do not explicitly allow for adjusting between clean accuracy and adversarial robustness.

We implement IAAT on the same WideResNet-34-10 model architecture and add the result to \cref{fig:compare_w_trades}.
For PLS, we use the accuracy reported in \citep{Chen20a}.
It can be observed that the TRADES-based mixed classifier achieves a similar accuracy-robustness trade-off as IAAT and PLS, while allowing for sweeping between accuracy and robustness conveniently unlike previous models.
Note that for TRADES, adjusting the trade-off requires training a new model, which is costly.
Meanwhile, IAAT and PLS do not allow for explicitly adjusting the trade-off altogether (hence shown as single points in \Cref{fig:compare_w_trades}).
In contrast, for our mixing classifier, the trade-off can be adjusted at inference time by simply tuning $\alpha$ and does not require re-training.
Thus, our method is much more flexible and efficient while achieving a benign Pareto curve. 

Even though the clean-robust accuracy curve of adaptive smoothing overlaps with that of IAAT at a single point ($89.19 \%$ clean, $53.73 \%$ robust), adaptive smoothing still improves the overall accuracy-robustness trade-off.
Specifically, on top of IAAT's result, adaptive smoothing can further reduce the error rate by $31 \%$ while only sacrificing $6 \%$ of the robustness by achieving $\about 50 \% / \about 92.5 \%$ robust/clean accuracy.
In scenarios that are more sensitive to clean data performance, such a result makes adaptive smoothing more advantageous than IAAT, whose level of clean accuracy improvement is relatively limited.

Moreover, as discussed in \Cref{sec:intro} and confirmed in \Cref{sec:adap_exp_compare}, our mixed classifier can easily incorporate existing innovations that improve clean accuracy or adversarial robustness, whereas fusing these innovations into training-based methods such as TRADES, IAAT, and PLS can be much more complicated.
Also note that \Cref{fig:compare_w_trades} considers a constant $\alpha$ value, and adapting $\alpha$ for different input values further alleviates the trade-off.
To provide experimental evidence, in \Cref{fig:tradeoff_sota} in \Cref{sec:tradeoff_sota}, we add the mixed classifier results achieved with better base classifiers to the trade-off curve.

\subsection{Robust Neural Network Smoothing with Adaptive Strength} \label{sec:ada_exp}

Having validated the effectiveness of the mixing formulation described in \cref{eq:adap_sm_4}, we are now ready to incorporate the mixing network for adaptive smoothing strength.
As in \Cref{sec:ada_smo}, we denote the parameterized mixing network by $\alpha_\theta (\cdot)$, and slightly abuse notation by denoting the composite classifier with adaptive smoothing strength given by $\alpha_\theta (\cdot)$ by $\halphatheta (\cdot)$, which is defined by $\halphatheta (x) = h^{\alpha_\theta(x)} (x)$.

CIFAR-10 and CIFAR-100 are two of the most universal robustness evaluation datasets, and thus we use them to benchmark adaptive smoothing.
We consider $\ell_\infty$ attacks and use the AdamW optimizer \citep{Kingma15} for training the mixing network $\alpha_\theta (\cdot)$.
The training data for $\alpha_\theta (\cdot)$ include clean images and the corresponding types of attacked images (attack settings A, B, and C presented in \Cref{sec:adaptive_attack}).
For setting C (AutoAttack), the training data only includes targeted and untargeted APGD attacks, with the other two AutoAttack components, FAB and Square, excluded during training in the interest of efficiency but included for evaluation.
To alleviate overfitting, when generating training-time attacks, we randomize the attack radius and the number of steps, and add a randomly-weighted binary cross-entropy component that aims to decrease the mixing network output to the attack objective (thereby tricking it into favoring $g (\cdot)$).
Additionally, \Cref{sec:mixing_arch_rn} discusses the details of implementing the architecture in \Cref{fig:mixing_arch} for the ResNet base classifiers used in our experiments. \Cref{sec:loss_abla} conducts an ablation study on the hyperparameters in the composite loss function \cref{eq:comp_loss}.

\subsubsection{Ablation Studies Regarding Attack Settings} \label{sec:adap_exp_abla}

We first use smaller base classifiers to analyze the behavior of adaptive smoothing by exploring various training and attack settings.
The performance of the base models and the assembled mixed classifier are summarized in \cref{tab:cifar10}, where each column represents the performance of one mixed classifier.
The results show that the adaptive smoothing model can defend against the attacks on which the underlying mixing network is trained.
Specifically, for the attack setting A (gray-box PGD), $\halphatheta(\cdot)$ is able to achieve the same level of PGD$_{20}$-attacked accuracy as $h (\cdot)$ while retaining a similar level of clean accuracy as $g (\cdot)$.
For the setting B (white-box PGD), the attack is allowed to follow the gradient path provided by $\alpha_\theta (\cdot)$ and deliberately evade the part of the adversarial input space recognized by $\alpha_\theta (\cdot)$.
While the training task becomes more challenging, the improvement in the accuracy-robustness trade-off is still substantial.
Furthermore, the composite model can generalize to examples generated via the stronger AutoAttack.
For the setting C (AutoAttack), the difficulty of the training problem further escalates.
While the performance of $\halphatheta (\cdot)$ on clean data slightly decreases, the mixing network can offer a more vigorous defense against AutoAttack data, still improving the accuracy-robustness trade-off.

\cref{tab:cifar100} repeats the above analyses on the CIFAR-100 dataset.
The results confirm that adaptive smoothing achieves even more significant improvements on the CIFAR-100 dataset.
Notably, even for the most challenging attack setting C, $\halphatheta (\cdot)$ correctly classifies 1173 additional clean images compared with $h (\cdot)$ (cutting the error rate by a third) while making only 404 additional incorrect predictions on AutoAttacked inputs (increasing the error rate by merely 6.4 relative percent).
Such results show that $\alpha_\theta (\cdot)$ is capable of approximating a robust high-performance mixing network when trained with sufficiently diverse attacked data.
The fact that $\halphatheta (\cdot)$ combines the clean accuracy of $g (\cdot)$ and the robustness of $h (\cdot)$ highlights that our method significantly improves the accuracy-robustness trade-off.

\begin{table}[!tb]
	\centering
	\aboverulesep=.2ex \belowrulesep=.4ex
	\vspace{-3mm}
	\captionof{table}{CIFAR-10 results of adaptive smoothing models trained with three different settings.}
	\label{tab:cifar10}
	\vspace{-1.3mm}
	\begin{small}
		CIFAR-10 base classifier performances \\
		\begin{tabular}{l|l!{\vrule width 2pt}c|c|c}
			\toprule
			Model
			& \footnotesize{Architecture}
			& Clean			& PGD$_{20}$		& AutoAttack \\
			\midrule
			$g (\cdot)$ \scriptsize{(accurate)}
			& ResNet-18 \scriptsize{(Standard non-robust training)}
			& $95.28 \%$	& $0.12 \%$		& $0.00 \%$ \\
			$h (\cdot)$ \scriptsize{(robust)}
			& WideResNet-34-10 \scriptsize{(TRADES model \citep{Zhang19})}
			& $84.92 \%$ 	& $57.16 \%$		& $53.09 \%$ \\
			\bottomrule
		\end{tabular}
		
		\vspace{1.5mm}
		CIFAR-10 adaptive smoothing mixed classifier $\halphatheta (\cdot)$ performance \\
		\begin{tabular}{l!{\vrule width 2pt}c|c|c|c|c}
    		\toprule
    		Training Setting $\backslash$ Eval Data & Clean & \textbf{A} & \textbf{B} & \textbf{C} & \textbf{D} \scriptsize{(adaptive AutoAttack)} \\
    		\midrule
    		\textbf{A} \scriptsize{(gray-box PGD$_{20}$)}	& $92.05 \%$ & $57.22 \%$ & $56.63 \%$ & $40.04 \%$ & $39.85 \%$ \\
    		\textbf{B} \scriptsize{(white-box PGD$_{20}$)}	& $92.07 \%$ & $57.25 \%$ & $57.09 \%$ & $40.02 \%$ & $39.70 \%$ \\
    		\textbf{C} \scriptsize{(white-box AutoAttack)}	& $91.51 \%$ & $56.30 \%$ & $56.29 \%$ & $42.78 \%$ & $42.66 \%$ \\
    		\bottomrule
		\end{tabular}
	\end{small}
\end{table}

\begin{table}[!tb]
	\centering
	\aboverulesep=.2ex \belowrulesep=.4ex
	\vspace{-3.2mm}
	\captionof{table}{CIFAR-100 results of adaptive smoothing models trained with the three settings.}
	\label{tab:cifar100}
	\vspace{-1.3mm}
	\begin{small}
		CIFAR-100 base classifier performances \\
		\begin{tabular}{l|l!{\vrule width 2pt}c|c|c}
			\toprule
			Model & \footnotesize{Architecture} & Clean & PGD$_{20}$ & AutoAttack \\
			\midrule
			$g (\cdot)$ \scriptsize{(accurate)}	& ResNet-152 \scriptsize{(Based on BiT \citep{Kolesnikov20})}	& $91.38 \%$ & $0.14 \%$  & $0.00 \%$ \\
			$h (\cdot)$ \scriptsize{(robust)}	& WideResNet-70-16 \scriptsize{(From \citep{Gowal20})}		& $69.17 \%$ & $40.86 \%$ & $36.98 \%$ \\
			\bottomrule
		\end{tabular}
		
		\vspace{1.5mm}
		CIFAR-100 adaptive smoothing mixed classifier $\halphatheta (\cdot)$ performance \\
		\begin{tabular}{l!{\vrule width 2pt}c|c|c|c|c}
    		\toprule
    		Training Setting $\backslash$ Eval Data & Clean & \textbf{A} & \textbf{B} & \textbf{C} & \textbf{D} \scriptsize{(adaptive AutoAttack)} \\
    		\midrule
    		\textbf{A} \scriptsize{(gray-box PGD$_{20}$)}	& $83.99 \%$ & $40.04 \%$ & $30.59 \%$ & $23.54 \%$ & $23.78 \%$ \\
    		\textbf{B} \scriptsize{(white-box PGD$_{20}$)}	& $83.96 \%$ & $39.80 \%$ & $34.48 \%$ & $26.37 \%$ & $26.17 \%$ \\
    		\textbf{C} \scriptsize{(white-box AutoAttack)}	& $80.90 \%$ & $39.26 \%$ & $38.92 \%$ & $32.94 \%$ & $32.80 \%$ \\
    		\bottomrule
		\end{tabular}
	\end{small}
\end{table}

\subsubsection{Comparisons Against Existing SOTA Methods} \label{sec:adap_exp_compare}

\begin{table}[!tb]
\vspace{-3mm}
\aboverulesep=.2ex \belowrulesep=.4ex
\caption{Clean and AutoAttack (AA) accuracy of adaptive smoothing (AS) compared with the reported accuracy of previous models. AS clearly improves the accuracy-robustness trade-off.}
\label{tab:compare_cifar}
\vspace{-.5mm}

\begin{center}
	\normalsize{CIFAR-10} \\[1.2mm]
	\begin{minipage}{.54\textwidth}
	\centering
	\begin{footnotesize}
	\begin{tabular}{l!{\vrule width 2pt}c|c}
		\toprule
		Method											& Clean		 & AA \\
		\midrule
		AS (adaptive smoothing, ours) $^\star$			& $95.23 \%$ & $68.06 \%$ \\
		\midrule
		SODEF+TRADES \citep{Kang21}						& $93.73 \%$ & $71.28 \% \hspace{.4mm} ^\dagger \hspace{-1.8mm}$ \\
		Diffusion (EDM)+TRADES \citep{Wang23}			& $93.25 \%$ & $70.69 \%$ \\
		Diffusion (DDPM)+TRADES \citep{Rebuffi21}		& $92.23 \%$ & $66.58 \%$ \\
		TRADES XCiT-L12 \citep{Debenedetti22, Ali21}		& $91.73 \%$ & $57.58 \%$ \\
		Unlabeled data+TRADES \citep{Gowal20}			& $91.10 \%$ & $65.88 \%$ \\
		TRADES \citep{Gowal20}							& $85.29 \%$ & $57.20 \%$ \\
		\bottomrule
	\end{tabular}
	\vspace{1.5mm}
	\end{footnotesize}
	\end{minipage}
	\begin{minipage}{.45\textwidth}
		\centering
		\includegraphics[width=.98\textwidth]{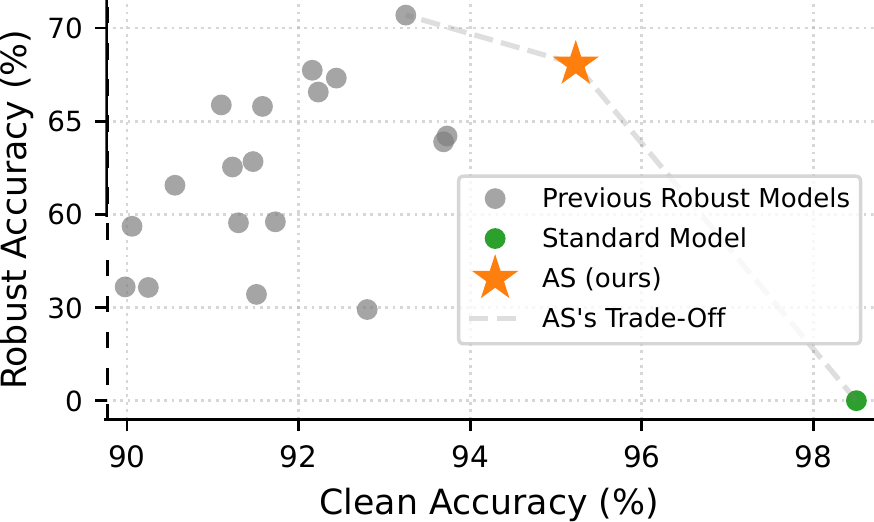}
		\vspace{-3mm}
	\end{minipage}
\end{center}
\vspace{3mm}

\small{$^\star$: Uses ``EDM + TRADES'' \citep{Wang23} as the robust base model $h (\cdot)$. $\hfill$} \\
\small{$^\dagger$: AutoAttack raises the ``potentially unreliable'' flag (explained in the next page), and adaptive attack} \\
\small{$\phantom{.}\hspace{2mm}$ reduces the attacked accuracy to $64.20 \%$. AutoAttack does not raise this flag for our models.}
\vspace{4mm}

\begin{center}
	\normalsize{CIFAR-100} \\[1.2mm]
	\begin{minipage}{.54\textwidth}
	\centering
	\begin{footnotesize}
	\begin{tabular}{l!{\vrule width 2pt}c|c}
		\toprule
		Method											& Clean		 & AA \\
		\midrule
		AS (adaptive smoothing, ours) $^\star$			& $85.21 \%$ & $38.72 \%$ \\
		AS (adaptive smoothing, ours) $^{\star \star}$	& $80.18 \%$ & $35.15 \%$ \\
		\midrule
		Diffusion (EDM)+TRADES \citep{Wang23}			& $75.22 \%$ & $42.67 \%$ \\
		Unlabeled data+TRADES \citep{Gowal20}			& $69.17 \%$ & $36.98 \%$ \\
		TRADES XCiT-L12 \citep{Debenedetti22, Ali21}		& $70.76 \%$ & $35.08 \%$ \\
		Diffusion (DDPM)+TRADES \citep{Rebuffi21}		& $63.56 \%$ & $34.64 \%$ \\
		SCORE Loss AT \citep{Pang22}						& $65.56 \%$ & $33.05 \%$ \\
		Diffusion (DDPM)+AT \citep{Sehwag22}				& $65.93 \%$ & $31.15 \%$ \\
		TRADES \citep{Gowal20}							& $60.86 \%$ & $30.03 \%$ \\
		\bottomrule
	\end{tabular}
	\end{footnotesize}
	\end{minipage}
	\begin{minipage}{.45\textwidth}
		\centering
		\includegraphics[width=.98\textwidth]{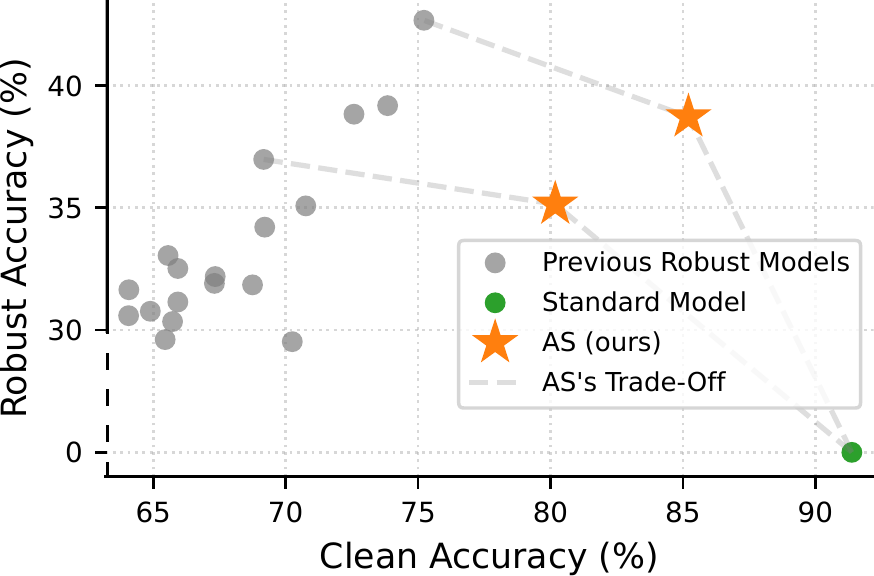}
		\vspace{-3mm}
	\end{minipage}
	\vspace{1mm}
\end{center}

\small{$^{\phantom{\star} \star}$: Uses ``EDM+TRADES'' \citep{Wang23} as the robust base model $h (\cdot)$. $\hspace{17.6mm}$} \\
\small{$^{\star \star}$: Uses ``Unlabeled data+TRADES'' \citep{Gowal20} as the robust base model $h (\cdot)$. $\hspace{4mm}$}
\end{table}

In this section, we use \cref{tab:compare_cifar} to show that when using SOTA base classifiers, adaptive smoothing noticeably improves the accuracy-robustness trade-off over existing methods.

Since the literature has regarded AutoAttack \citep{Croce20a} as one of the most reliable robustness evaluation methods (weaker attacks such as PGD are known to be circumventable), we select AutoAttack-evaluated robust models as baselines.
We highlight that these baseline models should not be treated as competitors, since advancements in building robust classifiers can be incorporated into our framework as $h (\cdot)$, helping adaptive smoothing perform even better.

For the accurate base classifier $g (\cdot)$, we fine-tune the BiT ResNet-152 checkpoint (from \citep{Kolesnikov20}, pre-trained on ImageNet-21k) on CIFAR-10 or CIFAR-100.
Following the recipe from \citep{Kolesnikov20}, our CIFAR-10 model achieves a $98.50 \%$ clean accuracy and our CIFAR-100 model achieves $91.38 \%$.

For CIFAR-10, we select the robust model checkpoint released in \citep{Wang23} as the robust base classifier $h (\cdot)$.
Compared with $h (\cdot)$, adaptive smoothing retains 96.3 (relative) percent of the robust accuracy while reducing the clean data error rate by 29.3 (relative) percent.
Among all models available on RobustBench as of submission, our method achieves the third highest AutoAttacked accuracy, only behind \citep{Wang23} (used as $h (\cdot)$ in our model) and \citep{Kang21} (for which AutoAttack is unreliable and the best-known attacked accuracy is lower than ours).
Meanwhile, the clean accuracy of our mixed classifier is higher than all listed models with non-trivial $\ell_\infty$ robustness and is even higher than the listed non-robust model that uses standard training.

While the above results demonstrate reconciled accuracy and robustness, the clean accuracy improvement over existing works may not seem highly prominent.
Note that our method is still highly effective in this setting, but its efficacy is not fully reflected in the numbers.
This is because SOTA robust base classifiers are already highly accurate on the easier CIFAR-10 dataset, almost matching standard models' clean accuracy \citep{Rebuffi21, Gowal20, Gowal21}, leaving not much room for improvements.
However, the accuracy-robustness trade-off remains highly penalizing for more challenging tasks such as CIFAR-100, for which existing robust models suffer significant accuracy degradation.
As existing methods for improving standard model accuracy may not readily extend to robust ones, training-based trade-off alleviation struggles on harder tasks, making it particularly advantageous to mix already-trained classifiers via adaptive smoothing.
We now support this claim with more significant improvements on CIFAR-100.

For CIFAR-100, we consider two robust base models and build two adaptive smoothing mixed classifiers.
Compared with their corresponding robust base models, both mixed classifiers improve the clean accuracy by ten percentage points while only losing four points in AutoAttacked accuracy.
As of the submission of this paper, the mixed classifier whose robust base model is from \citep{Wang23} achieved an AutoAttacked accuracy better than any other methods on RobustBench \citep{Croce20c}, except \citep{Wang23} itself.
Simultaneously, this mixed model offers a clean accuracy improvement of ten percentage points over any other listed models.
These results confirm that adaptive smoothing significantly alleviates the accuracy-robustness trade-off.

We also report that the SA component of AutoAttack, which performs gradient-free black-box attacks on images that gradient-based attack methods fail to perturb, only changes very few predictions.
Specifically, AutoAttack will raise a ``potentially unreliable'' flag if SA further reduces the accuracy by at least $0.2$ percentage points.
This flag is not thrown for our models in \cref{tab:compare_cifar}, indicating that the mixed classifiers' robustness is not a result of gradient obfuscation.
Thus, gradient-based attacks in AutoAttack sufficiently evaluate our models.

\section{Conclusions}

This paper proposes ``adaptive smoothing'', a flexible framework that leverages the mixture of the output probabilities from an accurate model and a robust model to mitigate the accuracy-robustness trade-off of neural classifiers.
We use theoretical and empirical observations to motivate our design, and mathematically prove that the resulting mixed classifier can inherit the robustness of the robust base model under realistic assumptions.
We then adapt an adversarial input detector into a (deterministic) mixing network, further improving the accuracy-robustness trade-off.
Solid empirical results confirm that our method can simultaneously benefit from the high accuracy of modern pre-trained standard (non-robust) models and the robustness achieved via SOTA robust classification methods. 

Because our theoretical studies demonstrate the feasibility of leveraging the mixing network to eliminate the accuracy-robustness trade-off, future advancements in adversary detection can further reconcile this trade-off via our framework.
Moreover, the proposed method conveniently extends to various robust base models and attack types/budgets.
Thus, this work paves the way for future research to focus on accuracy or robustness without sacrificing the other.

\vspace{2cm}
{\small
\bibliographystyle{siamplain}
\bibliography{journal}

\begin{thebibliography}{10}

\bibitem{Adam20}
{\sc G.~Adam and R.~Speciel}, {\em Evaluating ensemble robustness against adversarial attacks}, arXiv preprint arXiv:2005.05750,  (2020).

\bibitem{Ahmadi21}
{\sc M.~A. Ahmadi, R.~Dianat, and H.~Amirkhani}, {\em An adversarial attack detection method in deep neural networks based on re-attacking approach}, Multimedia Tools and Applications, 80 (2021), pp.~10985--11014.

\bibitem{Alam22}
{\sc M.~Alam, S.~Datta, D.~Mukhopadhyay, A.~Mondal, and P.~P. Chakrabarti}, {\em Resisting adversarial attacks in deep neural networks using diverse decision boundaries}, arXiv preprint arXiv:2208.08697,  (2022).

\bibitem{Aldahdooh21a}
{\sc A.~Aldahdooh, W.~Hamidouche, and O.~D{\'e}forges}, {\em Revisiting model's uncertainty and confidences for adversarial example detection}, Applied Intelligence, 53 (2023), pp.~509--531.

\bibitem{Aldahdooh21b}
{\sc A.~Aldahdooh, W.~Hamidouche, S.~A. Fezza, and O.~D{\'e}forges}, {\em Adversarial example detection for dnn models: A review and experimental comparison}, Artificial Intelligence Review, 55 (2022), pp.~4403--4462.

\bibitem{Ali21}
{\sc A.~Ali, H.~Touvron, M.~Caron, P.~Bojanowski, M.~Douze, A.~Joulin, I.~Laptev, N.~Neverova, G.~Synnaeve, J.~Verbeek, et~al.}, {\em {XCiT}: Cross-covariance image transformers}, in Annual Conference on Neural Information Processing Systems, 2021, pp.~20014--20027.

\bibitem{Anderson20}
{\sc B.~G. Anderson, Z.~Ma, J.~Li, and S.~Sojoudi}, {\em Tightened convex relaxations for neural network robustness certification}, in {IEEE} Conference on Decision and Control, 2020, pp.~2190--2197.

\bibitem{Anderson21b}
{\sc B.~G. Anderson and S.~Sojoudi}, {\em Certified robustness via locally biased randomized smoothing}, in Annual Learning for Dynamics and Control Conference, 2022, pp.~207--220.

\bibitem{Anderson21a}
{\sc B.~G. Anderson and S.~Sojoudi}, {\em Data-driven certification of neural networks with random input noise}, Transactions on Control of Network Systems,  (2022).

\bibitem{Andriushchenko20}
{\sc M.~Andriushchenko, F.~Croce, N.~Flammarion, and M.~Hein}, {\em Square attack: {A} query-efficient black-box adversarial attack via random search}, in European Conference on Computer Vision, 2020, pp.~484--501.

\bibitem{Athalye18b}
{\sc A.~Athalye, N.~Carlini, and D.~Wagner}, {\em Obfuscated gradients give a false sense of security: Circumventing defenses to adversarial examples}, in International Conference on Machine Learning, 2018, pp.~274--283.

\bibitem{Athalye18a}
{\sc A.~Athalye, L.~Engstrom, A.~Ilyas, and K.~Kwok}, {\em Synthesizing robust adversarial examples}, in International Conference on Machine Learning, 2018.

\bibitem{TBai21}
{\sc T.~Bai, J.~Luo, J.~Zhao, B.~Wen, and Q.~Wang}, {\em Recent advances in adversarial training for adversarial robustness}, in International Joint Conference on Artificial Intelligence, 2021, pp.~4312--4321.

\bibitem{Bai23a}
{\sc Y.~Bai, B.~G. Anderson, and S.~Sojoudi}, {\em Mixing classifiers to alleviate the accuracy-robustness trade-off}, in Annual Learning for Dynamics and Control Conference, 2024, pp.~852--865.

\bibitem{LGS}
{\sc Y.~Bai, U.~Garg, A.~Shanker, H.~Zhang, S.~Parajuli, E.~Bas, I.~Filipovic, A.~N. Chu, E.~D. Fomitcheva, E.~Branson, A.~Kim, S.~Sojoudi, and K.~Cho}, {\em {Let's Go Shopping (LGS)} -- web-scale image-text dataset for visual concept understanding}, arXiv preprint arXiv:2401.04575,  (2024).

\bibitem{Bai22a}
{\sc Y.~Bai, T.~Gautam, Y.~Gai, and S.~Sojoudi}, {\em Practical convex formulation of robust one-hidden-layer neural network training}, in American Control Conference, 2022, pp.~1535--1542.

\bibitem{Bai22b}
{\sc Y.~Bai, T.~Gautam, and S.~Sojoudi}, {\em Efficient global optimization of two-layer {ReLU} networks: quadratic-time algorithms and adversarial training}, SIAM Journal on Mathematics of Data Science, 5 (2023), pp.~446--474.

\bibitem{Balaji19}
{\sc Y.~Balaji, T.~Goldstein, and J.~Hoffman}, {\em Instance adaptive adversarial training: Improved accuracy tradeoffs in neural nets}, arXiv preprint arXiv:1910.08051,  (2019).

\bibitem{Borkar21}
{\sc J.~Borkar and P.-Y. Chen}, {\em Simple transparent adversarial examples}, arXiv preprint arXiv:2105.09685,  (2021).

\bibitem{Carlini22}
{\sc N.~Carlini, F.~Tramer, J.~Z. Kolter, et~al.}, {\em ({C}ertified!!) adversarial robustness for free!}, in International Conference on Learning Representations, 2022.

\bibitem{Carlini17b}
{\sc N.~Carlini and D.~Wagner}, {\em Adversarial examples are not easily detected: Bypassing ten detection methods}, in ACM Workshop on Artificial Intelligence and Security, 2017, pp.~3--14.

\bibitem{Carlini17a}
{\sc N.~Carlini and D.~A. Wagner}, {\em Towards evaluating the robustness of neural networks}, in {IEEE} Symposium on Security and Privacy, 2017, pp.~39--57.

\bibitem{Carrara19}
{\sc F.~Carrara, F.~Falchi, R.~Caldelli, G.~Amato, and R.~Becarelli}, {\em Adversarial image detection in deep neural networks}, Multimedia Tools and Applications, 78 (2019), pp.~2815--2835.

\bibitem{Chen20b}
{\sc T.~Chen, S.~Liu, S.~Chang, Y.~Cheng, L.~Amini, and Z.~Wang}, {\em Adversarial robustness: From self-supervised pre-training to fine-tuning}, in Conference on Computer Vision and Pattern Recognition, 2020, pp.~699--708.

\bibitem{Chen20a}
{\sc T.~Chen, Z.~Zhang, S.~Liu, S.~Chang, and Z.~Wang}, {\em Robust overfitting may be mitigated by properly learned smoothening}, in International Conference on Learning Representations, 2021.

\bibitem{Cheng22}
{\sc M.~Cheng, Q.~Lei, P.~Y. Chen, I.~Dhillon, and C.~J. Hsieh}, {\em {CAT}: Customized adversarial training for improved robustness}, in International Joint Conference on Artificial Intelligence, 2022, pp.~673--679.

\bibitem{Co22}
{\sc K.~T. Co, D.~Martinez-Rego, Z.~Hau, and E.~C. Lupu}, {\em Jacobian ensembles improve robustness trade-offs to adversarial attacks}, in Artificial Neural Networks and Machine Learning, 2022, pp.~680--691.

\bibitem{Cohen19}
{\sc J.~Cohen, E.~Rosenfeld, and Z.~Kolter}, {\em Certified adversarial robustness via randomized smoothing}, in International Conference on Machine Learning, 2019, pp.~1310--1320.

\bibitem{Croce20c}
{\sc F.~Croce, M.~Andriushchenko, V.~Sehwag, E.~Debenedetti, N.~Flammarion, M.~Chiang, P.~Mittal, and M.~Hein}, {\em {RobustBench}: a standardized adversarial robustness benchmark}, in Conference on Neural Information Processing Systems Datasets and Benchmarks Track (Round 2), 2021.

\bibitem{Croce22}
{\sc F.~Croce, S.~Gowal, T.~Brunner, E.~Shelhamer, M.~Hein, and T.~Cemgil}, {\em Evaluating the adversarial robustness of adaptive test-time defenses}, in International Conference on Machine Learning, 2022, pp.~4421--4435.

\bibitem{Croce20b}
{\sc F.~Croce and M.~Hein}, {\em Minimally distorted adversarial examples with a fast adaptive boundary attack}, in International Conference on Machine Learning, 2020, pp.~2196--2205.

\bibitem{Croce20a}
{\sc F.~Croce and M.~Hein}, {\em Reliable evaluation of adversarial robustness with an ensemble of diverse parameter-free attacks}, in International Conference on Machine Learning, 2020, pp.~2206--2216.

\bibitem{Debenedetti22}
{\sc E.~Debenedetti, V.~Sehwag, and P.~Mittal}, {\em A light recipe to train robust vision transformers}, in Conference on Secure and Trustworthy Machine Learning, 2023, pp.~225--253.

\bibitem{Fan21}
{\sc L.~Fan, S.~Liu, P.-Y. Chen, G.~Zhang, and C.~Gan}, {\em When does contrastive learning preserve adversarial robustness from pretraining to finetuning?}, in Annual Conference on Neural Information Processing Systems, 2021, pp.~21480--21492.

\bibitem{Fazlyab19}
{\sc M.~Fazlyab, A.~Robey, H.~Hassani, M.~Morari, and G.~Pappas}, {\em Efficient and accurate estimation of lipschitz constants for deep neural networks}, in Annual Conference on Neural Information Processing Systems, 2019.

\bibitem{Fursov22}
{\sc I.~Fursov, A.~Zaytsev, P.~Burnyshev, E.~Dmitrieva, N.~Klyuchnikov, A.~Kravchenko, E.~Artemova, E.~Komleva, and E.~Burnaev}, {\em A differentiable language model adversarial attack on text classifiers}, IEEE Access, 10 (2022), pp.~17966--17976.

\bibitem{Gao22}
{\sc X.~Gao, C.-Z. Xu, et~al.}, {\em {MORA}: Improving ensemble robustness evaluation with model reweighing attack}, in Annual Conference on Neural Information Processing Systems, 2022, pp.~26955--26965.

\bibitem{Goodfellow15}
{\sc I.~J. Goodfellow, J.~Shlens, and C.~Szegedy}, {\em Explaining and harnessing adversarial examples}, in International Conference on Learning Representations, 2015.

\bibitem{Gowal20}
{\sc S.~Gowal, C.~Qin, J.~Uesato, T.~Mann, and P.~Kohli}, {\em Uncovering the limits of adversarial training against norm-bounded adversarial examples}, arXiv preprint arXiv:2010.03593,  (2020).

\bibitem{Gowal21}
{\sc S.~Gowal, S.-A. Rebuffi, O.~Wiles, F.~Stimberg, D.~A. Calian, and T.~A. Mann}, {\em Improving robustness using generated data}, in Annual Conference on Neural Information Processing Systems, 2021, pp.~4218--4233.

\bibitem{Gowal19}
{\sc S.~Gowal, J.~Uesato, C.~Qin, P.-S. Huang, T.~Mann, and P.~Kohli}, {\em An alternative surrogate loss for {PGD}-based adversarial testing}, arXiv preprint arXiv:1910.09338,  (2019).

\bibitem{He22}
{\sc K.~He, X.~Chen, S.~Xie, Y.~Li, P.~Doll{\'a}r, and R.~Girshick}, {\em Masked autoencoders are scalable vision learners}, in Conference on Computer Vision and Pattern Recognition, 2022, pp.~16000--16009.

\bibitem{He16}
{\sc K.~He, X.~Zhang, S.~Ren, and J.~Sun}, {\em Deep residual learning for image recognition}, in Conference on Computer Vision and Pattern Recognition, 2016, pp.~770--778.

\bibitem{Hein17}
{\sc M.~Hein and M.~Andriushchenko}, {\em Formal guarantees on the robustness of a classifier against adversarial manipulation}, in Annual Conference on Neural Information Processing Systems, 2017.

\bibitem{Hu20}
{\sc T.~Hu, T.~Chen, H.~Wang, and Z.~Wang}, {\em Triple wins: Boosting accuracy, robustness and efficiency together by enabling input-adaptive inference}, in International Conference on Learning Representations, 2020.

\bibitem{Huang17}
{\sc S.~H. Huang, N.~Papernot, I.~J. Goodfellow, Y.~Duan, and P.~Abbeel}, {\em Adversarial attacks on neural network policies}, in International Conference on Learning Representations, 2017.

\bibitem{Ilyas18}
{\sc A.~Ilyas, L.~Engstrom, A.~Athalye, and J.~Lin}, {\em Black-box adversarial attacks with limited queries and information}, in International Conference on Machine Learning, 2018, pp.~2137--2146.

\bibitem{Jia22}
{\sc X.~Jia, Y.~Zhang, B.~Wu, K.~Ma, J.~Wang, and X.~Cao}, {\em {LAS-AT}: Adversarial training with learnable attack strategy}, in Conference on Computer Vision and Pattern Recognition, 2022, pp.~13398--13408.

\bibitem{Jordan20}
{\sc M.~Jordan and A.~G. Dimakis}, {\em Exactly computing the local {L}ipschitz constant of {ReLU} networks}, in Annual Conference on Neural Information Processing Systems, 2020, pp.~7344--7353.

\bibitem{Kang21}
{\sc Q.~Kang, Y.~Song, Q.~Ding, and W.~P. Tay}, {\em Stable neural {ODE} with {Lyapunov}-stable equilibrium points for defending against adversarial attacks}, in Annual Conference on Neural Information Processing Systems, 2021, pp.~14925--14937.

\bibitem{Kingma15}
{\sc D.~P. Kingma and J.~Ba}, {\em {A}dam: {A} method for stochastic optimization}, in International Conference on Learning Representations, 2015.

\bibitem{Kolesnikov20}
{\sc A.~Kolesnikov, L.~Beyer, X.~Zhai, J.~Puigcerver, J.~Yung, S.~Gelly, and N.~Houlsby}, {\em Big transfer ({BiT}): General visual representation learning}, in European Conference on Computer Vision, 2020, pp.~491--507.

\bibitem{cifar10}
{\sc A.~Krizhevsky}, {\em Learning multiple layers of features from tiny images}.
\newblock \url{https://www.cs.toronto.edu/~kriz/learning-features-2009-TR.pdf}, 2009.

\bibitem{Kumar22}
{\sc A.~Kumar, T.~Ma, P.~Liang, and A.~Raghunathan}, {\em Calibrated ensembles can mitigate accuracy tradeoffs under distribution shift}, in The Conference on Uncertainty in Artificial Intelligence, 2022, pp.~1041--1051.

\bibitem{Kurakin17}
{\sc A.~Kurakin, I.~J. Goodfellow, and S.~Bengio}, {\em Adversarial machine learning at scale}, in International Conference on Learning Representations, 2017.

\bibitem{Lamb19}
{\sc A.~Lamb, V.~Verma, J.~Kannala, and Y.~Bengio}, {\em Interpolated adversarial training: Achieving robust neural networks without sacrificing too much accuracy}, in {ACM} Workshop on Artificial Intelligence and Security, 2019, pp.~95--103.

\bibitem{Levine19}
{\sc A.~Levine, S.~Singla, and S.~Feizi}, {\em Certifiably robust interpretation in deep learning}, arXiv preprint arXiv:1905.12105,  (2019).

\bibitem{Li19}
{\sc B.~Li, C.~Chen, W.~Wang, and L.~Carin}, {\em Certified adversarial robustness with additive noise}, in Annual Conference on Neural Information Processing Systems, 2019.

\bibitem{Li23}
{\sc L.~Li, Y.~Wang, C.~Sitawarin, and M.~Spratling}, {\em {OODRobustBench:} benchmarking and analyzing adversarial robustness under distribution shift}, arXiv preprint arXiv:2310.12793,  (2023).

\bibitem{Liu23}
{\sc C.~Liu, Y.~Dong, W.~Xiang, X.~Yang, H.~Su, J.~Zhu, Y.~Chen, Y.~He, H.~Xue, and S.~Zheng}, {\em A comprehensive study on robustness of image classification models: Benchmarking and rethinking}, arXiv preprint arXiv:2302.14301,  (2023).

\bibitem{Liu18}
{\sc X.~Liu, M.~Cheng, H.~Zhang, and C.-J. Hsieh}, {\em Towards robust neural networks via random self-ensemble}, in European Conference on Computer Vision, 2018, pp.~369--385.

\bibitem{Liu22}
{\sc Z.~Liu, H.~Mao, C.-Y. Wu, C.~Feichtenhofer, T.~Darrell, and S.~Xie}, {\em A {ConvNet} for the 2020s}, in Conference on Computer Vision and Pattern Recognition, 2022, pp.~11976--11986.

\bibitem{Ma20}
{\sc Z.~Ma and S.~Sojoudi}, {\em A sequential framework towards an exact {SDP} verification of neural networks}, in International Conference on Data Science and Advanced Analytics, 2021, pp.~1--8.

\bibitem{Madry18}
{\sc A.~Madry, A.~Makelov, L.~Schmidt, D.~Tsipras, and A.~Vladu}, {\em Towards deep learning models resistant to adversarial attacks}, in International Conference on Learning Representations, 2018.

\bibitem{Metzen17}
{\sc J.~H. Metzen, T.~Genewein, V.~Fischer, and B.~Bischoff}, {\em On detecting adversarial perturbations}, in International Conference on Learning Representations, 2017.

\bibitem{Moosavi-Dezfooli16}
{\sc S.~Moosavi{-}Dezfooli, A.~Fawzi, and P.~Frossard}, {\em Deepfool: {A} simple and accurate method to fool deep neural networks}, in Conference on Computer Vision and Pattern Recognition, 2016, pp.~2574--2582.

\bibitem{Moosavi-Dezfooli19}
{\sc S.-M. Moosavi-Dezfooli, A.~Fawzi, J.~Uesato, and P.~Frossard}, {\em Robustness via curvature regularization, and vice versa}, in Conference on Computer Vision and Pattern Recognition, 2019, pp.~9078--9086.

\bibitem{Na20}
{\sc D.~Na}, {\em Pytorch adversarial training on cifar-10}.
\newblock \url{https://github.com/ndb796/Pytorch-Adversarial-Training-CIFAR}, 2020.

\bibitem{Pagliardini22}
{\sc M.~Pagliardini, G.~Manunza, M.~Jaggi, and T.~Chavdarova}, {\em Improved generalization-robustness trade-off via uncertainty targeted attacks}.
\newblock Preprint, 2022.

\bibitem{Pang22}
{\sc T.~Pang, M.~Lin, X.~Yang, J.~Zhu, and S.~Yan}, {\em Robustness and accuracy could be reconcilable by (proper) definition}, in International Conference on Machine Learning, 2022, pp.~17258--17277.

\bibitem{Pang19}
{\sc T.~Pang, K.~Xu, C.~Du, N.~Chen, and J.~Zhu}, {\em Improving adversarial robustness via promoting ensemble diversity}, in International Conference on Machine Learning, 2019, pp.~4970--4979.

\bibitem{Papernot17}
{\sc N.~Papernot, P.~McDaniel, I.~Goodfellow, S.~Jha, Z.~B. Celik, and A.~Swami}, {\em Practical black-box attacks against machine learning}, in ACM Asia Conference on Computer and Communications Security, 2017.

\bibitem{Pfrommer24}
{\sc S.~Pfrommer, B.~Anderson, J.~Piet, and S.~Sojoudi}, {\em Asymmetric certified robustness via feature-convex neural networks}, in Annual Conference on Neural Information Processing Systems, 2024.

\bibitem{Pfrommer23}
{\sc S.~Pfrommer, B.~G. Anderson, and S.~Sojoudi}, {\em Projected randomized smoothing for certified adversarial robustness}, Transactions on Machine Learning Research,  (2023).

\bibitem{Raghunathan20}
{\sc A.~Raghunathan, S.~M. Xie, F.~Yang, J.~C. Duchi, and P.~Liang}, {\em Understanding and mitigating the tradeoff between robustness and accuracy}, in International Conference on Machine Learning, 2020, pp.~7909--7919.

\bibitem{Rebuffi21}
{\sc S.-A. Rebuffi, S.~Gowal, D.~A. Calian, F.~Stimberg, O.~Wiles, and T.~Mann}, {\em Fixing data augmentation to improve adversarial robustness}, arXiv preprint arXiv:2103.01946,  (2021).

\bibitem{Sabour15}
{\sc S.~Sabour, Y.~Cao, F.~Faghri, and D.~J. Fleet}, {\em Adversarial manipulation of deep representations}, arXiv preprint arXiv:1511.05122,  (2015).

\bibitem{Salman19}
{\sc H.~Salman, J.~Li, I.~Razenshteyn, P.~Zhang, H.~Zhang, S.~Bubeck, and G.~Yang}, {\em Provably robust deep learning via adversarially trained smoothed classifiers}, in Annual Conference on Neural Information Processing Systems, 2019.

\bibitem{Schmidt18}
{\sc L.~Schmidt, S.~Santurkar, D.~Tsipras, K.~Talwar, and A.~Madry}, {\em Adversarially robust generalization requires more data}, in Annual Conference on Neural Information Processing Systems, 2018.

\bibitem{Sehwag22}
{\sc V.~Sehwag, S.~Mahloujifar, T.~Handina, S.~Dai, C.~Xiang, M.~Chiang, and P.~Mittal}, {\em Robust learning meets generative models: Can proxy distributions improve adversarial robustness?}, in International Conference on Learning Representations, 2022.

\bibitem{Shafahi19}
{\sc A.~Shafahi, M.~Najibi, M.~A. Ghiasi, Z.~Xu, J.~Dickerson, C.~Studer, L.~S. Davis, G.~Taylor, and T.~Goldstein}, {\em Adversarial training for free!}, in Annual Conference on Neural Information Processing Systems, 2019.

\bibitem{Shi22}
{\sc Z.~Shi, Y.~Wang, H.~Zhang, J.~Z. Kolter, and C.-J. Hsieh}, {\em Efficiently computing local {L}ipschitz constants of neural networks via bound propagation}, in Annual Conference on Neural Information Processing Systems, 2022, pp.~2350--2364.

\bibitem{Terjek20}
{\sc D.~Terjek}, {\em Adversarial lipschitz regularization}, in International Conference on Learning Representations, 2020.

\bibitem{Tramer20}
{\sc F.~Tram{\`{e}}r, N.~Carlini, W.~Brendel, and A.~Madry}, {\em On adaptive attacks to adversarial example defenses}, in Annual Conference on Neural Information Processing Systems, 2020, pp.~1633--1645.

\bibitem{Tramer18}
{\sc F.~Tram{\`{e}}r, A.~Kurakin, N.~Papernot, I.~J. Goodfellow, D.~Boneh, and P.~D. McDaniel}, {\em Ensemble adversarial training: Attacks and defenses}, in International Conference on Learning Representations, 2018.

\bibitem{Tsipas19}
{\sc D.~Tsipras, S.~Santurkar, L.~Engstrom, A.~Turner, and A.~Madry}, {\em Robustness may be at odds with accuracy}, in International Conference on Learning Representations, 2019.

\bibitem{Wang20}
{\sc H.~Wang, T.~Chen, S.~Gui, T.~Hu, J.~Liu, and Z.~Wang}, {\em Once-for-all adversarial training: In-situ tradeoff between robustness and accuracy for free}, in Annual Conference on Neural Information Processing Systems, 2020.

\bibitem{Wang19a}
{\sc J.~Wang and H.~Zhang}, {\em Bilateral adversarial training: Towards fast training of more robust models against adversarial attacks}, in International Conference on Computer Vision, 2019, pp.~6629--6638.

\bibitem{Wang23}
{\sc Z.~Wang, T.~Pang, C.~Du, M.~Lin, W.~Liu, and S.~Yan}, {\em Better diffusion models further improve adversarial training}, in International Conference on Machine Learning, 2023, pp.~36246--36263.

\bibitem{Yang20}
{\sc Y.~Yang, C.~Rashtchian, H.~Zhang, R.~R. Salakhutdinov, and K.~Chaudhuri}, {\em A closer look at accuracy vs. robustness}, in Annual Conference on Neural Information Processing Systems, 2020, pp.~8588--8601.

\bibitem{Wang19b}
{\sc H.~Zhang and J.~Wang}, {\em Defense against adversarial attacks using feature scattering-based adversarial training}, in Annual Conference on Neural Information Processing Systems, 2019.

\bibitem{Zhang19}
{\sc H.~Zhang, Y.~Yu, J.~Jiao, E.~P. Xing, L.~E. Ghaoui, and M.~I. Jordan}, {\em Theoretically principled trade-off between robustness and accuracy}, in International Conference on Machine Learning, 2019, pp.~7472--7482.

\bibitem{Zhao23}
{\sc S.~Zhao, X.~Wang, and X.~Wei}, {\em Mitigating the accuracy-robustness trade-off via multi-teacher adversarial distillation}, arXiv preprint arXiv:2306.16170,  (2023).

\bibitem{Zheng20}
{\sc H.~Zheng, Z.~Zhang, J.~Gu, H.~Lee, and A.~Prakash}, {\em Efficient adversarial training with transferable adversarial examples}, in Conference on Computer Vision and Pattern Recognition, 2020.

\bibitem{Zheng21}
{\sc Y.~Zheng, R.~Zhang, and Y.~Mao}, {\em Regularizing neural networks via adversarial model perturbation}, in Conference on Computer Vision and Pattern Recognition, 2021, pp.~8156--8165.

\end{thebibliography}
}

\newpage
\appendix

\section{Additional Theoretical and Experimental Results on Certified Robustness} \label{sec:certified_results}
\vspace{-2mm}

In this section, we tighten the certified radius in the special case when $h (\cdot)$ is a randomized smoothing classifier and the robust radii are defined in terms of the $\ell_2$ norm.
This enables us to visualize and compare the certified robustness of the mixed classifier to existing certifiably robust methods in \Cref{sec:certified_exp}.

\subsection{Larger Certified Robust Radius for Randomized Smoothing Base Classifiers} \label{sec:rs_radius}

Since randomized smoothing often operates on the probabilities and does not consider the logits, with a slight abuse of notation, we use $h (\cdot)$ to denote the probabilities throughout this section (as opposed to denoting the logits in the main text).

\begin{assumption}
	\label{as:randomized_smoothing}
	The classifier $h (\cdot)$ is a (Gaussian) randomized smoothing classifier, i.e., $h (x) = \E_{\xi \sim \gN (0, \sigma^2 I_d)} \left[ \overline{h} (x+\xi) \right]$ for all $x \in \sR^d$, where $\overline{h} \colon \sR^d \to [0, 1]^c$ is the output probabilities of a neural model that is non-robust in general. Furthermore, for all $i \in [c]$, $\overline{h}_i (\cdot)$ is not 0 almost everywhere or 1 almost everywhere.
\end{assumption}

\begin{theorem}
	\label{thm:randomized_smoothing}
	Suppose that \Cref{as:randomized_smoothing} holds, and let $x \in \sR^d$ be arbitrary. Let $y = \argmax_i h_i(x)$ and $y' = \argmax_{i \ne y} h_i(x)$. Then, if $\alpha \in [\frac{1}{2}, 1]$, it holds that $\argmax_i \hialpha (x+\delta) = y$ for all $\delta \in \sR^d$ such that
	\begin{align*}
		\norm{\delta}_2 & \le \rsigmaalpha (x) \coloneqq \frac{\sigma}{2} \Big( \Phi^{-1} \big( \alpha h_y(x) \big) - \Phi^{-1} \big( \alpha h_{y'} (x) + 1 - \alpha \big) \Big).
	\end{align*}
\end{theorem}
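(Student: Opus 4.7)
The plan is to apply \cref{lem:certified_radius} in order to reduce the claim to showing that the randomized smoothing base classifier $h(\cdot)$ is itself certifiably robust at $x$ with margin $\tfrac{1-\alpha}{\alpha}$ and radius $\rsigmaalpha(x)$. Unfolding \cref{def:robust_with_margin} (noting that in this section $h$ already denotes probabilities, so the outer softmax is identity), this target amounts to proving that
\begin{equation*}
	\alpha h_y(x+\delta) \ge \alpha h_i(x+\delta) + (1-\alpha)
\end{equation*}
for every $i \ne y$ and every $\delta \in \sR^d$ with $\norm{\delta}_2 \le \rsigmaalpha(x)$.

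The crucial tool is the sharpened Lipschitz bound for Gaussian smoothing \citep{Salman19}: whenever $f \colon \sR^d \to [0,1]$ is measurable and $F(x) \coloneqq \E_{\xi \sim \gN(0, \sigma^2 I_d)}[f(x+\xi)]$, the map $x \mapsto \Phi^{-1}(F(x))$ is $\tfrac{1}{\sigma}$-Lipschitz in $\norm{\cdot}_2$. I would apply this result to the two affinely rescaled smoothed maps $x \mapsto \alpha h_y(x) = \E_\xi[\alpha \overline{h}_y(x+\xi)]$ and $x \mapsto \alpha h_i(x) + 1 - \alpha = \E_\xi[\alpha \overline{h}_i(x+\xi) + 1 - \alpha]$. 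Because $\alpha \in [\tfrac{1}{2},1]$, the two integrands take values in $[0,\alpha] \subseteq [0,1]$ and $[1-\alpha,1] \subseteq [0,1]$ respectively, so the Salman hypothesis is satisfied; the non-degeneracy clause of \cref{as:randomized_smoothing} further guarantees the expectations lie in the open interval $(0,1)$, so $\Phi^{-1}$ is finite on both.

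With Lipschitzness in hand, I would chain the two one-sided bounds
\begin{align*}
	\Phi^{-1}\bigl(\alpha h_y(x+\delta)\bigr) & \ge \Phi^{-1}\bigl(\alpha h_y(x)\bigr) - \tfrac{\norm{\delta}_2}{\sigma}, \\
	\Phi^{-1}\bigl(\alpha h_i(x+\delta) + 1 - \alpha\bigr) & \le \Phi^{-1}\bigl(\alpha h_i(x) + 1 - \alpha\bigr) + \tfrac{\norm{\delta}_2}{\sigma},
\end{align*}
subtract, and invoke the strict monotonicity of $\Phi^{-1}$. A short calculation yields the per-class sufficient condition $\norm{\delta}_2 \le \tfrac{\sigma}{2}\bigl(\Phi^{-1}(\alpha h_y(x)) - \Phi^{-1}(\alpha h_i(x) + 1 - \alpha)\bigr)$. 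Since $\Phi^{-1}$ is increasing, minimizing the right-hand side over $i \ne y$ corresponds to maximizing $h_i(x)$, which occurs precisely at the runner-up $i = y'$, giving exactly $\rsigmaalpha(x)$ and hence the claimed margin-and-radius robustness; feeding this into \cref{lem:certified_radius} closes the proof.

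The main obstacle is not algebraic but structural: one must recognize that the sharper Salman-type constant transfers cleanly to the two \emph{affinely rescaled} smoothed expectations, which requires checking the $[0,1]$ range of the rescaled pre-smoothing integrands and the $(0,1)$ range of the post-smoothing values so that $\Phi^{-1}$ is a legitimate and finite Lipschitz witness. Once these range verifications are handled, everything else reduces to monotonicity of $\Phi^{-1}$ plus a direct invocation of \cref{lem:certified_radius}.
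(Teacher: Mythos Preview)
Your proposal is correct and follows essentially the same approach as the paper's proof: both construct the affinely rescaled integrands $\alpha \overline{h}_y$ and $\alpha \overline{h}_i + 1 - \alpha$, verify their $[0,1]$ range so that the Salman-type $\tfrac{1}{\sigma}$-Lipschitz bound applies to $\Phi^{-1}$ of the smoothed versions, chain the two one-sided Lipschitz inequalities, reduce to the runner-up class via monotonicity of $\Phi^{-1}$, and close with \cref{lem:certified_radius}. The paper packages the rescaled functions into explicit auxiliary classifiers $\tilde{h}$ and $\hat{h}$, but the underlying argument is identical to yours.
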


\begin{proof}
	First, note that since every $\overline{h}_i (\cdot)$ is not 0 almost everywhere or 1 almost everywhere, it holds that $h_i (x) \in (0, 1)$ for all $i$ and all $x$. Now, suppose that $\alpha \in [\frac{1}{2}, 1]$, and let $\delta\in \sR^d$ be such that $\norm{\delta}_2 \le \rsigmaalpha (x)$. Let $\mu_\alpha \coloneqq \frac{1-\alpha}{\alpha}$ (conversely, $\alpha = \frac{1} {\mu_\alpha+1}$). We construct a scaled classifier $\tilde{h} \colon \sR^d \to \sR^c$, whose $i\th$ entry is defined as
	\begin{equation*}
		\tilde{h}_i (x) = \begin{aligned}
			\begin{cases}
				\frac{\overline{h}_y (x)} {\onemualpha} \hspace{5.3mm} = \alpha \overline{h}_y(x) & \text{if $i = y$}, \\
				\frac{\overline{h}_i (x) + \mu_\alpha} {\onemualpha} = \alpha \overline{h}_i (x) + 1 - \alpha & \text{if $i \ne y$}.
			\end{cases}
			\end{aligned}
	\end{equation*}
	Furthermore, define a scaled RS classifier $\hat{h} \colon \sR^d \to \sR^c$ based on $\tilde{h}_i (\cdot)$ by 
	\begin{equation*}
		\hat{h} (x) = \E_{\xi \sim \gN (0, \sigma^2 I_d)} \left[ \tilde{h} (x+\xi) \right].
	\end{equation*}
	Then, since it holds that
	\begin{align*}
		\tilde{h}_y(x) &= \frac{\overline{h}_y (x)} {\onemualpha} \in \left( 0, \frac{1}{\onemualpha} \right) \subseteq (0, 1), \\
		\tilde{h}_i(x) &= \frac{\overline{h}_i (x) + \mu_\alpha} {\onemualpha} \in \left( \frac{\mu_\alpha} {\onemualpha}, 1 \right) \subseteq (0, 1), \;\; \forall i \neq y,
	\end{align*}
	it must be the case that $0 < \tilde{h}_i (x) < 1$ for all $i$ and all $x$, and hence, for all $i$, the function $x \mapsto \Phi^{-1} \big( \hat{h}_i(x) \big)$ is $\ell_2$-Lipschitz continuous with Lipschitz constant $\frac{1}{\sigma}$ (see \cite[Lemma~1]{Levine19}, or Lemma 2 in \cite{Salman19} and the discussion thereafter). Therefore,
	\begin{equation}
		\left| \Phi^{-1} \big( \hat{h}_i (x+\delta) \big) - \Phi^{-1} \big( \hat{h}_i (x) \big) \right| \le \frac{\norm{\delta}_2} {\sigma} \le \frac{\rsigmaalpha (x)}{\sigma} \label{eq:lipschitz_inequality}
	\end{equation}
	for all $i$. Applying \cref{eq:lipschitz_inequality} for $i=y$ yields that
	\begin{equation} \label{eq:lip_y}
		\Phi^{-1} \big( \hat{h}_y (x+\delta) \big) \ge \Phi^{-1} \big( \hat{h}_y (x) \big) - \frac{\rsigmaalpha (x)} {\sigma},
	\end{equation}
	and, since $\Phi^{-1}$ is monotonically increasing and $\hat{h}_i(x) \le \hat{h}_{y'} (x)$ for all $i \ne y$, applying \cref{eq:lipschitz_inequality} to $i \ne y$ gives that
	\begin{align} \label{eq:lip_not_y}
		\Phi^{-1} \big( \hat{h}_i (x+\delta) \big) & \le \Phi^{-1} \big( \hat{h}_i (x) \big) + \frac{\rsigmaalpha (x)} {\sigma} \le \Phi^{-1} \big( \hat{h}_{y'} (x) \big) + \frac{\rsigmaalpha (x)} {\sigma}.
	\end{align}
	Subtracting \cref{eq:lip_not_y} from \cref{eq:lip_y} gives that
	\begin{align*}
		\Phi^{-1} \big( \hat{h}_y (x+\delta) \big) & - \Phi^{-1} \big( \hat{h}_i (x+\delta) \big) \ge \Phi^{-1} \big( \hat{h}_y (x) \big) - \Phi^{-1} \big( \hat{h}_{y'} (x) \big) - \frac{2 \rsigmaalpha (x)} {\sigma}
	\end{align*}
	for all $i \ne y$. By the definitions of $\mu_\alpha$, $\rsigmaalpha (x)$, and $\hat{h} (x)$, the right-hand side of this inequality equals zero, and hence, since $\Phi$ is monotonically increasing, we find that $\hat{h}_y (x+\delta) \ge \hat{h}_i (x+\delta)$ for all $i \ne y$. Therefore,
	\begin{align*}
		\frac{h_y(x+\delta)} {\onemualpha} = & \E_{\xi \sim \gN (0, \sigma^2 I_d)} \left[ \frac{\overline{h}_y (x+\delta+\xi)} {\onemualpha} \right] = \hat{h}_y (x+\delta) \\
		\ge & \hat{h}_i (x+\delta) = \E_{\xi \sim \gN (0, \sigma^2 I_d)} \left[ \frac{\overline{h}_i (x+\delta+\xi) + \mu_\alpha} {\onemualpha} \right] = \frac{h_i (x+\delta) + \mu_\alpha} {\onemualpha}.
	\end{align*}
	Hence, $h_y (x+\delta) \ge h_i (x+\delta) + \mu_\alpha$ for all $i \ne y$, so $h (\cdot)$ is certifiably robust at $x$ with margin $\mu_\alpha = \frac{1-\alpha} {\alpha}$ and radius $\rsigmaalpha (x)$. Therefore, by \cref{lem:certified_radius}, it holds that $\argmax_i \hialpha (x+\delta) = y$ for all $\delta \in \sR^d$ such that $\norm{\delta}_2 \le \rsigmaalpha (x)$, which concludes the proof.
\end{proof}

\subsection{Experiment Setup} \label{sec:rs_exp_setup}

Before visualizing the certified robustness results, we first explain the experiment setup.
We let the smoothing strength $\alpha$ be a fixed value.
Since a (Gaussian) RS model with smoothing covariance matrix $\sigma^2 I_d$ has an $\ell_2$-Lipschitz constant $\sqrt{\nicefrac{2}{\pi\sigma^2}}$, such a model can be used to simultaneously visualize both theorems, with \cref{thm:randomized_smoothing} giving tighter certificates of robustness.
Consider the CIFAR-10 dataset. We select $g (\cdot)$ to be an ImageNet-pretrained ResNet-152 model with a clean accuracy of $98.50 \%$ (the same one used in \cref{tab:compare_cifar}), and use the RS models presented in \citep{Zhang19} as $h (\cdot)$.

Notice that it is possible to maintain the mixed classifier's clean accuracy while changing its robustness behavior by jointly adjusting the mixing weight $\alpha$ and the RS variance $\sigma^2$.
Specifically, increasing $\sigma^2$ certifies larger radii at the cost of decreased clean accuracy.
To compensate, $\alpha$ can be reduced to allow more emphasis on the accurate base classifier $g (\cdot)$, thereby restoring the clean accuracy.
We want to understand how jointly adjusting $\alpha$ and $\sigma^2$ affects the certified robustness property while fixing the clean accuracy.
To this end, for a fair comparison, for the mixed classifier $\halpha (\cdot)$, we select an $\alpha$ value such that the clean accuracy of $\halpha (\cdot)$ matches that of another RS model $\hbase (\cdot)$ with a smaller smoothing variance.

The expectation term in the RS formulation is approximated with the empirical mean of 10,000 random perturbations\footnote{The authors of \citep{Cohen19} show that 10,000 Monte Carlo samples are sufficient to provide representative results.} drawn from $\gN(0, \sigma^2 I_d)$. 
The certified radii of $\hbase(\cdot)$ are calculated using Theorems \ref{thm:certified_radius} and $\ref{thm:randomized_smoothing}$ by setting $\alpha$ to $1$.

Note that our certified robustness results make no assumptions on the accurate base classifier $g (\cdot)$, and do not depend on it in any way.
Hence, to achieve the best accuracy-robustness trade-off, we should select a model with the highest clean accuracy as $g (\cdot)$.
Using a more accurate $g (\cdot)$ will allow using a larger $\alpha$ value for the same level of clean accuracy, thereby indirectly improving the certified robustness of the mixed classifier.
Such a property allows the mixed classifier to take advantage of state-of-the-art standard (non-robust) classifiers.
In contrast, since these models are often not trained for the purpose of RS, directly incorporating them into RS may produce suboptimal results.
Therefore, our mixed classifier has better flexibility and compatibility, even in the certified robustness setting.

Additionally, since we make no assumptions on the confidence properties of $g (\cdot)$, we replace the Softmax operation in \cref{eq:adap_sm_4} with a ``Hardmax''.
I.e., the confidence of $g (\cdot)$ used in the mixture is a one-hot vector associated with $g (\cdot)$'s predicted class.
Note that this replacement is equivalent to applying a temperature scaling of zero to $g (\cdot)$.
By doing so, the mixed classifier's clean accuracy can be enhanced (because the higher-accuracy base model is made more confident) while not affecting the certified robustness (because they do not depend on $g (\cdot)$.

\subsection{Visualization of the Certified Robust Radii} \label{sec:certified_exp}

\begin{figure*}
	\centering
	\includegraphics[width=.98\textwidth, trim={4mm 5mm 3mm 2mm}, clip]{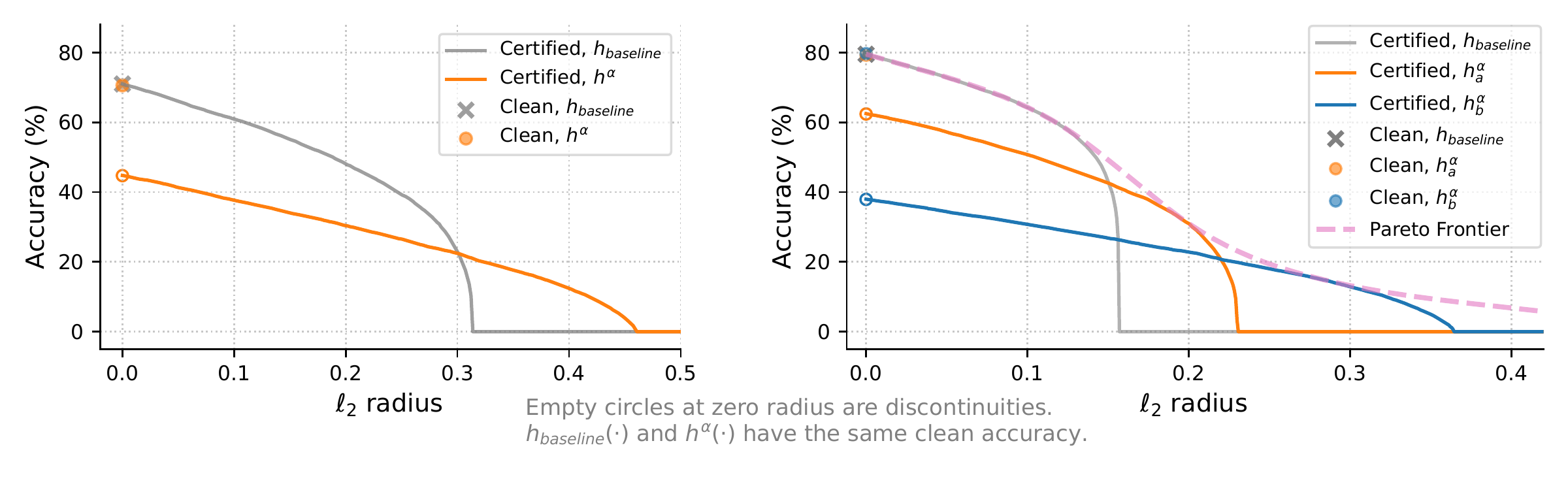}
	\begin{subfigure}[t]{.42\textwidth}
		\centering
		\captionsetup{justification=centering}
		\vspace{-5.5mm}
		\caption{\footnotesize $\hbase (\cdot)$: RS with $\sigma=0.5$. $\hspace{-2.9mm}$ \\
		$ $ \\[.3mm]
		$\halpha (\cdot)$ uses $\alpha=0.79$; $\hspace{4.44mm}$ \\
		$h (\cdot)$ is RS with $\sigma=1$. $\hspace{1.3mm}$}
		\vspace{-2mm}
	\end{subfigure}
	\hfill
	\begin{subfigure}[t]{.56\textwidth}
		\centering
		\captionsetup{justification=centering}
		\vspace{-5.5mm}
		\caption{\footnotesize $\hbase (\cdot)$: RS with $\sigma=0.25$. $\hspace{26.4mm}$ \\[.3mm]
		Consider two mixed classifier examples: $\hspace{7mm}$ \\
		$\halpha_a (\cdot)$ uses $\alpha=0.79$ and $h_a (\cdot)$ is RS with $\sigma=0.5$; $\hspace{-6.7mm}$ \\
		$\hspace{.2mm} \halpha_b (\cdot)$ uses $\alpha=0.71$ and $h_b (\cdot)$ is RS with $\sigma=1.0$. $\hspace{-6.6mm}$}
		\vspace{-2mm}
	\end{subfigure}
	\caption{Closed-form certified accuracy of RS models and our mixed classifier with the Lipschitz-based bound in \Cref{thm:certified_radius}. The mixed classifier can optimize the certified robust accuracy at each radius without affecting clean accuracy by tuning $\alpha$ and $\sigma^2$. The resulting Pareto frontier demonstrates significantly extended certified radii over a standalone RS model, signaling improved accuracy-robustness trade-off.}
    \label{fig:certified_pareto}
    \vspace{-2.5mm}
\end{figure*}

We are now ready to visualize the certified robust radii presented in \cref{thm:certified_radius} and \cref{thm:randomized_smoothing}.
\Cref{fig:certified_pareto} displays the calculated certified accuracies of $\halpha (\cdot)$ and $\hbase (\cdot)$ at various attack radii.
The ordinate ``Accuracy'' at a given abscissa ``$\ell_2$ radius'' reflects the percentage of the test data for which the considered model gives a correct prediction and a certified radius at least as large as the $\ell_2$ radius under consideration.

In both subplots of \Cref{fig:certified_radii}, the clean accuracy is the same for $\hbase (\cdot)$ and $\halpha (\cdot)$.
Note that the certified robustness curves of $\halpha (\cdot)$ do not connect to the clean accuracy when $\alpha$ approaches zero.
This discontinuity occurs because Theorems \ref{thm:certified_radius} and \ref{thm:randomized_smoothing} both consider robustness with respect to $h (\cdot)$ and do not issue certificates to test inputs at which $h (\cdot)$ makes incorrect predictions, even though $\halpha (\cdot)$ may correctly predict at some of these points in reality.
This is reasonable because we do not assume any robustness or Lipschitzness of $g (\cdot)$, and $g (\cdot)$ is allowed to be arbitrarily incorrect whenever the radius is non-zero.

\begin{figure*}[t]
	\centering
	\includegraphics[width=.98\textwidth, trim={4mm 5mm 3mm 2mm}, clip]{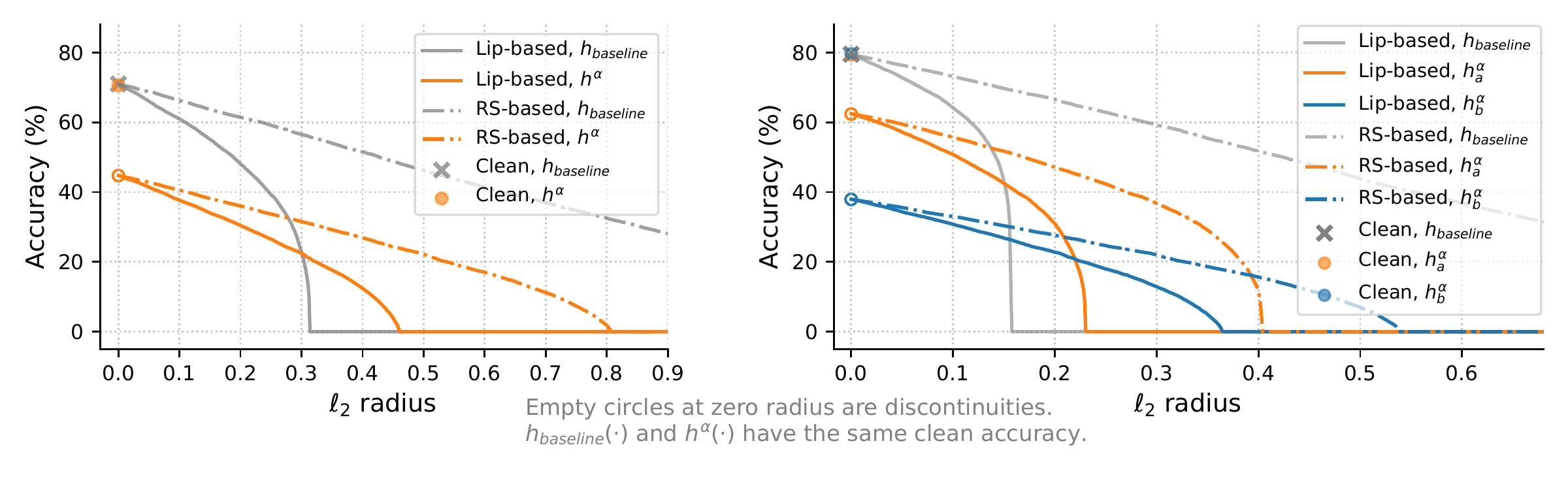}
	\caption{Tightening the certified robustness bounds with RS-based (\Cref{thm:randomized_smoothing}) certificates. The models are the same ones as in \Cref{fig:certified_pareto}.}
	\label{fig:certified_radii}
	\vspace{-1.5mm}
\end{figure*}

The Lipschitz-based bound of \cref{thm:certified_radius} allows us to visualize the performance of the mixed classifier $\halpha (\cdot)$ when $h (\cdot)$ is an $\ell_2$-Lipschitz model.
In this case, the curves associated with $\halpha (\cdot)$ and $\hbase (\cdot)$ intersect, with $\halpha (\cdot)$ achieving higher certified accuracy at larger radii and $\hbase (\cdot)$ certifying more points at smaller radii.
By jointly adjusting $\alpha$ and the Lipschitz constant of $h (\cdot)$, it is possible to change the location of this intersection while maintaining the same level of clean accuracy.
Therefore, the mixed classifier structure allows for optimizing the certified accuracy at a particular radius, while keeping the clean accuracy unchanged.
In \Cref{fig:certified_pareto}, we illustrate the achievable accuracy at each radius with the optimal $\alpha$-$\sigma^2$ combination as the \emph{Pareto Frontier}.
Compared with the accuracy-radius curve of a standalone RS classifier, this frontier significantly extends along the radius axis.
Since the clean accuracy is kept fixed in this comparison, a noticeable accuracy-robustness trade-off improvement can be concluded in the certified setting.

The RS-based bound from \cref{thm:randomized_smoothing} tightens the certification when the certifiably robust classifier is an RS model.
\Cref{fig:certified_radii} adds these tightened results to the visualizations.
For both $\halpha (\cdot)$ and $\hbase (\cdot)$, the RS-based bounds certify larger radii than the corresponding Lipschitz-based bounds.
Nonetheless, $\hbase(\cdot)$ can certify more points with the RS-based guarantee.
Intuitively, this phenomenon suggests that RS models can yield correct but low-confidence predictions when under attack with a large radius, and thus may not be best-suited for our mixing operation, which relies on robustness with non-zero margins.
In contrast, Lipschitz models, a more general and common class of models, exploit the mixing operation more effectively.
Moreover, as shown in \Cref{fig:STD+ROB}, empirically robust models often yield high-confidence predictions when under attack, making them more suitable for the mixed classifier $\halpha (\cdot)$'s robust base model.

Since randomized smoothing requires thousands of neural network queries to perform a prediction and the mixed classifier only adds one additional query via the standard base classifier, the change in computation is negligible.

\section{Additional Analyses Regarding \texorpdfstring{$R_i (x)$}{R\_i(x)}} \label{sec:more_R_analyses}

\subsection{The four options for \texorpdfstring{$R_i (x)$}{R\_i(x)}} \label{sec:R_options}

Consider the four listed options of $R_i (x)$, namely $1$, $\norm{\nabla g_i (x)}_{p*}$, $\norm{\nabla \max_j g_j (x)}_{p*}$, and $\frac{\norm{\nabla g_i (x)}_{p*}} {\norm{\nabla h_i (x)}_{p*}}$.
The constant $1$ is a straightforward option.
$\norm{\nabla g_i (x)}_{p*}$ comes from \cref{eq:adap_sm_2}, which is a direct generalization from the locally biased smoothing (binary classification) formulation to the multi-class case.
Note that $\norm{\nabla g_i (x)}_{p*}$ is not practical for datasets with a large number of classes, since it requires the calculation of the full Jacobian of $g (x)$, which is very time-consuming.
To this end, we use the gradient of the predicted class (which is intuitively the most important class) as a surrogate for all classes, bringing the formulation $\norm{\nabla \max_j g_j (x)}_{p*}$.
Finally, unlike locally biased smoothing, which only has one differentiable component, our adaptive smoothing has two differentiable base networks.
Hence, it makes sense to consider the gradient from both of them.
Intuitively, if $\norm{\nabla g_i (x)}_{p*}$ is large, then $g (\cdot)$ is vulnerable at $x$ and we should trust it less.
If $\norm{\nabla h_i (x)}_{p*}$, then $h (\cdot)$ is vulnerable and we should trust $h (\cdot)$ less.
This leads to the fourth option, which is $\frac{\norm{\nabla g_i (x)}_{p*}} {\norm{\nabla h_i (x)}_{p*}}$.

\subsection{Additional empirical supports and analyses for selecting \texorpdfstring{$R_i (x) = 1$}{R\_i(x)=1}} \label{sec:more_compare_R}

In this section, we use additional empirical evidence (Figures \ref{fig:compare_R_2a} and \ref{fig:compare_R_2b}) to show that $R_i (x) = 1$ is the appropriate choice for the adaptive smoothing formulation, and that the post-Softmax probabilities should be used for smoothing.
While most of the experiments in this paper are based on ResNets, the architecture is chosen solely because of its popularity, and our method does not depend on any properties of ResNets.
Therefore, for the experiment in \Cref{fig:compare_R_2a}, we select an alternative architecture by using a more modern ConvNeXt-T model \citep{Liu22} pre-trained on ImageNet-1k as $g (\cdot)$.
We also use a robust model trained via TRADES in place of an adversarially-trained network for $h (\cdot)$.
Moreover, while most of our experiments are based on $\ell_\infty$ attacks, our method applies to all $\ell_p$ attack budgets.
In \Cref{fig:compare_R_2b}, we provide an example that considers the $\ell_2$ attack. The experiment settings are summarized in \cref{tab:compare_R_settings}.

\begin{figure}[t]
	\centering
	\begin{subfigure}[t]{.48\textwidth}
		\captionsetup{justification=centering}
		\centering
		\includegraphics[width=.97\textwidth]{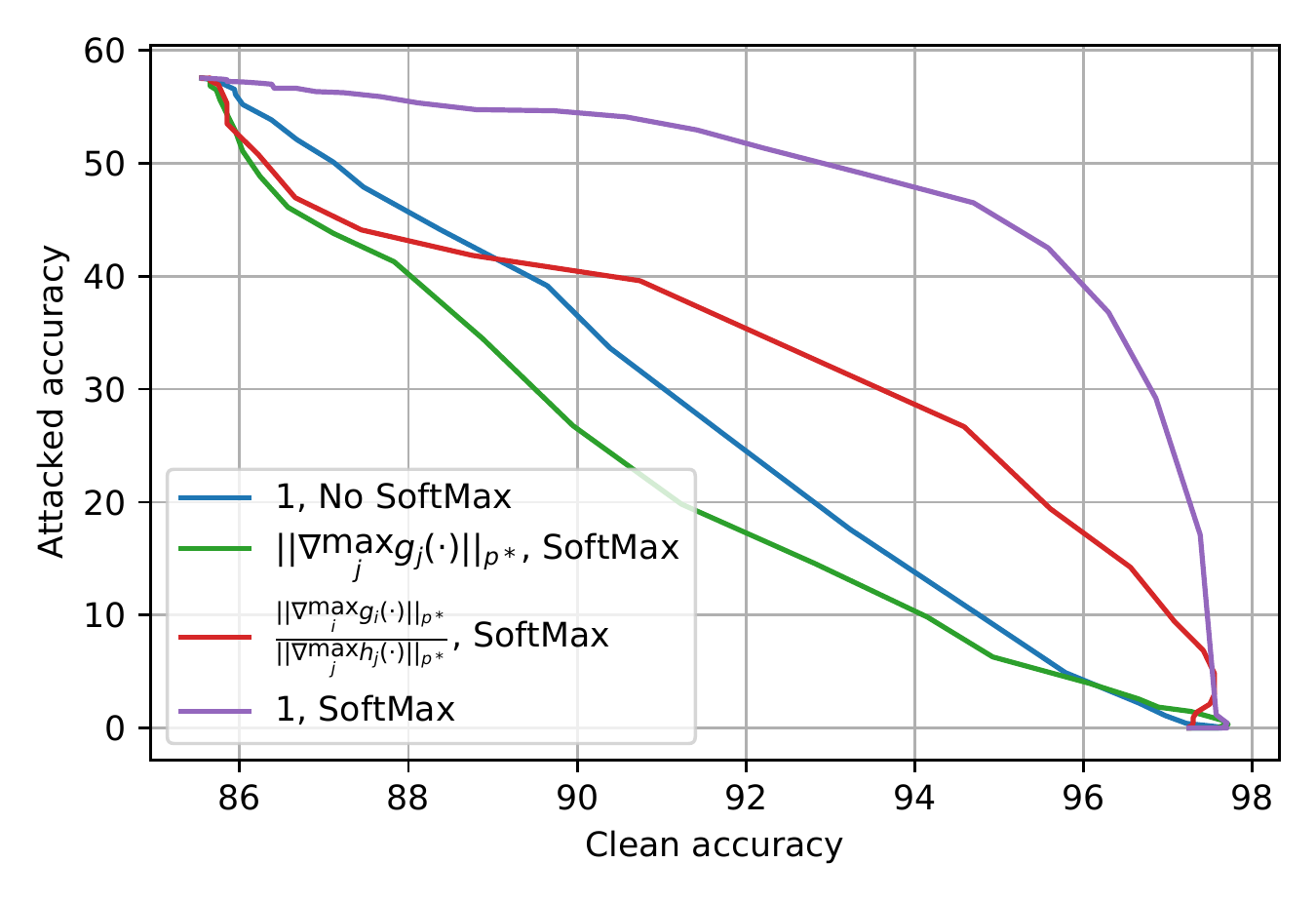}
		\vspace*{-2mm}
		\caption{\footnotesize ConvNeXt-T and TRADES WideResNet-34 under $\ell_\infty$ PGD attack.}
		\label{fig:compare_R_2a}
	\end{subfigure}
	\hspace{4mm}
	\begin{subfigure}[t]{.48\textwidth}
		\captionsetup{justification=centering}
		\centering
		\includegraphics[width=.97\textwidth]{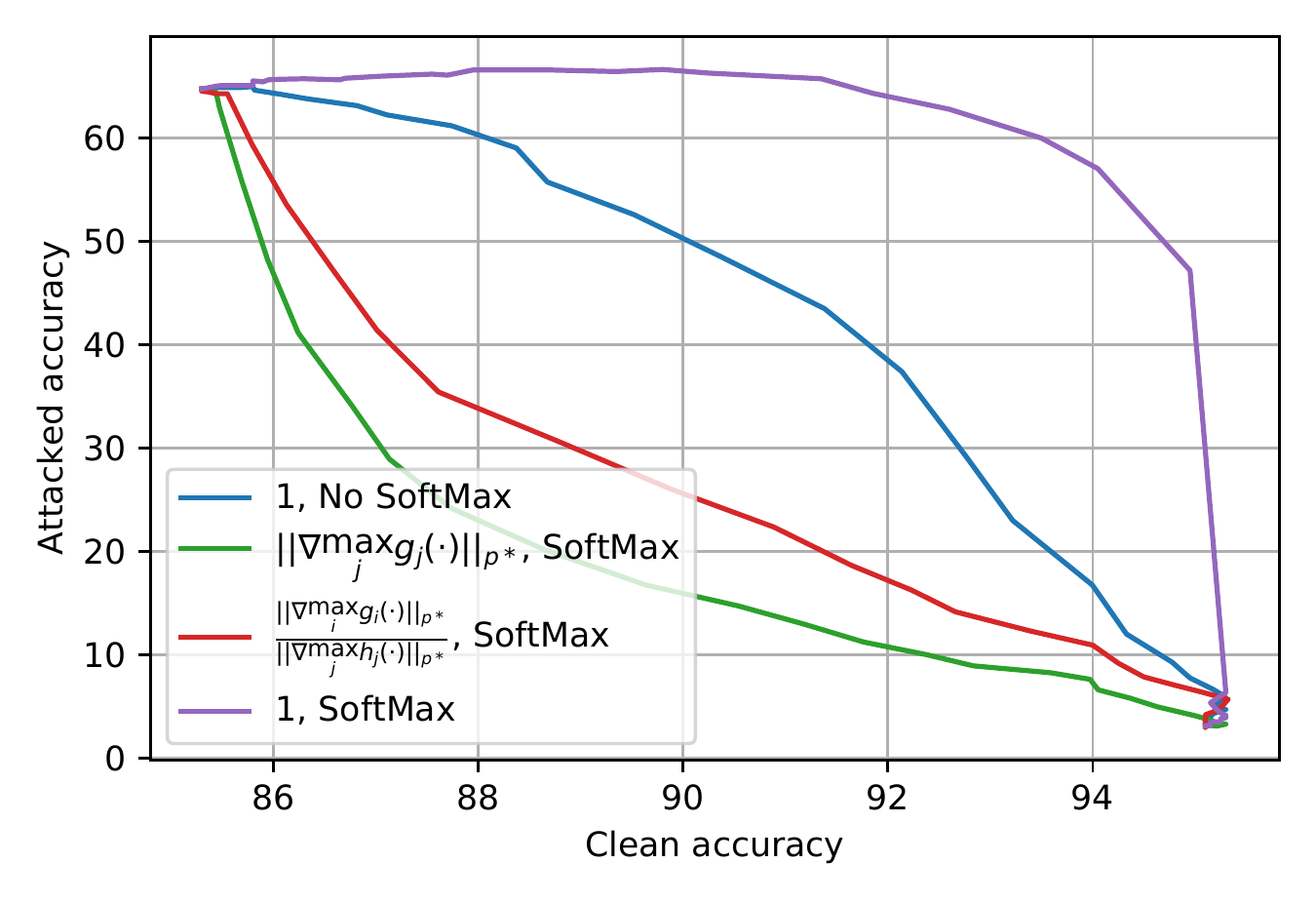}
		\vspace*{-2mm}
		\caption{\footnotesize Standard and AT ResNet-18s under $\ell_2$ PGD attack.}
		\label{fig:compare_R_2b}
	\end{subfigure}
	
	\vspace{-4.5mm}
	\caption{Comparing the ``attacked accuracy versus clean accuracy'' curve of various options for $R_i (x)$ with alternative selections of base classifiers.}
	\label{fig:compare_R_2}
\end{figure}

\begin{table}[t]
	\centering
	\vspace{-1mm}
	\caption{Experiment settings for comparing the choices of $R_i (x)$.}
	\vspace{-1mm}
	\label{tab:compare_R_settings}
	\begin{small}
	\begin{tabular}{l|c|c|c}
		\toprule
		& PGD attack settings	& $g (\cdot)$ Architecture	& $h (\cdot)$ Architecture \\
		\midrule
		\Cref{fig:compare_R}    & $\ell_\infty$, $\epsilon = \nicefrac{8}{255}$, 10 Steps	& Standard ResNet-18	& $\ell_\infty$ AT ResNet-18				\\
		\Cref{fig:compare_R_2a} & $\ell_\infty$, $\epsilon = \nicefrac{8}{255}$, 20 Steps	& Standard ConvNeXt-T & $\ell_\infty$ TRADES WideResNet-34		\\
		\Cref{fig:compare_R_2b} & $\ell_2$, $\epsilon = 0.5$, 20 Steps		  		  		& Standard ResNet-18	& $\ell_2$ AT ResNet-18					\\
		\bottomrule
	\end{tabular}
	\end{small}
	\vspace{1.5mm}
\end{table}

\Cref{fig:compare_R_2} demonstrates that setting $R_i (x)$ to the constant $1$ achieves the best trade-off curve between clean and attacked accuracy.
Moreover, smoothing using the post-Softmax probabilities outperforms the pre-Softmax logits.
This result aligns with the conclusions of \Cref{fig:compare_R} and our theoretical analyses, demonstrating that various robust networks share the property of being more confident when classifying correctly than when making mistakes.

The most likely reason for $R_i (x) = 1$ to be the best choice is that while the local Lipschitzness of a base classifier is a good estimator of its robustness and trustworthiness (as motivated in \citep{Anderson21b}), the gradient magnitude of this base classifier at the input is not always a good estimator of its local Lipschitzness.
Specifically, local Lipschitzness, as defined in \cref{def:lipschitz}, requires the classifier to be relatively flat within an $\epsilon$-ball around the input, whereas the gradient magnitude only focuses on the nominal input itself and does not consider the surrounding landscape within the $\epsilon$-ball.
For example, the gradient magnitude of the standard base classifier $g (\cdot)$ may jump from a small value at the input to a large value at some nearby point within the $\epsilon$-ball, which may cause $g (\cdot)$ to change its prediction around this nearby point.
In this case, $\norm{\nabla g(x))}$ may be small, but $g (\cdot)$ can have a high local Lipschitz constant.

As a result, while using $\norm{\nabla g(\cdot))}$ as $R_i$ seems to make sense at first glance, it does not work as intended and can make the mixed classifier trust $g (\cdot)$ more than it should.
Therefore, within the $\epsilon$-ball around a given $x$, the attacker may be able to find adversarial perturbations at which the gradient magnitude is small, thereby bypassing the defense.

In fact, as discussed in \citep{Anderson21b}, the use of gradient magnitude is motivated by approximating a neural classifier with a linear classifier.
Our \cref{fig:compare_R}, which demonstrates that using a constant $R_i (x)$ outperforms incorporating the gradient magnitude, implies that such an approximation results in a large mismatch and therefore does not make sense in our setting.
	
Even if some gradient-dependent options for $R_i (x)$ are better than the constant $1$, unless they produce significantly better results, the constant $1$ should still be favored since it removes the need for performing backward passes within the forward pass of the mixed classifier, making the mixing formulation more efficient and less likely to suffer from gradient masking.

\section{Additional Experiment Results}

\subsection{Trade-Off Curve with State-of-the-Art Base Classifiers} \label{sec:tradeoff_sota}

\begin{figure*}[]
	\centering
	\begin{minipage}{.565\textwidth}
	\centering
		\includegraphics[width=\textwidth]{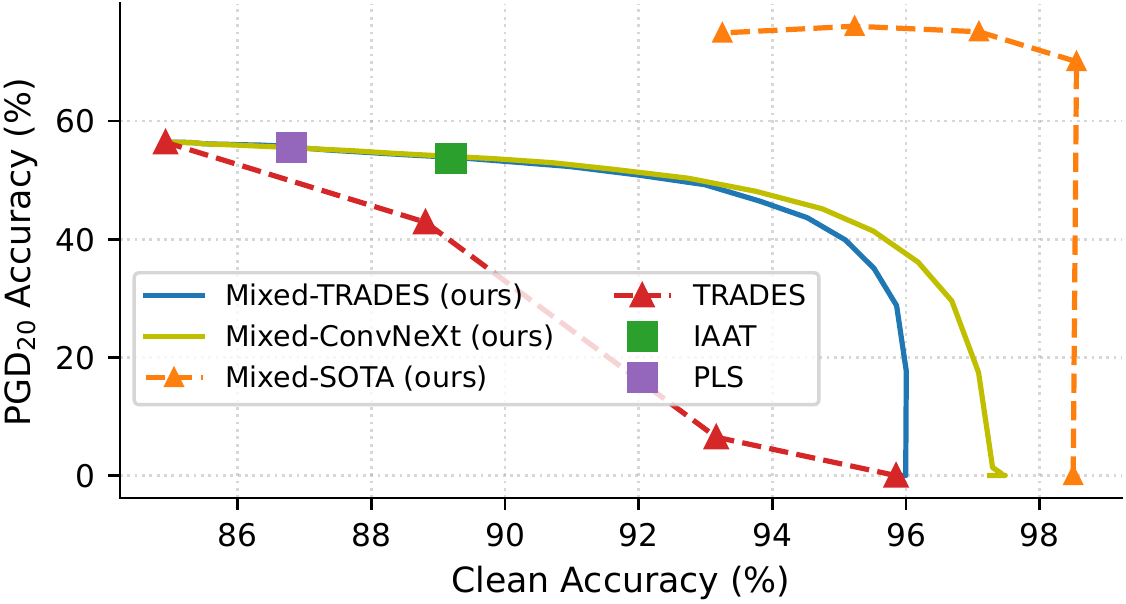}
	\end{minipage}
	\hspace{-8mm}
	\begin{minipage}{.47\textwidth}
	\begin{small}
	\begin{itemize}
		\item Mixed-TRADES: Adaptive smoothing using TRADES ($\beta = 6$) as $h (\cdot)$ and TRADES ($\beta = 0$) as $g (\cdot)$, as in \Cref{fig:compare_w_trades}.
		\item Mixed-ConvNeXt: Same as above but replace $g (\cdot)$ with a standard ConvNeXt-T model which has a similar size.
		\item Mixed-SOTA: The trade-off achieved by the SOTA mixed model in \cref{tab:compare_cifar}.
	\end{itemize}
	\end{small}
	\end{minipage}
	\caption{The mixed classifier's trade-off curve in \Cref{fig:compare_w_trades} can be easily improved by using a better base classifier. When using state-of-the-art models as base classifiers, adaptive smoothing achieves significantly better results than IAAT.}
	\label{fig:tradeoff_sota}
	\vspace{-3mm}
\end{figure*}

As discussed in the main paper text, our mixed classifier framework can take advantage of various models with better accuracy-robustness trade-offs (including but not limited to IAAT) by using them as base models, achieving state-of-the-art accuracy-robustness balance.

To demonstrate this, \Cref{fig:tradeoff_sota} adds the result that replaces the accurate base classifier used in \Cref{fig:compare_w_trades} with a ConvNeXt-T model, which has higher clean accuracy.
Such a replacement immediately improves the accuracy-robustness trade-off of the mixed classifier without additional training.
On the other hand, improving IAAT will at least require training a new model with expensive adversarial training.

Additionally, \Cref{fig:tradeoff_sota} displays the result achieved with state-of-the-art base classifiers from \cref{tab:compare_cifar}.
With these base classifiers, our mixed classifier can significantly improve the accuracy-robustness balance over training-based trade-off alleviating methods.
Since SOTA base classifiers use a variety of training techniques to achieve high performance, it is uncertain whether these techniques can be successfully combined with IAAT.
Meanwhile, incorporating them into adaptive smoothing is extremely straightforward.

\subsection{Ablation Study on Loss Function Hyperparameters} \label{sec:loss_abla}

In this section, we discuss the effects of the constants $\cCE$, $\cBCE$, and $\cprod$ in the composite loss function \cref{eq:comp_loss}.
Since multiplying the three weight constants by the same number is equivalent to using a larger optimizer step size and is not the focus of this ablation study (we focus on the loss function shape), we fix $\cCE + \cBCE = 1.5$.
To avoid the issue of becoming excessively conservative and always prioritizing the robust base model (as described in \Cref{sec:train_mixing_network}), we add a batch normalization layer without trainable affine transform to the output of the mixing network.
Additionally, note that since the mixing network has a single output, one can arbitrarily shift this output to achieve the desired balance between clean and attacked accuracies.
For a fair and illustrative comparison, after training a mixing network $\alpha_\theta (\cdot)$ with each hyperparameter setting, we add an appropriate constant to the output of the $\alpha_\theta (\cdot)$ so that the clean accuracy of the overall model $\halphatheta (\cdot)$ is $90 \pm 0.02 \%$, and compare the PGD$_{20}$ attacked accuracy of $\halphatheta (\cdot)$ in \cref{tab:loss_abla}.
As a baseline, when the smoothing strength $\alpha$ is a constant, the PGD$_{20}$ accuracy is $52.6 \%$ when the clean accuracy is tuned to be $90 \%$ (the corresponding $\alpha$ value is 1.763).
The above results demonstrate that $\cCE = 0$, $\cBCE = 1.5$, and $\cprod = 0.2$ works the best.

\begin{table}[!tb]
\centering
\vspace{-2mm}
\caption{The PGD$_{20}$ accuracy on CIFAR-10 with various loss hyperparameter settings. The setting is the same as in \cref{tab:cifar10}, and we consider both attack and defense in Setting B.}
\label{tab:loss_abla}
\vspace{-.8mm}
\begin{small}
\begin{tabular}{l!{\vrule width 2pt}c|c|c|c}
	\toprule
					& $\cCE = 0$	& $\cCE = 0.5$	& $\cCE = 1$		& $\cCE = 1.5$	\\
					& $\cBCE = 1.5$ & $\cBCE = 1$	& $\cBCE = 0.5$ & $\cBCE = 0$	\\
	\midrule
	$\cprod = 0$	& $54.5 \%$		& $52.8 \%$		& $53.8 \%$		& $54.4 \%$		\\
	$\cprod = 0.1$	& $54.3 \%$		& $54.1 \%$		& $54.0 \%$		& $54.1 \%$		\\
	$\cprod = 0.2$	& $55.1 \%$		& $54.2 \%$		& $54.3 \%$		& $53.9 \%$		\\
	\bottomrule
\end{tabular}
\end{small}
\end{table}

Our results also show that a small positive $\cprod$ is generally beneficial.
This makes sense because the CE loss is low for a particular input if both $g (\cdot)$ and $h (\cdot)$ correctly predict its class.
Thus, the smoothing strength should not matter for such input, and therefore the BCE loss is weighted by a small number.
Compared with using only the BCE loss, the product term of the CE and the BCE components is lenient on inputs correctly classified by the mixed model $\halphatheta (\cdot)$, while assigning more weight to the data that are incorrectly predicted.

\begin{table}
\centering
\vspace{-.5mm}
\caption{Ablation study on the mixing network's Sigmoid activation scaling factor.}
\label{tab:scale_abla}
\vspace{-.8mm}
\begin{small}
\begin{tabular}{l!{\vrule width 2pt}c|c|c|c}
	\toprule
	Scale  				& $0.5$		& $1$		& $2$		& $4$		\\
	\midrule
	PGD$_{20}$ Accuracy & $55.1 \%$	& $55.5 \%$	& $55.7 \%$	& $55.6 \%$	\\
	\bottomrule
\end{tabular}
\end{small}
\end{table}

Recall that the output range of $\alpha_\theta (\cdot)$ is $[0, 1]$, which is enforced by appending a Sigmoid output activation function.
In addition to shifting, one can arbitrarily scale the Sigmoid activation's input.
By performing this scaling, we effectively calibrate the confidence of the mixing network.
In \cref{tab:loss_abla}, this scaling is set to the same constant for all settings.
In \cref{tab:scale_abla}, we select the best loss parameter and analyze the validation-time Sigmoid scaling.
Again, we shift the Sigmoid input so that the clean accuracy is $90 \pm 0.02 \%$.
While a larger scale benefits the performance on clean/attacked examples that are confidently recognized by the mixing network, an excessively large scale makes $\halphatheta (\cdot)$ less stable under attack.
\cref{tab:scale_abla} shows that applying a scaling factor of $2$ yields the best result for the given experiment setting.

\subsection{Estimating the Local Lipschitz Constant for Practical Neural Networks} \label{sec:est_lip}

In this section, we demonstrate the practicality of \cref{thm:certified_radius} by showing that it can work with a relaxed local Lipschitz counterpart of \cref{as:lipschitz}, which can be estimated for practical differentiable models.

First, note that the proof \cref{thm:certified_radius} does not require global Lipschitzness, and local Lipschitzness will suffice.
Since the local Lipschitz constant of an empirically robust (AT, TRADES, etc.) neural classifier can be much smaller than the global Lipschitz constant,  \cref{thm:certified_radius} is less restrictive in practice.
Moreover, it is not necessary for the model output to be similar between an arbitrary pair of inputs within the $\epsilon$ ball.
Instead, \cref{thm:certified_radius} only requires the model output to not change too much with respect to the nominal unperturbed input.
Furthermore, \cref{thm:certified_radius} only requires single-sided Lipschitzness.
Namely, we only need to make sure that the predicted class probability does not decrease too much compared with the nominal input, and whether this probably becomes even higher than the nominal input will not affect robustness.
The opposite is true for the non-predicted classes.

Specifically, suppose that for an arbitrary input $x$ and an $\ell_p$ attack radius $\epsilon$, the following two conditions hold with respect to the local Lipschitz constant $\lip_p^x$:
\begin{itemize}[leftmargin=6mm]
	\item $\sigma \circ h_y (x) - \sigma \circ h_y (x + \delta) \le \epsilon \cdot \lip_p^x (\sigma \circ h_y)$ and $\sigma \circ h_i (x + \delta) - \sigma \circ h_i (x) \le \epsilon \cdot \lip_p^x (\sigma \circ h_i)$ for all $i \neq y$ and all perturbations $\delta$ such that $\norm{\delta}_p \leq \epsilon$;
	\item The robust radius $\rlippalpha (x)$ as defined in \cref{eq:lip_cert_rad} but use the local Lipschitz constant $\lip_p^x$ as a surrogate to the global constant $\lip_p$, is not smaller than $\epsilon$.
\end{itemize}
Then, if the robust base classifier is correct at the nominal point $x$, then the mixed classifier is robust at $x$ within the radius $\epsilon$.
The proof of this statement follows the proof of \Cref{thm:certified_radius}.

Moreover, the literature \citep[Eq.(3)]{Yang20} has shown that the local Lipschitz constant of a given differentiable classifier can be easily estimated using a PGD-like algorithm.
The work \citep{Yang20} also showed that many existing empirically robust models, including those trained with AT or TRADES, are in fact locally Lipschitz.
Note that \citep{Yang20} evaluates the local Lipschitz constants of the logits, whereas we analyze the probabilities, whose Lipschitz constants are much smaller.

Here, we modify the PGD-based local Lipschitzness estimation for the relaxed requirement of \cref{thm:certified_radius} listed above.
Specifically, we estimate the local Lipschitz constant within an $\epsilon$-ball around an arbitrary input $x$ by using the PGD algorithm to solve the problem
\begin{equation} \label{eq:est_lipschitz}
	\widehat{\lip_p^x} (\sigma \circ h) \coloneqq \frac{1}{c \cdot \epsilon} 
	\bigg( \max_{\norm{\delta}_p \leq \epsilon} \big( \sigma \circ h_y (x + \delta) - \sigma \circ h_y (x) \big) - 
	\sum_{i \neq y} \max_{\norm{\delta}_p \leq \epsilon} \big( \sigma \circ h_i (x + \delta) - \sigma \circ h_i (x) \big) \bigg),
	\\ \vspace{-2mm}
\end{equation}
where $\widehat{\lip_p^x} (\sigma \circ h)$ is the estimated local Lipschitzness of $\sigma \circ h (\cdot)$ averaged among all classes, and $c$ is the number of classes as defined in \Cref{sec:notations}.
Unlike in \citep{Yang20}, we decouple the classes by maximizing each class's probability deviation separately, providing a more conservative and insightful estimation.

We use the default TRADES WideResNet-34-10 model as an example to demonstrate robust neural networks' non-trivial Lipschitzness.
When using the PGD$_{20}$ algorithm to solve \cref{eq:est_lipschitz}, the estimated Lipschitz constant $\widehat{\lip_p^x} (\sigma \circ h)$ is 3.986 averaged among all test data within the $\ell_\infty$ ball with radius $\frac{8}{255}$.
Note that this number is normalized with $\epsilon$, which is a small number.
Intuitively, this Lipschitz constant implies that the probability of a class changes for merely at most $0.125$ within this $\ell_\infty$ attack budget on average.
Therefore, the local Lipschitz constant, which is what \Cref{thm:certified_radius} relies on, is not large for robust deep neural networks.
	
Since the relaxed Lipschitz constant can be estimated for differentiable classifiers and is not excessively large for robust models, the certified bound is not small.
Hence, \cref{thm:certified_radius} provides important theoretical insights into the empirical robustness of the mixed classifier.

\section{Experiment Implementation Details}

\subsection{Implementation of the Mixing Network in Experiments} \label{sec:mixing_arch_rn}

Since our formulation is agnostic to base classifier architectures, \Cref{fig:mixing_arch} in the main text presents the design of the mixing network in the context of general standard and robust classifiers.
In the experiments presented in \Cref{sec:ada_exp}, both $g (\cdot)$ and $h (\cdot)$ are based on ResNet variants, which share the general structure of four main blocks, resulting in \Cref{fig:mixing_arch_rn} as the overall structure of the mixed classifier.
Following \citep{Metzen17}, we consider the initial Conv2D layer and the first ResNet block as the upstream layers.
The embeddings extracted by the first Conv2D layers in $g (\cdot)$ and $h (\cdot)$ are concatenated before being provided to the mixing network $\alpha_\theta (\cdot)$.
We further select the second ResNet block as the middle layers.
For this layer, in addition to concatenating the embeddings from $g (\cdot)$ and $h (\cdot)$, we also attach a linear transformation layer (Conv1x1) to match the dimensions, reduce the number of features, and improve efficiency.

\begin{figure}[!tb]
    \centering
    \includegraphics[width=\textwidth]{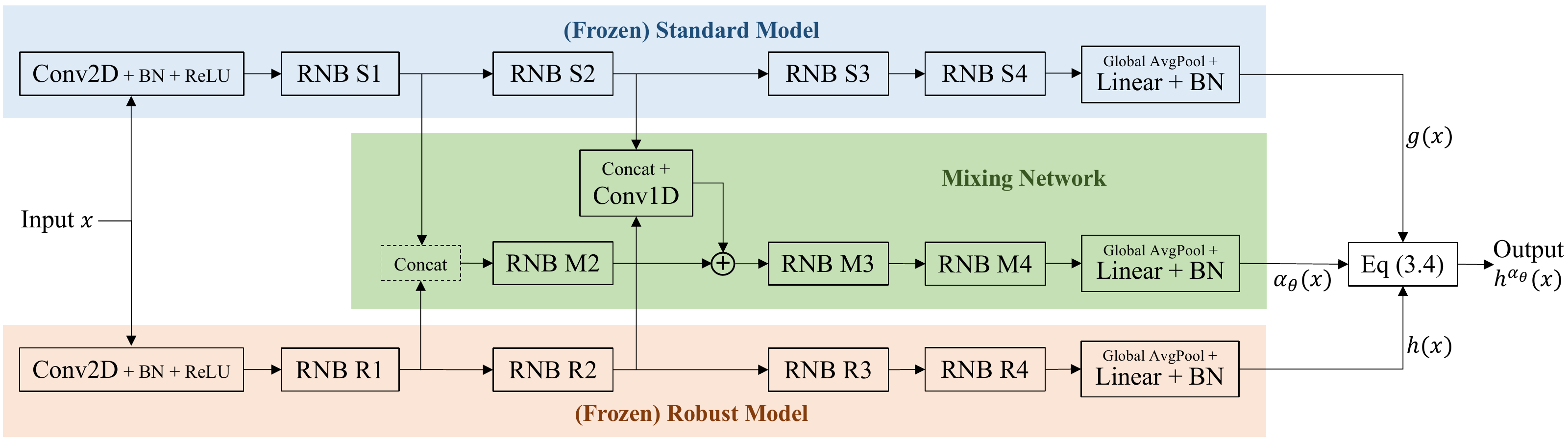} \vspace{-5mm}
    \caption{The architecture of the mixed classifier introduced in \Cref{sec:ada_smo} when applied to a pair of ResNet base models.}
    \label{fig:mixing_arch_rn}
    \vspace{-2.5mm}
\end{figure}

As mentioned in \Cref{sec:alpha(x)}, the range of $\alpha_\theta (\cdot)$ can be constrained to be within $(\alphamin, \alphamax) \subseteq [0, 1]$ if certified robustness is desired.
We empirically observe that setting $\alphamax - \alphamin$ to be $0.04$ works well for CIFAR-10, whereas $0.1$ or $0.15$ works well for CIFAR-100.
This observation coincides with \Cref{fig:STD+ROB}, which shows that a slight increase in $\alpha$ can greatly enhance the robustness at the most sensitive region.
The value of $\alphamin$ can then be determined by enforcing a desired level of either clean validation accuracy or robustness.
Following this guideline, we set the ranges of $\alpha_\theta (\cdot)$ to be $(0.96, 1)$ for the CIFAR-10 model discussed in \cref{tab:compare_cifar}. The range is $(0.84, 0.99)$ and $(0.815, 0.915)$ respectively for the two CIFAR-100 models in \cref{tab:compare_cifar}.
Note that this range is only applied during validation. When training $\alpha_\theta (\cdot)$, we use the full $(0, 1)$ range for its output, so that the training-time adversary can generate strong and diverse attacks that fully exploit $\alpha_\theta (\cdot)$, which is crucial for securing the robustness of the mixing network.
We also observe that exponential moving average (EMA) improves the training stability of the mixing network, and applies an EMA decay rate of 0.8 for the model in \cref{tab:compare_cifar}.
Furthermore, scaling the outputs of $h (\cdot)$ by a number between 0 and 1 and scaling the outputs of $g (\cdot)$ by a number greater than 1 can help with the overall accuracy-robustness trade-off.
This scale is set to 3 for the experiments in \cref{tab:compare_cifar}.

\subsection{AutoAttack for Calculating the Robust Confidence Gap} \label{sec:autoattack_margin}

The original AutoAttack implementation released with \citep{Croce20a} does not return perturbations that fail to change the model prediction.
To enable robustness margin calculation, we modify the code to always return the perturbation that achieves the smallest margin during the attack optimization.
Since the FAB and Square components of AutoAttack are slow and do not successfully attack additional images on top of the APGD components, we only consider the two APGD components for the purpose of margin calculation.

\end{document}